\crefname{figure}{Figure}{Figures}
\Crefname{figure}{Figure}{Figures}
\newtheorem{theorem}{Theorem}
\newtheorem{corollary}{Corollary}
\title{Can LLMs Reason Over Non-Text Modalities in a Training-Free Manner? A Case Study with In-Context Representation Learning}
\author{
Tianle Zhang$^{1}$\thanks{Equal contribution.},\;
Wanlong Fang$^{2,1}$\footnotemark[1],\;
Jonathan Woo$^{5,6}$\footnotemark[1],\;
Paridhi Latawa$^{7}$, \\
\textbf{Deepak A. Subramanian}$^{7}$,\; 
\textbf{Alvin Chan}$^{1,3,4}$\thanks{Corresponding author. Email: guoweialvin.chan@ntu.edu.sg}.
\\[0.5em]
$^{1}$College of Computing and Data Science, Nanyang Technological University\and
$^{2}$AI-X, Interdisciplinary Graduate Programme, Nanyang Technological University\and
$^{3}$Lee Kong Chian School of Medicine, Nanyang Technological University\and 
$^{4}$Centre of AI in Medicine (C-AIM), Nanyang Technological University\and 
$^{5}$University of Toronto,
$^{6}$Brigham and Women’s Hospital, Harvard Medical School\and 
$^{7}$Massachusetts Institute of Technology
}
\begin{document}

\maketitle

\vspace{-15pt}
\begin{abstract}
\vspace{-5pt}
The remarkable performance of Large Language Models (LLMs) can be enhanced with test-time computation, which relies on external tools and even other deep learning models.
However, existing approaches for integrating non-text modality representations into LLMs typically require additional costly supervised training, restricting on-the-fly adaptation to new domains and modalities. 
In this work, we explore the feasibility of integrating representations from non-text foundational models (FMs) into text-based LLMs in a training-free manner. We propose In-Context Representation Learning (ICRL) as a proof-of-concept to allow LLMs to adaptively utilize non-text modality representations with few-shot learning. 
Unlike traditional in-context learning, which incorporates text-label pairs, ICRL replaces text inputs with FM representations, enabling the LLM to perform multi-modal inference without fine-tuning. 
We evaluate ICRL on a suite of tasks in the molecular domain, investigating three core research questions: (i) how to map FM representations into LLMs in a training-free manner, (ii) what factors influence ICRL performance, and (iii) what mechanisms underlie the effectiveness of ICRL. 
To the best of our knowledge, ICRL is the first training-free framework for integrating non-text modality representations into text-based LLMs, presenting a promising direction for adaptable, multi-modal generalization.\footnote{Our code is available at \url{https://github.com/ztlmememe/LLMxFM_ICRL}.}
\vspace{-5pt}
\end{abstract}
\section{Introduction} \label{introduction}
% Large Language Models (LLMs) have demonstrated remarkable versatility in leveraging test-time computation~\cite{wei2023chainofthoughtpromptingelicitsreasoning,shen2024learningdecodecollaborativelymultiple}, allowing them to dynamically adapt to new tasks and perform complex reasoning without requiring additional training. 
% A key extension of this capability is their ability to integrate external tools, including other deep learning models \cite{bai2023qwenvlversatilevisionlanguagemodel,dai2023instructblipgeneralpurposevisionlanguagemodels}, to enhance their problem-solving potential. Existing frameworks, such as multi-agent LLMs \cite{shen2023hugginggptsolvingaitasks,kim2024mdagents}, enable LLMs to incorporate the final predictions of other models during inference, expanding their applicability to complex, multi-step tasks~\cite{shen2023hugginggptsolvingaitasks}. 
% However, the LLM in these approaches only utilizes the final outputs of external models, restricting the depth of interaction between LLMs and non-text modalities in test-time.
Large Language Models (LLMs) have demonstrated remarkable versatility in leveraging test-time computation~\cite{wei2023chainofthoughtpromptingelicitsreasoning,shen2024learningdecodecollaborativelymultiple}, allowing them to dynamically adapt to new tasks and perform complex reasoning without additional training. A key extension of this capability is their ability to integrate external tools, including other deep learning models~\cite{bai2023qwenvlversatilevisionlanguagemodel,dai2023instructblipgeneralpurposevisionlanguagemodels}, to enhance their problem-solving potential. Existing multi-agent frameworks allow LLMs to incorporate predictions from external models during inference, expanding their applicability to complex, multi-step tasks~\cite{shen2023hugginggptsolvingaitasks,kim2024mdagents}.
Despite the advances, these methods typically use only the final outputs of external models, limiting the effective use of their internal knowledge~\cite{kim2024mdagentsadaptivecollaborationllms,shen2023hugginggptsolvingaitasks}.
To address this, recent efforts have explored enabling LLMs to leverage intermediate representations from external models. A promising direction involves allowing text-based LLMs to process non-text modalities (e.g., images~\cite{liu2023visualinstructiontuning}) by incorporating representations from modality-specific foundation models (FMs) (Fig.~\ref{fig:pipline}(b)).
However, these approaches typically require additional supervised training—either for modality-specific projection layers or even for the LLM itself—to enable a text-based LLM to incorporate a new modality. This process is typically computationally intensive and requires specialized paired dataset between text and the target new modality. 
This raises a fundamental question: \textit{Can a text-based LLM leverage representations from other modality-specific foundational models during inference, without such training?}

\begin{figure}[!ht] % [t] 表示图片固定在页面顶部
    \centering
    \includegraphics[width=1\textwidth]{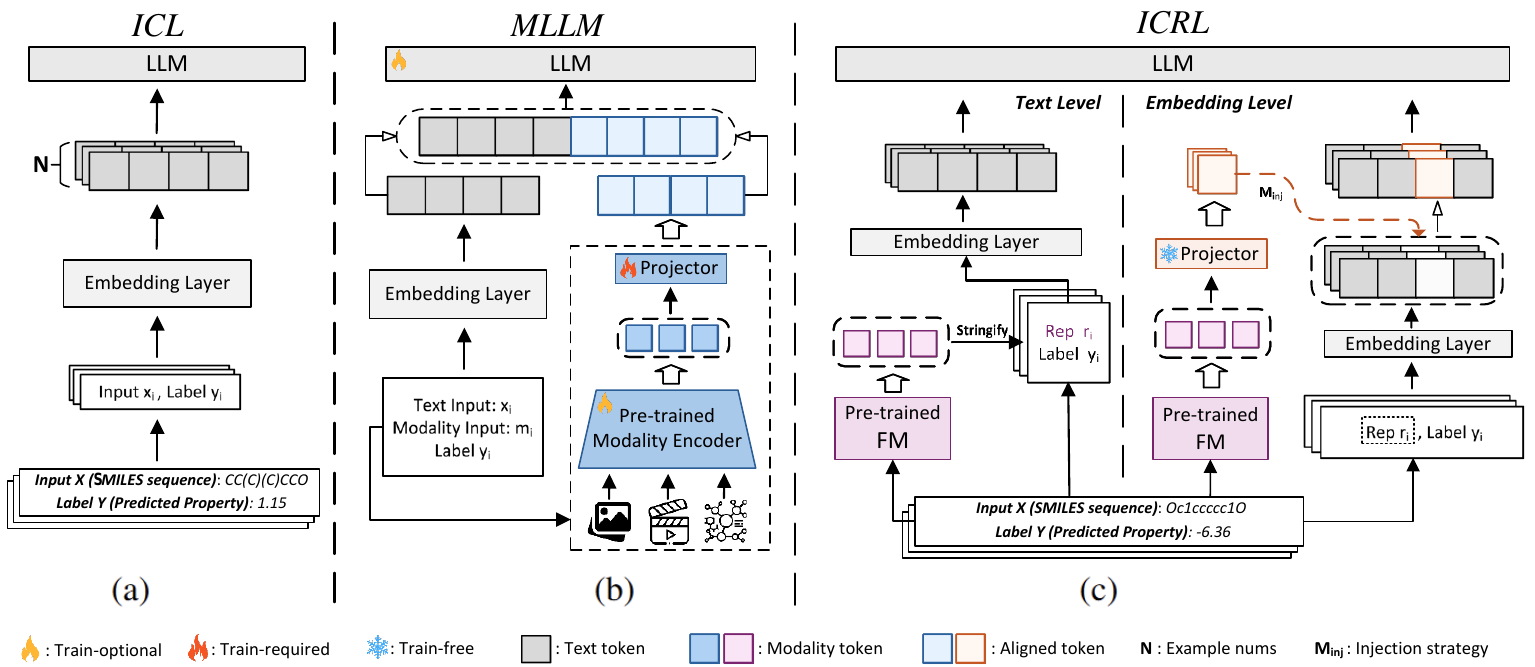} % 图片宽度与页面宽度一致
    \vspace{-7pt}
    \caption{Comparison of (a) ICL, (b) multi-modal LLM, and (c) in-context representation learning.} % 添加图片标题 
    % \caption{ICRL Framework} % 添加图片标题 
    \label{fig:pipline} % 设置引用标签
    \vspace{-5pt}
\end{figure}

% LLMs have demonstrated in-context learning (ICL) capabilities, which allow them to learn new tasks during inference with few-shot examples~\cite{wei2022emergentabilitieslargelanguage, dong2024surveyincontextlearning}. 
% A training-free approach that allows LLMs to use latent representations from non-text FMs could eliminate the need for additional fine-tuning while enabling flexible, on-the-fly adaptation to new domains and modalities.
In-context learning (ICL), a core capability of large language models (LLMs), enables task adaptation at inference time through few-shot examples~\cite{wei2022emergentabilitieslargelanguage,dong2024surveyincontextlearning}, offering a promising path toward flexible, training-free generalization.  
In this study, we explore how ICL can be extended to process representations from modality-specific foundation models (FMs) during inference.  
To this end, we propose \textbf{I}n-\textbf{C}ontext \textbf{R}epresentation \textbf{L}earning (ICRL)—a proof-of-concept designed to enhance LLM adaptability and enable a single model to generalize efficiently across diverse modalities (Fig.~\ref{fig:pipline}(c)).
In contrast to ICL where text-label pairs (Fig.~\ref{fig:pipline}(a)) are included in the text prompt for the LLM to conduct few-shot prediction, 
ICRL replaces the sample text (e.g. SMILES of a molecule\footnote{To ensure a fair comparison, we use molecular datasets where SMILES strings encode structural information in a textual format naturally compatible with LLMs, avoiding the need for architectural changes or fine-tuning. This allows us to study non-text modality integration into ICL pipelines in a clean, training-free setting. Related modalities (e.g., protein sequences) and preliminary results on vision and speech are discussed in Sec.~\ref{discussion}.}) with the representation from an external FM (Fig.~\ref{fig:pipline}(c)). 
Note that our primary objective is not to outperform ICL but to investigate the feasibility of adaptively integrating non-text FM representations into a text-based LLM in a \emph{training-free} manner.
We study ICRL and its mechanisms on a battery of molecular tasks and organize our findings around 3 core research questions as follows. 

\noindent\fbox{\begin{minipage}{0.98\textwidth}
\textbf{RQ1 (Sec.~\ref{sec:ICRL} \&~\ref{sec:ICRL_Result}): How to Map FM Representations into an LLM in a Training-free Manner?} 
\end{minipage}}

We first study ICRL designs that integrate the FM representation vectors directly in the text prompt as few-shot examples. To fit these high-dimensional vectors as text within LLM's limited context window, we employ dimensionality reduction (i.e., PCA) and find this simple strategy surprisingly effective. We also investigate using a projection module to interface between the FM representations and a text-based LLM's embedding layer. 
To enable inference-time integration of the non-text modality, we also examine several training-free projection methods that map FM representations into the LLM's embedding space without fine-tuning and compare their empirical performances.
% To enable inference-time integration of the non-text modality and avoid the need for fine-tuning as in existing multi-modal LLMs, we explore several training-free projection approaches that map the FM representations into the LLM's representation space and compare their empirical performances. 
Notably, our analysis indicates that an approach based on optimal transport theory, which aligns the distribution of FM representations with that of LLM embeddings, yields promising results.

\noindent\fbox{\begin{minipage}{0.98\textwidth}
\textbf{RQ2 (Sec.~\ref{sec:Result}): What Factors affect ICRL Performance?} 
\end{minipage}}

We found that key parameters affecting standard ICL, such as the number of few-shot examples, similarly influence ICRL. Additionally, we analyze various parameters in ICRL to gain deeper insights into their impact on performance.

\noindent\fbox{\begin{minipage}{0.98\textwidth}
\textbf{RQ3 (Sec.~\ref{sec:RQ3}): What are the Mechanisms behind ICRL?} 
\end{minipage}}

% Our findings reveal that a higher similarity between ICRL-projected FM representations and their corresponding text embeddings correlates with improved performance. 
% This suggests that projected representations that deviate significantly from the LLM's text embedding distribution can degrade performance. 
Our findings show that ICRL performance improves when projected FM representations closely resemble their corresponding text embeddings, while larger deviations can harm performance.
Additionally, we observed an inverse correlation between the similarity of ICRL-projected representations across different few-shot examples and ICRL performance, implying that excessive uniformity among projected representations may hinder effectiveness. Lastly, we found that when traditional ICL is present, the mode of ICRL representations shifts, leading them to be treated as pause tokens~\cite{goyal2024thinkspeaktraininglanguage}.

The main contributions of this paper are as follows:
(i) To the best of our knowledge, the proposed in-context representation learning is the first \emph{training-free approach} to integrate non-text modalities into a text-based LLM.  
(ii) We explore various design choices and analyze their impact on ICRL performance across a range of molecular domain tasks.  
(iii) We present mechanistic insights behind ICRL to show how the distribution of projected representation affects performance.

% \input{2.Background}
% \section{In-Context Representation Learning} \label{ICRL}

% \newpage

% \section{Approaches to Inject Foundation Model Representations into Large Language Models}\label{sec:ICRL}
%
\vspace{-3pt}
\section{How to Map FM Representations into an LLM in a Training-free Manner?}\label{sec:ICRL}
In this section, we first discuss the proposed ICRL framework, which encompasses two different locations to integrate FM representations into a text-based LLM: (1) introduction of the representation as a \emph{string} in the prompt text and (2) injection of the FM features into the LLM \emph{embedding} spaces. 
Then, we introduce several methods that constitute the proposed ICRL framework; detailed algorithmic descriptions of the overall pipeline and each injection strategy are provided in Appendix~\ref{sec:algs}.

\subsection{Preliminaries}
\textbf{In-Context Learning \& LLM.}
Consider the general ICL framework~\cite{dong2024surveyincontextlearning}: given a set of text inputs \( \mathbf{X} = [x_1, \ldots, x_n] \), i.e., molecular SMILES sequences, and the task label \( y \in \mathcal{Y} \), a pre-trained LLM \( \psi \) predicts by selecting the candidate with the highest score based on a demonstration set \(E\). 
This set comprises the instruction \(I\) and \(k\) demonstration examples:
\(
E = \{ I, (x_1, y_1), \ldots, (x_k, y_k) \},
\)
where each $(x_i, y_i)$ represents a few-shot learning exemplar.
The final prediction \( \hat{y} \) is computed using a scoring function \( f \) over the entire input sequence:
\begin{equation}
\hat{y} = \arg\max_{y_j \in \mathcal{Y}} f_{\psi}(y_j, E, \mathbf{X}).
\end{equation}
where $\mathbf{X}$ represents a set of text examples.
Within this framework, the LLM \(\psi\) implements the above formulation through sequential token processing. 
Specifically, for each input sequence \(x_i\), the embedding layer of LLM \(\psi_e\) converts it into a sequence of token embeddings:
\(
g_i = \psi_e(x_i) \in \mathbb{R}^{t_i \times d_{{LLM}}}
\),
where \( t_i \) is the number of tokens corresponding to \( x_i \), and \( d_{{LLM}} \) is the dimensionality of each token embedding. For convenience, the distribution of LLM outputs is denoted as:
\(
\mathcal{D}_{{LLM}} \in \mathbb{R}^{T \times d_{{LLM}}},
\)
where \( T = \sum_{i=1}^n t_i \) is the total number of tokens across all inputs.

\textbf{Foundation Model.}
% Complementing the LLM architecture, 
We define the FM as a mapping:
\(
\phi: \mathbf{X} \rightarrow \mathbb{R}^{N \times d_{{FM}}}
\), and the extracted representations are structured as:  
\(
\mathbf{H} \in \mathbb{R}^{N \times M \times d_{{FM}}}
\) 
, where \( M \) is the number of extracted feature vectors, and \( d_{{FM}} \) is the corresponding dimensionality. 
In this article, we focus on the sentence-level classification feature, i.e., \( M = 1 \), and denote the distribution of the extracted representations as \( \mathcal{D}_{{FM}} \in \mathbb{R}^{N \times d_{{FM}}} \). Further details on feature extraction schemes can be found in Appendix~\ref{app_exp1}.

\textbf{Projector.}
We employ a simple multilayer linear model (MLM) to align the FM and LLM embedding spaces, defined as $P: \mathbb{R}^{d_{{FM}}} \rightarrow \mathbb{R}^{d_{{LLM}}}$. The projected embedding distribution is denoted as $\mathcal{D}_{{Proj}}$.

% Multilayer Linear Model, MLM

\vspace{-5pt}
\subsection{In-Context Representation Learning}
Unlike standard ICL, which constructs examples \( (x_i, y_i) \) only from textual inputs, and Multi-modal Large Language Model (MLLM) methods requiring supervised training for modality alignment, ICRL directly injects adjusted FM representations into LLMs without training, i.e., \( (r_i, y_i) \), bypassing raw input \( x \). We classify our methods into two injection levels and discuss their design and theoretical foundations in the following subsections.

% \vspace{-5pt}
\subsubsection{Text-Level Injection}

\textbf{PCA.}
A straightforward approach to integrating FM representations into LLMs is to embed the high-dimensional vectors directly into the prompt as strings. 
However, these high-dimensional embeddings often surpass the context window limitations of most models and make them incompatible with ICL approaches.
To address this challenge, we implement PCA for dimensionality reduction. 
This transformation maps the original embeddings to a lower-dimensional space: 
$\mathcal{D}_{{PCA}}\in \mathbb{R}^{N \times d_{{Reduced}}}$, where $d_{{Reduced}} \ll d_{{FM}}$.
Let $\mathbf{W}_{{PCA}} \in \mathbb{R}^{d_{{FM}} \times d_{{Reduced}}}$, the reduced-dimensional embeddings $\mathrm{H}_{pca}=\mathrm{H}\times \mathrm{W}_{PCA}$ are subsequently converted into PCA strings $\mathrm{S}_{pca}$, enabling their seamless integration into the prompt while retaining the most critical features of the original representations.

% \vspace{-5pt}
\subsubsection{Embedding-Level Injection}

While dimensionality reduction enables the FM representations to fit within the text prompt, the vector strings still occupy a substantial portion of the prompt and incur information loss, thereby limiting their effectiveness.
To address these limitations and enhance the LLM's understanding of these representations, we propose several embedding-level injection methods that directly inject features into the LLM’s text embedding layer:

\textbf{Zero-Pad.}
The simplest approach applies zero padding to FM representations to match the dimensionality of the LLM embedding space, offering a straightforward solution that preserves the original representation components. Then, we apply a normalization step (as detailed in Appendix~\ref{app:set1}) to adjust the mean and variance of the padded representation to match the average mean and variance of the LLM's embeddings before feeding it into the LLM. This prevents the generation of irrelevant or nonsensical outputs due to input embeddings with out-of-distribution statistics.

% \subsubsection{Projection-based methods}
\textbf{Random Projection.}
% matrix multiplication
% Drawing inspiration from previous work~\cite{li2024sensorllmaligninglargelanguage}, 
We employ a randomly initialized MLM as a projector module 
% to transform FM representations into the LLM embedding space, 
to address the dimension mismatch problem.
In contrast to previous studies~\cite{li2024sensorllmaligninglargelanguage}, this projector is \textbf{untrained} and devoid of activation functions, which is mathematically equivalent to simple matrix multiplication, making it a lightweight and efficient solution.
Then the projected features are directly concatenated with the embeddings of the rest of the example.

\textbf{Optimal Transport Alignment.} 
Directly using random projectors may result in a distribution mismatch between the LLM's embeddings and the mapped FM representations. 
As a mathematical framework designed to align two distributions, optimal transport (OT) provides a viable solution to resolve this mismatch~\cite{torres2021surveyoptimaltransportmachine}.
In this approach, we extract the tokens corresponding to the original input \(x_i\) or the representation string obtained by the PCA model.
These token embeddings serve as the target distribution in OT to adjust the projected results $\mathbf{H}_{proj}=P(\mathbf{H})$.
The alignment process is formalized as follows:  
Let $\mathcal{D}_{proj} \in \mathbb{R}^{N \times d_{{LLM}}}$ and $\mathcal{D}_{{tar}} \in \mathbb{R}^{N' \times d_{{LLM}}}$ represent the source and target distributions, respectively, where $N'$ denote the number of tokens in the target distribution.
The final objective can be formulated as:
\begin{equation}
% \vspace{-5pt}
\min_{\gamma \in \Pi(\mu,\nu)} \int_{\mathcal{D}_{{proj}} \times \mathcal{D}_{{tar}}} c({u}, {v}) \, 
 \partial \gamma({u}, {v}),
\end{equation}
where \( \gamma \in \Pi(\mu,\nu)\) is the transport plan that defines how to map the points in the source distribution to the target one, \(\mu\) and \(\nu\) are the marginal distributions over \(\mathcal{D}_{\mathrm{proj}}\) and \(\mathcal{D}_{\mathrm{tar}}\), respectively.  
\(c(u, v)\) is the function that measures the cost of moving an element \( u \) from \( \mathcal{D}_{{proj}} \) to \( v \) in \( \mathcal{D}_{{tar}} \).

For practical implementation, we align the mean and variance of each dimension of \( \mathcal{D}_{{proj}} \) to match the corresponding dimension of \( \mathcal{D}_{{tar}} \),
i.e., for each dimension \( j \), we have:
\begin{equation}
{shift}_j = \bar{\mathbf{v}}_j - \bar{\mathbf{u}}_j \quad \text{and} \quad {scale}_j = \frac{\sigma_{t,j}}{\sigma_{p,j}},
\end{equation}
where \( \bar{\mathbf{u}}_j \), \( \bar{\mathbf{v}}_j \), \( \sigma_{p,j} \) and \( \sigma_{t,j} \)
represent the means and standard deviations of the \( j \)-th dimension of \( \mathcal{D}_{{proj}} \) and \( \mathcal{D}_{{tar}} \), respectively.
The final aligned embeddings \( \mathbf{H}_{aligned} \) are obtained by applying the following transformation:
\begin{equation}
% \mathbf{H}_{aligned} = 
OT(\mathcal{D}_{{proj}}, \mathcal{D}_{{tar}}) =
{scale} \cdot \mathbf{H}_{proj} + {shift}.
\end{equation}
Given the differing target distributions, we propose two OT-based alignment methods:

\textbf{OT - Embed.}  
% To enhance the interpretability of the representation for the LLM, a suitable alignment target is the LLM embedding of input text features (e.g., SMILES), i.e., for each \( x_i \), the object is \( \psi_e(x_i) \). The adjusted embeddings can then be represented as: 
To enhance the interpretability of the representation for the LLM, a suitable alignment target is the LLM embedding of input text features (e.g., SMILES). Specifically, for each input \( x_i \), the target embedding \( \psi_e(x_i) \) is computed as the mean of its token-level embeddings. The adjusted embeddings can then be represented as:
% \begin{equation}
\(
\mathbf{H}_{\mathrm{aligned}} = {OT}(\mathbf{H}_{\mathrm{proj}}, \psi_e(\mathbf{X})).
% \end{equation}
\)

\textbf{OT - PCA.} 
Another OT variant uses the embeddings of the stringified, dimensionally reduced FM representations \( \mathrm{S}_{\text{PCA}} \) as the target distribution, as this approach more closely captures the token embeddings associated with FM representations. This can be expressed as:
% \begin{equation}
\(
\mathbf{H}_{\mathrm{aligned}} = {OT}(\mathbf{H}_{\mathrm{proj}}, \psi_e(\mathbf{S}_{\mathrm{pca}})).
% \end{equation}
\)
Note that the computation of OT $shift$ and $scale$ parameters needs to be performed only once and takes negligible time, the subsequent use only requires quick adjustments.
We describe the OT method in more detail in Alg.~\ref{alg:ot_alignment}.

\textbf{Random Noise.}
% To verify whether the model is learning from the FM representation, we design a comparison scheme in which the original input \( x_i \) is fed into a hash function instead of the FM. 
% The output hash value is then used as a random seed to initialize a tensor that conforms to \( d_{{LLM}} \), ensuring that the final injected representation is independent of the FM, while maintaining a reliable one-to-one correspondence between inputs and their transformed representations.
To verify whether the model is learning from the FM representation, 
we conduct an ablation study by replacing informative FM features with random noise.
% we include an ablation scheme in which the informative FM features are replaced with random noise. 
% Specifically, for each input \( x_i \), a unique random seed is used to generate a noise tensor that conforms to \( d_{{LLM}} \). This ensures the injected representation is independent of the FM while maintaining a consistent one-to-one mapping between inputs and their representations.

\textbf{Theoretical support for linear projector.}
% {Theoretical Understanding}
\label{sec:theoretical_understanding}
% For projection-based methods, we employ a simple MLM as our projector. 
% This design aims to preserve the knowledge already encoded in the FM representations: nonlinear activations, such as ReLU, typically clip negative values and can distort the original geometry of embeddings. 
% By relying on this linear projector, we retain the rich pre-training knowledge while allowing for flexible dimensional transformations.
In designing the projector, we initially experimented with a commonly used two-layer MLP. However, the mapped representations show greater similarity, suggesting a noticeable loss of information~\cite{liang2022mindgapunderstandingmodality}. 
% As our objective is not to train the projector to learn complex transformations enabled by activation functions, 
Here, we conduct a theoretical analysis to examine how nonlinear activations in the projector layers can affect the original embedding geometry. 
% Our findings demonstrate that a simple multilayer linear model (MLM) enables flexible dimensional transformations while effectively preserving the rich pre-trained knowledge.

Following~\cite{liang2022mindgapunderstandingmodality}, given a linear layer with random initialization.
Let $\textbf{W} \in \mathbb{R}^{d \times d}$ be a random square weight matrix such that each entry $\textbf{W}_{k,l}$ is an i.i.d. Gaussian random variable, i.e., 
\(
  \textbf{W}_{k,l} \,\sim\, \mathcal{N}\!\bigl(0,\tfrac{1}{d}\bigr).
\)
We consider affine transformations of the form \(\textbf{W}\textbf{x} + \textbf{b}\), where \(\textbf{b} \in \mathbb{R}^d\) is a random bias vector. 
Our objective is to demonstrate that such random linear mappings preserve the norms and angles of high-dimensional vectors, thereby retaining the underlying variability of the original embeddings.

\begin{theorem}[Concentration of Norm Under Random Linear Projector]\label{thm:normconcentration}
Let \(\textbf{u} \in \mathbb{R}^d\) be a fixed vector, and that \(\textbf{W} \in \mathbb{R}^{d \times d}\) has i.i.d. entries \(\textbf{W}_{k,l} \sim \mathcal{N}(0,1/d)\), independent of a bias vector \(\textbf{b} \in \mathbb{R}^d\).   
Then for any \(\delta_1 \in (0,1)\), with probability at least \(1 - \delta_1\), there exists \(\epsilon_1 = O(\sqrt{\log(1/\delta_1)/d})\) such that:
\begin{equation}
  \Bigl|\|\textbf{W}\textbf{u}  + \textbf{b}\|^2 - \bigl(\|\textbf{b}\|^2 + \|\textbf{u} \|^2\bigr)\Bigr|
  \;\le\; 
  \epsilon_1\,\bigl(\|\textbf{b}\|^2 + \|\textbf{u}\|^2\bigr).
\end{equation}
\vspace{-24pt}
\end{theorem}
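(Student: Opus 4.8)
The plan is to expand the squared norm and exploit the exact cancellation of the $\|\textbf{b}\|^2$ term, reducing the statement to the concentration of two independent pieces. Writing
\[
\|\textbf{W}\textbf{u} + \textbf{b}\|^2 = \|\textbf{W}\textbf{u}\|^2 + 2\langle \textbf{W}\textbf{u}, \textbf{b}\rangle + \|\textbf{b}\|^2,
\]
the quantity to control becomes
\[
\|\textbf{W}\textbf{u} + \textbf{b}\|^2 - \bigl(\|\textbf{b}\|^2 + \|\textbf{u}\|^2\bigr) = \underbrace{\bigl(\|\textbf{W}\textbf{u}\|^2 - \|\textbf{u}\|^2\bigr)}_{(\mathrm{I})} + \underbrace{2\langle \textbf{W}\textbf{u}, \textbf{b}\rangle}_{(\mathrm{II})}.
\]
First I would record the distributional fact that drives everything: since the rows of $\textbf{W}$ are independent and each entry is $\mathcal{N}(0,1/d)$, the vector $\textbf{z} := \textbf{W}\textbf{u}$ has independent coordinates with $z_k \sim \mathcal{N}(0, \|\textbf{u}\|^2/d)$.

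For term $(\mathrm{I})$, this gives $\|\textbf{z}\|^2 = \tfrac{\|\textbf{u}\|^2}{d}\sum_{k=1}^d \tilde z_k^2$ with $\tilde z_k \sim \mathcal{N}(0,1)$ i.i.d., i.e. a scaled chi-squared variable with mean exactly $\|\textbf{u}\|^2$. I would then invoke a standard chi-squared concentration bound (Laurent--Massart), namely $|\chi^2_d - d| \le 2\sqrt{d\,t} + 2t$ with probability at least $1 - 2e^{-t}$. Rescaling yields $|(\mathrm{I})| \le \|\textbf{u}\|^2\bigl(2\sqrt{t/d} + 2t/d\bigr)$; taking $t = \Theta(\log(1/\delta_1))$ and using $\|\textbf{u}\|^2 \le \|\textbf{u}\|^2 + \|\textbf{b}\|^2$ produces a deviation of the required form $O(\sqrt{\log(1/\delta_1)/d})\,(\|\textbf{u}\|^2 + \|\textbf{b}\|^2)$.

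For term $(\mathrm{II})$, I would condition on $\textbf{b}$ (legitimate by independence) and observe that $2\langle \textbf{z}, \textbf{b}\rangle$ is, conditionally, a single centered Gaussian with variance $4\|\textbf{b}\|^2\|\textbf{u}\|^2/d$. A one-dimensional Gaussian tail bound then controls $|(\mathrm{II})|$ by $\tfrac{2\|\textbf{u}\|\|\textbf{b}\|}{\sqrt d}\sqrt{2\log(2/\delta_1)}$ with high probability, and applying the elementary inequality $2\|\textbf{u}\|\|\textbf{b}\| \le \|\textbf{u}\|^2 + \|\textbf{b}\|^2$ converts this cross-norm into the target scaling. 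Since the resulting bound over $\textbf{W}$ holds for every fixed value of $\textbf{b}$, it holds unconditionally after integrating.

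The last step is to combine the two tail events by a union bound, absorb the numerical constants and the lower-order $t/d$ contribution from $(\mathrm{I})$ into $\epsilon_1$, and read off $\epsilon_1 = O(\sqrt{\log(1/\delta_1)/d})$. The main obstacle I anticipate is bookkeeping rather than conceptual: I must ensure that both $(\mathrm{I})$ and $(\mathrm{II})$ are expressed relative to the same quantity $\|\textbf{u}\|^2 + \|\textbf{b}\|^2$ (the AM--GM step for the cross term is precisely what makes this possible), and I must remain in the regime $\log(1/\delta_1) \lesssim d$ so that the quadratic $t/d$ term from the chi-squared bound is dominated by $\sqrt{t/d}$; outside this regime the clean $\sqrt{\log(1/\delta_1)/d}$ form degrades.
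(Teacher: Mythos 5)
Your proposal is correct, and it takes a genuinely different route from the paper. The paper folds the bias into each coordinate, writes $Z=\sum_k Y_k^2$ with $Y_k\sim\mathcal{N}(\textbf{b}_k,\|\textbf{u}\|^2/d)$, computes the exact variance $\mathrm{Var}(Z)=\bigl(2\|\textbf{u}\|^4+4\|\textbf{u}\|^2\|\textbf{b}\|^2\bigr)/d$, and applies Chebyshev's inequality; your two terms $(\mathrm{I})$ and $(\mathrm{II})$ are exactly the two contributions to that variance, so the decomposition is implicitly the same, but the tail-bound technology is not. The key difference is quantitative: Chebyshev yields a failure probability $\delta_1\le C_{u,b}/(d\,\epsilon_1^2)$, i.e.\ $\epsilon_1=O\bigl(1/\sqrt{d\,\delta_1}\bigr)$, which is \emph{polynomial} in $1/\delta_1$ and therefore does not actually deliver the $\epsilon_1=O\bigl(\sqrt{\log(1/\delta_1)/d}\bigr)$ rate asserted in the theorem statement. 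Your route --- Laurent--Massart for the scaled chi-squared piece, a one-dimensional Gaussian tail for the cross term after conditioning on $\textbf{b}$, AM--GM to homogenize the cross-norm, and a union bound --- is the argument that genuinely proves the logarithmic dependence as stated, at the price of relying on Gaussianity (rather than just second and fourth moments, which is what lets the paper's Remark extend to general isotropic sub-Gaussian $\textbf{W}$) and of the regime restriction $\log(1/\delta_1)\lesssim d$, which you correctly flag. In short: the paper's proof is more elementary but strictly weaker than its own theorem statement; yours matches the statement.
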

\begin{proof}[Proof Sketch]
Each coordinate $(\textbf{W}\textbf{u} + \textbf{b})_k$ is a sum of Gaussian variables with variance on the order of $\|\textbf{u}\|^2/d$, shifted by $\textbf{b}_k$. By applying classical concentration inequalities (e.g., Chebyshev’s), the squared norm $\|\textbf{W}\textbf{u} + \textbf{b}\|^2$ concentrates around its mean $\|\textbf{b}\|^2 + \|\textbf{u}\|^2$. 
We provide the full derivation and proof for all theorems in Appendix~\ref{app:proof_1} and~\ref{app:proof_2}.
\end{proof}

\begin{theorem}[Preservation of Cosine Similarity]\label{thm:cosinepreservation}
Let \(\textbf{u},\textbf{v} \in \mathbb{R}^d\) be any two fixed vectors, and \(\textbf{W} \in \mathbb{R}^{d\times d}\) have i.i.d.\ entries \(\textbf{W}_{k,l} \sim \mathcal{N}(0,1/d)\). 
Then for any \(\delta_2 \in (0,1)\), there exists a small \(\epsilon_2 = O(\sqrt{\log(1/\delta_2)/d})\) such that with high probability at least \(1 - \delta_2\), we have:
\begin{equation}
  \Bigl|\cos\bigl(\textbf{W} \textbf{u},\; \textbf{W} \textbf{v}\bigr) 
         \;-\; \cos(\textbf{u}, \textbf{v})\Bigr|
  \;\le\;
  \epsilon_2.
\end{equation}
\end{theorem}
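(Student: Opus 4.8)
The plan is to reduce the cosine-similarity statement to the norm-concentration result of Theorem~\ref{thm:normconcentration}. First I would exploit the scale-invariance of cosine similarity: since $\cos(\alpha\mathbf{u},\beta\mathbf{v}) = \cos(\mathbf{u},\mathbf{v})$ for any $\alpha,\beta>0$ and $\mathbf{W}$ is linear (so $\mathbf{W}(\alpha\mathbf{u})=\alpha\mathbf{W}\mathbf{u}$), both $\cos(\mathbf{u},\mathbf{v})$ and $\cos(\mathbf{W}\mathbf{u},\mathbf{W}\mathbf{v})$ are unchanged if we rescale $\mathbf{u}$ and $\mathbf{v}$. Hence I may assume without loss of generality that $\|\mathbf{u}\|=\|\mathbf{v}\|=1$, which removes any dependence on the ratio of the norms and makes the final bound clean.

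The central idea is to express the inner product in the projected space via the polarization identity, $\langle \mathbf{W}\mathbf{u},\mathbf{W}\mathbf{v}\rangle = \tfrac14\bigl(\|\mathbf{W}(\mathbf{u}+\mathbf{v})\|^2 - \|\mathbf{W}(\mathbf{u}-\mathbf{v})\|^2\bigr)$, using that $\mathbf{W}(\mathbf{u}\pm\mathbf{v}) = \mathbf{W}\mathbf{u}\pm\mathbf{W}\mathbf{v}$. I would then apply Theorem~\ref{thm:normconcentration} with $\mathbf{b}=\mathbf{0}$ to each of the four fixed vectors $\mathbf{u}$, $\mathbf{v}$, $\mathbf{u}+\mathbf{v}$, $\mathbf{u}-\mathbf{v}$, and take a union bound over these four events (splitting the failure probability as $\delta_2/4$ each, which only inflates the constant by a $\log 4$ factor and preserves the $O(\sqrt{\log(1/\delta_2)/d})$ rate). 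This simultaneously gives $\|\mathbf{W}\mathbf{u}\|^2, \|\mathbf{W}\mathbf{v}\|^2 \in [1-\epsilon,1+\epsilon]$ and, via the parallelogram law $\|\mathbf{u}+\mathbf{v}\|^2 + \|\mathbf{u}-\mathbf{v}\|^2 = 2(\|\mathbf{u}\|^2+\|\mathbf{v}\|^2)=4$, the additive inner-product bound $|\langle\mathbf{W}\mathbf{u},\mathbf{W}\mathbf{v}\rangle - \langle\mathbf{u},\mathbf{v}\rangle| \le \epsilon$.

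It remains to propagate these three estimates through the ratio defining cosine similarity. Writing the denominator as $\|\mathbf{W}\mathbf{u}\|\,\|\mathbf{W}\mathbf{v}\| = 1+\eta$ with $|\eta|\le\epsilon$ (the product of two factors each in $[\sqrt{1-\epsilon},\sqrt{1+\epsilon}]$ lies in $[1-\epsilon,1+\epsilon]$) and the numerator as $\langle\mathbf{W}\mathbf{u},\mathbf{W}\mathbf{v}\rangle = \cos(\mathbf{u},\mathbf{v}) + \zeta$ with $|\zeta|\le\epsilon$, a short algebraic expansion gives
\[
\cos(\mathbf{W}\mathbf{u},\mathbf{W}\mathbf{v}) - \cos(\mathbf{u},\mathbf{v})
= -\frac{\eta}{1+\eta}\cos(\mathbf{u},\mathbf{v}) + \frac{\zeta}{1+\eta}.
\]
Since $|\cos(\mathbf{u},\mathbf{v})|\le 1$, this is bounded in absolute value by $\tfrac{|\eta|+|\zeta|}{1-|\eta|} \le \tfrac{2\epsilon}{1-\epsilon} = O(\epsilon)$, yielding $\epsilon_2 = O(\sqrt{\log(1/\delta_2)/d})$ as claimed.

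I expect the main obstacle to be the last step: controlling how the multiplicative perturbation of the denominator interacts with the additive perturbation of the numerator. The normalization to unit vectors is what keeps this clean---without it the inner-product error scales like $\tfrac{\epsilon}{2}(\|\mathbf{u}\|/\|\mathbf{v}\| + \|\mathbf{v}\|/\|\mathbf{u}\|)$, which blows up for vectors of very different magnitudes. One must also verify that the denominator stays bounded away from zero so that dividing is legitimate, which follows from the same norm concentration for any nonzero $\mathbf{u},\mathbf{v}$. A cleaner-looking but heavier alternative would be to concentrate $\langle\mathbf{W}\mathbf{u},\mathbf{W}\mathbf{v}\rangle$ directly as a Gaussian quadratic form via the Hanson--Wright inequality, but the polarization route reuses Theorem~\ref{thm:normconcentration} verbatim and is therefore preferable.
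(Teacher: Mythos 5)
Your proof is correct, and its overall skeleton matches the paper's: concentrate the two norms, concentrate the inner product, then propagate both estimates through the ratio defining the cosine. The substantive difference lies in the middle step. The paper's proof establishes $\bigl|(\mathbf{W}\mathbf{u})^{\mathsf{T}}(\mathbf{W}\mathbf{v}) - \mathbf{u}^{\mathsf{T}}\mathbf{v}\bigr| \le \epsilon'_2\,\|\mathbf{u}\|\,\|\mathbf{v}\|$ by computing $\mathbb{E}[(\mathbf{W}\mathbf{u})^{\mathsf{T}}(\mathbf{W}\mathbf{v})] = \mathbf{u}^{\mathsf{T}}\mathbf{v}$ and then appealing generically to ``concentration inequalities for sums of Gaussians'' without carrying out the variance computation; you instead derive the same additive bound as a corollary of Theorem~\ref{thm:normconcentration} via the polarization identity applied to $\mathbf{u}+\mathbf{v}$ and $\mathbf{u}-\mathbf{v}$, with a union bound over four events and the parallelogram law to control the constants. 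Your route is more self-contained (it reuses the already-proved norm concentration verbatim and never needs a separate quadratic-form inequality) and it makes the inner-product step fully rigorous where the paper leaves it asserted. Your final step is also more careful than the paper's: you track the multiplicative denominator error $\eta$ and additive numerator error $\zeta$ explicitly and bound the difference by $\tfrac{2\epsilon}{1-\epsilon}$, whereas the paper combines the three ``$\approx$'' statements informally. The WLOG normalization to unit vectors is a clean touch the paper does not use; it is what lets your inner-product error become an absolute $\epsilon$ rather than scaling with $\|\mathbf{u}\|^2 + \|\mathbf{v}\|^2$. The one caveat, inherited from the paper rather than introduced by you, is that Theorem~\ref{thm:normconcentration} as actually proved via Chebyshev yields $\epsilon_1 = O(1/\sqrt{d\,\delta_1})$ rather than the stated $O(\sqrt{\log(1/\delta_1)/d})$; your union bound preserves whichever rate the norm theorem genuinely delivers, so your argument is valid either way.
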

\textbf{Remark.} We set the bias term \(\textbf{b}=\textbf{0}\) in our theoretical analysis and subsequent experiments. 
This choice enables exact matching between pre- and post-projection cosine similarities, as any non-zero bias would introduce additional terms that obscure this relationship. 

% \begin{corollary}\label{cor:nonlin_inflate}
% Nonlinear activations may distort vector angles and inflate similarities.
% \end{corollary}
% As both norms and dot products concentrate under the random linear map \(\mathbf{x} \mapsto \mathbf{W}\mathbf{x}\), the relative variability of vectors is preserved with high probability.
% This phenomenon preserves geometric distinctions (i.e., angles) among points in high-dimensional space~\cite{liang2022mindgapunderstandingmodality}. 
% By contrast, when a nonlinear activation function (denoted as \(\sigma\)), whose output is restricted to a smaller range (e.g., sigmoid) or discards negative values (e.g., ReLU), is applied, vectors that originally differed may become more closely aligned, thereby inflating their cosine similarities:
% \begin{equation}
%     \begin{split}
%         \bigl|
%             \cos\bigl(\sigma(\mathbf{W}\mathbf{u}),\,
%                       \sigma(\mathbf{W}\mathbf{v})\bigr)
%             - \cos(\mathbf{u}, \mathbf{v})
%         \bigr|
%         > \bigl|
%             \cos\bigl(\mathbf{W}\mathbf{u},\,
%                       \mathbf{W}\mathbf{v}\bigr)
%             - \cos(\mathbf{u}, \mathbf{v})
%         \bigr|.
%     \end{split}
% \end{equation}

\begin{corollary}\label{cor:nonlin_inflate}
Nonlinear activations may distort vector angles and inflate similarities.  
Formally, let \(\sigma(\cdot)\) be a nonlinear activation (e.g., ReLU, sigmoid). Then with high probability:
\begin{equation}
    \begin{split}
        \bigl|
            \cos\bigl(\sigma(\mathbf{W}\mathbf{u}),\,
                      \sigma(\mathbf{W}\mathbf{v})\bigr)
            - \cos(\mathbf{u}, \mathbf{v})
        \bigr|
        >
        \bigl|
            \cos\bigl(\mathbf{W}\mathbf{u},\,
                      \mathbf{W}\mathbf{v}\bigr)
            - \cos(\mathbf{u}, \mathbf{v})
        \bigr|.
    \end{split}
\end{equation}
\end{corollary}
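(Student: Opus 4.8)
The plan is to contrast the two sides asymptotically in the dimension $d$: by Theorem~\ref{thm:cosinepreservation} the right-hand side is at most $\epsilon_2 = O(\sqrt{\log(1/\delta_2)/d})$ and therefore vanishes, so it suffices to show that the left-hand side is bounded below by a \emph{positive constant} independent of $d$. First I would record the Gaussian structure induced by $\mathbf{W}$: for fixed $\mathbf{u},\mathbf{v}$ the pair of $k$-th coordinates $\bigl((\mathbf{W}\mathbf{u})_k,(\mathbf{W}\mathbf{v})_k\bigr)$ is, across the $d$ independent rows, i.i.d.\ zero-mean bivariate Gaussian with correlation exactly $\rho := \cos(\mathbf{u},\mathbf{v})$ and variances $\|\mathbf{u}\|^2/d$ and $\|\mathbf{v}\|^2/d$. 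This is the same structure underlying Theorem~\ref{thm:cosinepreservation}.

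Next I would write the post-activation cosine as a ratio of empirical averages over the $d$ coordinates and pass to the limit. Writing it as $\frac{\frac1d\sum_k \sigma(a_k)\sigma(b_k)}{\sqrt{\frac1d\sum_k \sigma(a_k)^2}\,\sqrt{\frac1d\sum_k \sigma(b_k)^2}}$ with $a_k=(\mathbf{W}\mathbf{u})_k$, $b_k=(\mathbf{W}\mathbf{v})_k$, the law of large numbers (with an $O(1/\sqrt d)$ concentration rate from a Bernstein/sub-exponential bound on the summands) gives convergence to
\[
J(\rho) \;:=\; \frac{\mathbb{E}[\sigma(X)\sigma(Y)]}{\sqrt{\mathbb{E}[\sigma(X)^2]\,\mathbb{E}[\sigma(Y)^2]}},
\]
where $(X,Y)$ is standard bivariate Gaussian with correlation $\rho$. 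For positively homogeneous $\sigma$ such as ReLU the per-coordinate variances cancel, so the dependence on $\|\mathbf{u}\|,\|\mathbf{v}\|$ drops out and $J$ depends only on $\rho$.

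The crux is then to show the \emph{inflation} $J(\rho)>\rho$ strictly for $\rho\in(-1,1)$. I would expand $\sigma$ in the Hermite basis, $\sigma=\sum_{n\ge0}c_nH_n$, so that $J(\rho)=\sum_{n\ge0}w_n\rho^n$ is a convex combination with weights $w_n=c_n^2/\sum_m c_m^2$. The mechanism is the constant mode: for the non-odd activations named in the statement, $c_0=\mathbb{E}[\sigma(X)]\neq0$ (indeed $c_0=1/\sqrt{2\pi}$ for ReLU and $c_0=1/2$ for sigmoid), so $w_0>0$ injects an all-positive common component that pulls the cosine toward $1$. For ReLU I would make this quantitative via the closed-form arc-cosine kernel $J(\rho)=\tfrac1\pi\bigl(\sqrt{1-\rho^2}+\rho(\pi-\arccos\rho)\bigr)$ and verify $J(\rho)-\rho>0$ on $(-1,1)$ directly (for instance $J(0)=1/\pi>0$, $J(\pm1)=\pm1$, together with the observation that $\pi\bigl(J(\rho)-\rho\bigr)=\sqrt{1-\rho^2}-\rho\arccos\rho$ has derivative $-\arccos\rho\le0$ and vanishes at $\rho=1$). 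Combining the three facts—RHS $\le\epsilon_2\to0$, LHS $\to|J(\rho)-\rho|=:\Delta(\rho)>0$, and the $O(1/\sqrt d)$ fluctuation of the LHS—yields, for all sufficiently large $d$ and with probability at least $1-\delta$, the strict inequality $\mathrm{LHS}\ge\Delta(\rho)-O(1/\sqrt d)>\epsilon_2\ge\mathrm{RHS}$.

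The main obstacle is twofold. Analytically, the delicate part is the ratio-of-averages concentration: the numerator and both normalizers must each be controlled and the reciprocal square root handled, which needs a tail bound on $\sigma(X)$ (immediate for bounded sigmoid, but requiring sub-exponential control for unbounded ReLU) and care that the denominators stay bounded away from $0$. Conceptually, the honest difficulty is that the displayed inequality cannot hold for \emph{every} nonlinearity: an odd $\sigma$ has $c_0=0$ and need not inflate (e.g.\ $\sigma=\mathrm{id}$ gives $J(\rho)=\rho$ exactly), so the statement is really about the non-odd activations of interest, where the nonzero constant mode $c_0$ is what forces $\Delta(\rho)>0$; I would state the result for that class and flag the edge cases $\rho\to\pm1$, where $\Delta(\rho)\to0$ and the comparison degrades.
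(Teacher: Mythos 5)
Your proposal is correct and rests on the same underlying mechanism as the paper's argument—the coordinates $\bigl((\mathbf{W}\mathbf{u})_k,(\mathbf{W}\mathbf{v})_k\bigr)$ form i.i.d.\ bivariate Gaussians with correlation $\rho=\cos(\mathbf{u},\mathbf{v})$, and the post-activation correlation $J(\rho)$ strictly exceeds $\rho$—but your execution is substantially more complete. The paper's proof stops at the population-level computation $\mathrm{Corr}(Z_{1+},Z_{2+})>\rho$ for ReLU with $\rho>0$ and then asserts the displayed inequality; it never performs the step you make explicit, namely bounding the right-hand side by $\epsilon_2=O(\sqrt{\log(1/\delta_2)/d})$ via Theorem~\ref{thm:cosinepreservation} and showing the left-hand side concentrates at the constant $|J(\rho)-\rho|>0$, which is the only way the strict comparison actually follows. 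Your Hermite-expansion identification of the nonzero constant mode $c_0$ as the source of inflation, and the closed-form arc-cosine kernel for ReLU, are additions not present in the paper (which instead argues qualitatively about sign-clipping and gives a monotonicity claim for $\mathbb{E}[Z_{1+}Z_{2+}]$ without proof). Two further points in your favor: the paper's claim that $\mathbb{E}[Z_{1+}Z_{2+}]=\tfrac14$ at $\rho=0$, yielding correlation $\tfrac12$, is an arithmetic slip—the correct values are $\mathbb{E}[Z_{1+}Z_{2+}]=\tfrac{1}{2\pi}$ and correlation $\tfrac{1}{\pi}$, matching your $J(0)=1/\pi$; and your observation that the corollary as stated cannot hold for odd activations (where $c_0=0$, e.g.\ $\sigma=\mathrm{id}$ gives $J(\rho)=\rho$ exactly) is an honest scoping caveat the paper omits when it extrapolates from ReLU to ``generic'' $\sigma$. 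The remaining work in your plan—the ratio-of-averages concentration with denominators bounded away from zero, and the degradation near $\rho=\pm1$—is correctly identified and routine; nothing in it would fail.
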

This effect arises because both norms and dot products tend to concentrate under random linear mappings \(\mathbf{x} \mapsto \mathbf{W}\mathbf{x}\), thus preserving the geometric distinctions (i.e., angles) among points in high-dimensional space~\cite{liang2022mindgapunderstandingmodality}. In contrast, nonlinear activations with range constraints or sparsity-inducing properties (e.g., ReLU, sigmoid) may suppress variation and exaggerate alignment, leading to inflated cosine similarities even for originally dissimilar inputs.
Empirical evidence and theoretical proof supporting this corollary are presented in Sec.~\ref{sec:Result} and Appendix~\ref{subsec:nonlinear_vs_linear}.

\vspace{-2.5pt}
\section{Experimental Results} \label{sec:ICRL_Result}
\vspace{-2.5pt}
In addition to constructing examples from representations, i.e. \( (r_i, y_i) \), ICRL can also utilize both the original textual input and its corresponding FM features to form examples like \( (x_i, r_i, y_i) \). Extensive experiments show that adding representations can further enhance ICL performance.

\vspace{-2.5pt}
\subsection{Experiments Setup}
\vspace{-2.5pt}

% Todo:
\textbf{Datasets.} 
% \textbf{Datasets \& Baselines.} 
% We evaluate our ICRL method on five molecular datasets: ESOL \cite{delaney2004esol}, Caco\_wang \cite{wang2016adme}, AqSolDB \cite{sorkun2019aqsoldb}, LD50\_Zhu \cite{zhu2009quantitative}, and AstraZeneca \cite{wu2018moleculenet}.
% For larger datasets, following~\cite{ai4science2023impactlargelanguagemodels}, we randomly select 1,000 samples for the test set. 
We evaluate our ICRL method on five molecular datasets: ESOL~\cite{delaney2004esol}, Caco\_wang~\cite{wang2016adme}, AqSolDB~\cite{sorkun2019aqsoldb}, LD50\_Zhu~\cite{zhu2009quantitative}, and AstraZeneca~\cite{wu2018moleculenet}. For larger datasets, following~\cite{ai4science2023impactlargelanguagemodels}, we randomly select 1,000 samples for the test set. Additional experiments and discussions on protein and drug-target interaction datasets can be found in Appendix~\ref{app:DTI}.

% Detailed descriptions of these datasets are provided in the Appendix~\ref{app:data}.  
% For large datasets, we randomly select 1,000 samples for the test set, following~\cite{ai4science2023impactlargelanguagemodels}. 
% More information on the dataset is in Appendix~\ref{app:data}.
% ~\footnote{Drug-target interaction (DTI) and protein-related tasks.} 
 % and~\ref{app:DTI}

\textbf{Implementation \& evaluation.}
% Unless mentioned otherwise, the raw text input features are omitted in ICRL. We use the Uni-Mol~\cite{Zhou2023UniMolAU} to generate molecular representations and Llama-3.1-70B-Instruct~\cite{grattafiori2024llama3herdmodels} for inference.
% During inference, 
% we set the number of examples (shots), the PCA target dimensionality to 20, and the batch query size to 3. 
% The projector is implemented as a two-layer MLM with 64 hidden units per layer.
% To ensure result stability, each method is evaluated across ten random seeds, with the final results derived from the top three runs. 
Unless stated otherwise, raw text input features are omitted in ICRL. We use Uni-Mol~\cite{Zhou2023UniMolAU} to generate molecular representations and Llama-3.1-70B-Instruct~\cite{grattafiori2024llama3herdmodels} for inference. During inference, we set the number of examples (shots), the PCA target dimensionality to 20, and the batch query size to 3. The projector is a two-layer MLM with 64 hidden units per layer. To ensure result stability, each method is evaluated using ten random seeds, with final results derived from the top three runs.
Details about the dataset and implementation are in Appendices~\ref{app:data} and~\ref{app:set1}.

\begin{figure}[t]
    \centering

    % 表格
    \begin{table}[H]
\centering
\renewcommand{\arraystretch}{1.2}
\caption{
    RMSE (\textbf{$\downarrow$}) Comparison of ICRL Across Datasets. 
    \textbf{Bold}/ \underline{Underline}: best/second-best value among the  Embedding Injection methods. Ran-Noi and Ran-Pro denote the Random Noise and Random Projection methods, respectively.
}
\label{Tab:main_rep}
\small
\begin{tabular}{@{}c|lc|ccccc@{}}
\toprule
\multirow{2}{*}{\textbf{Dataset}} & \multicolumn{2}{c|}{\textbf{Text Injection}} & \multicolumn{5}{c}{\textbf{Embedding Injection}} \\
 & \multicolumn{1}{c}{ICL} & PCA & Zero-Pad & Ran-Noi & Ran-Pro & OT-Embed & OT-PCA \\ \midrule
ESOL & 1.16 {\tiny ±1.9e-2} & 1.11 {\tiny ±2.3e-4} & 1.73 {\tiny ±4.5e-2} & 1.41 {\tiny ±3.5e-3} & 1.69 {\tiny ±4.4e-2} & \textbf{1.19} {\tiny ±1.8e-3} & \underline{1.24} {\tiny ±4.0e-3} \\
Caco2\_Wang & 0.83 {\tiny ±8.4e-4} & 0.95 {\tiny ±2.4e-3} & 1.04 {\tiny ±4.1e-3} & 1.03 {\tiny ±4.4e-3} & 1.03 {\tiny ±1.3e-3} & \underline{0.89} {\tiny ±2.4e-3} & \textbf{0.88} {\tiny ±1.5e-3} \\
AqSolDB & 1.92 {\tiny ±3.9e-4} & 2.91 {\tiny ±1.5e-2} & 4.01 {\tiny ±5.3e-2} & \underline{3.95} {\tiny ±4.8e-3} & 4.02 {\tiny ±8.0e-2} & 4.06 {\tiny ±1.3e-1} & \textbf{3.25} {\tiny ±5.9e-2} \\
LD50\_Zhu & 0.99 {\tiny ±1.7e-4} & 1.06 {\tiny ±2.5e-4} & 1.28 {\tiny ±9.0e-4} & 1.21 {\tiny ±1.8e-3} & 1.29 {\tiny ±1.3e-3} & \underline{1.18} {\tiny ±1.2e-3} & \textbf{1.14} {\tiny ±1.3e-4} \\
AstraZeneca & 1.37 {\tiny ±2.4e-3} & 1.39 {\tiny ±5.1e-5} & 1.55 {\tiny ±2.0e-3} & 1.50 {\tiny ±1.3e-2} & 1.54 {\tiny ±9.2e-4} & \textbf{1.46} {\tiny ±3.9e-4} & \underline{1.47} {\tiny ±5.4e-4} \\ \bottomrule
\end{tabular}
\end{table}

    % 图片
    \begin{subfigure}[b]{0.223\textwidth}
        \includegraphics[width=\textwidth, angle=0]{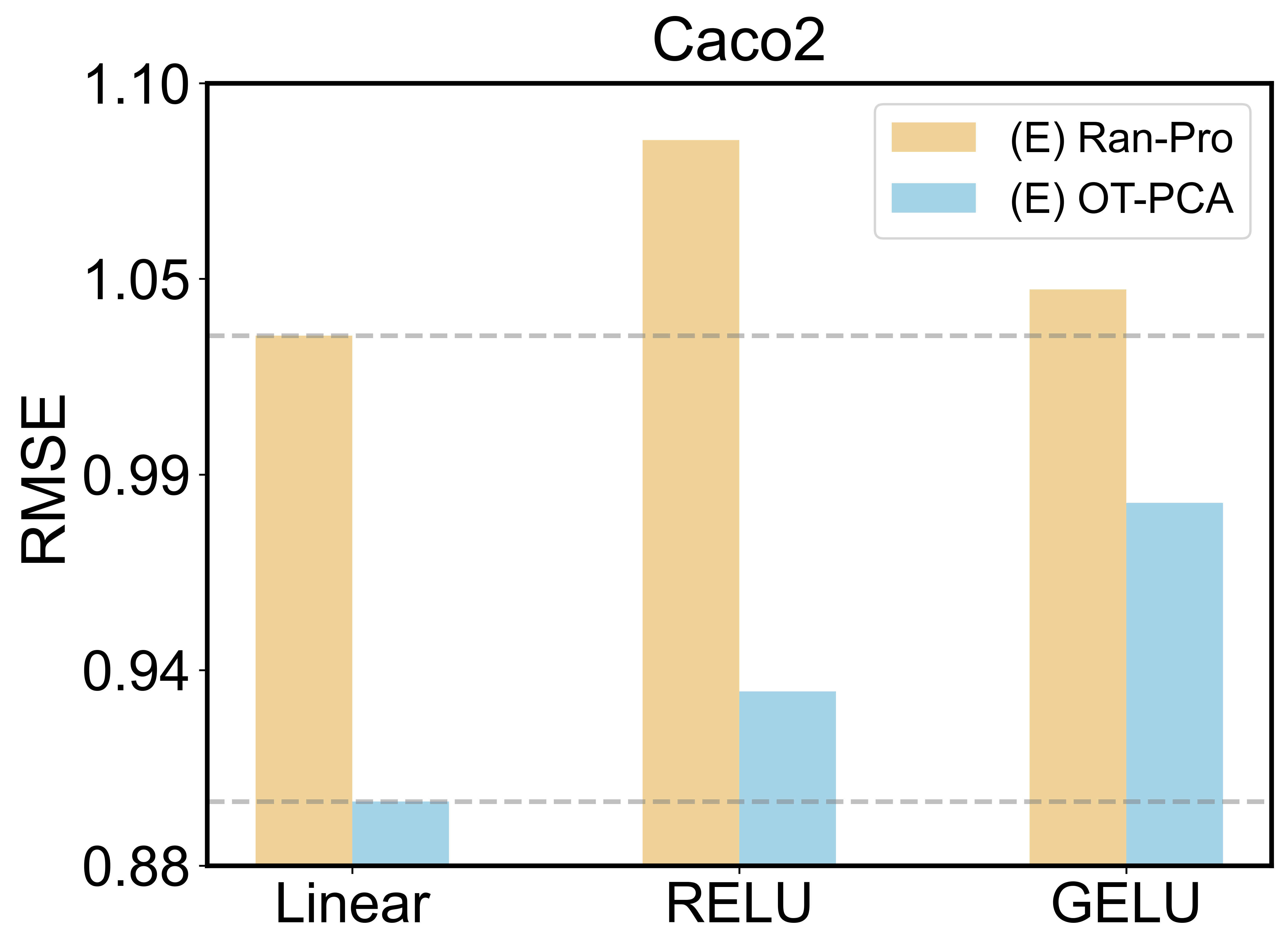}
        \caption{}
        \label{fig:abl_caco_act}
    \end{subfigure}
    \hspace{2mm} % 保持子图间水平间距
    \begin{subfigure}[b]{0.223\textwidth}
        \includegraphics[width=\textwidth, angle=0]{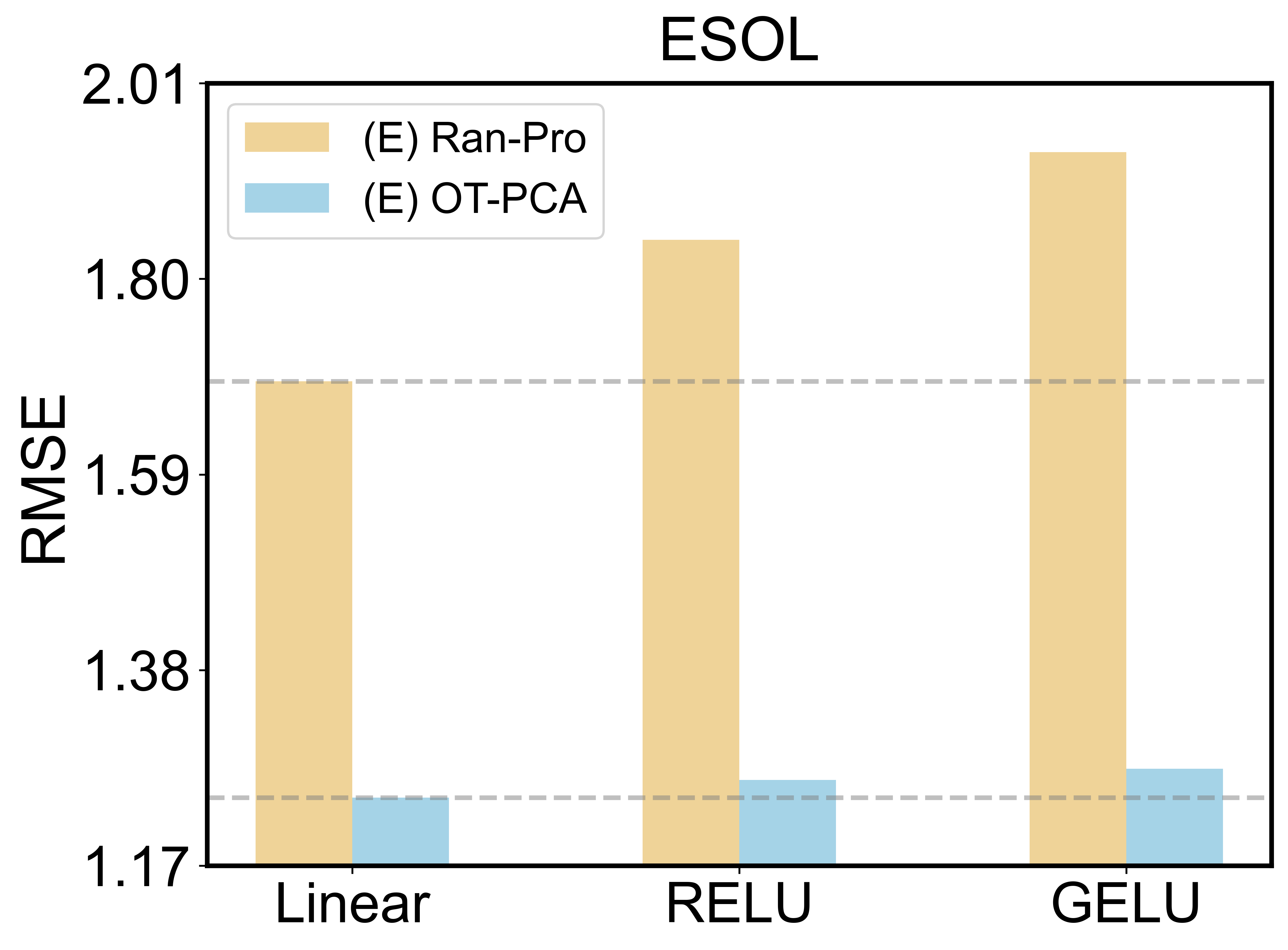}
        \caption{}
        \label{fig:abl_esol_act}
    \end{subfigure}
    \hspace{2mm} % 保持子图间水平间距
    \begin{subfigure}[b]{0.223\textwidth}
        \includegraphics[width=\textwidth, angle=0]{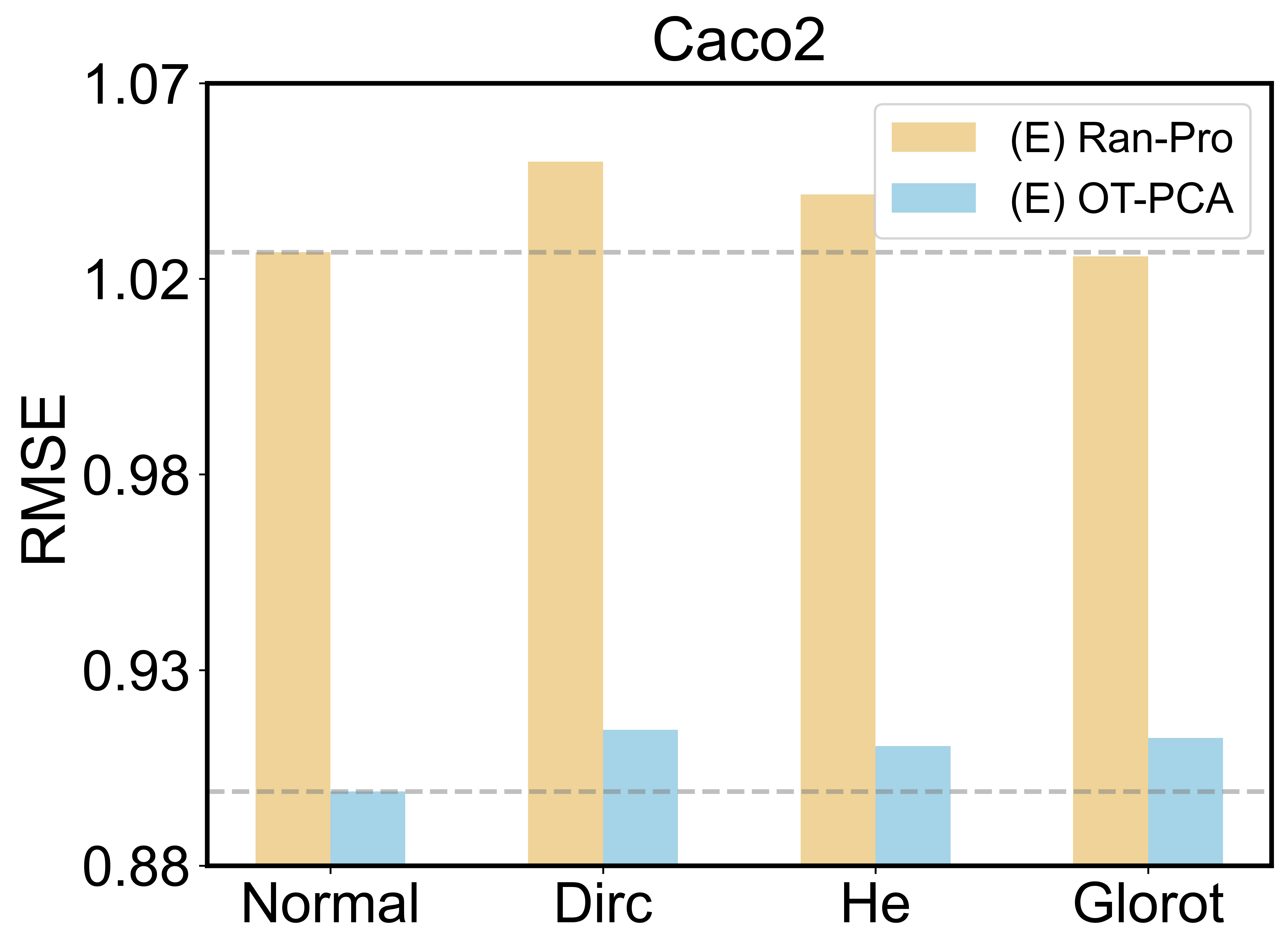}
        \caption{}
        \label{fig:abl_caco_init}
    \end{subfigure}
    \hspace{2mm} % 保持子图间水平间距
    \begin{subfigure}[b]{0.223\textwidth}
        \includegraphics[width=\textwidth, angle=0]{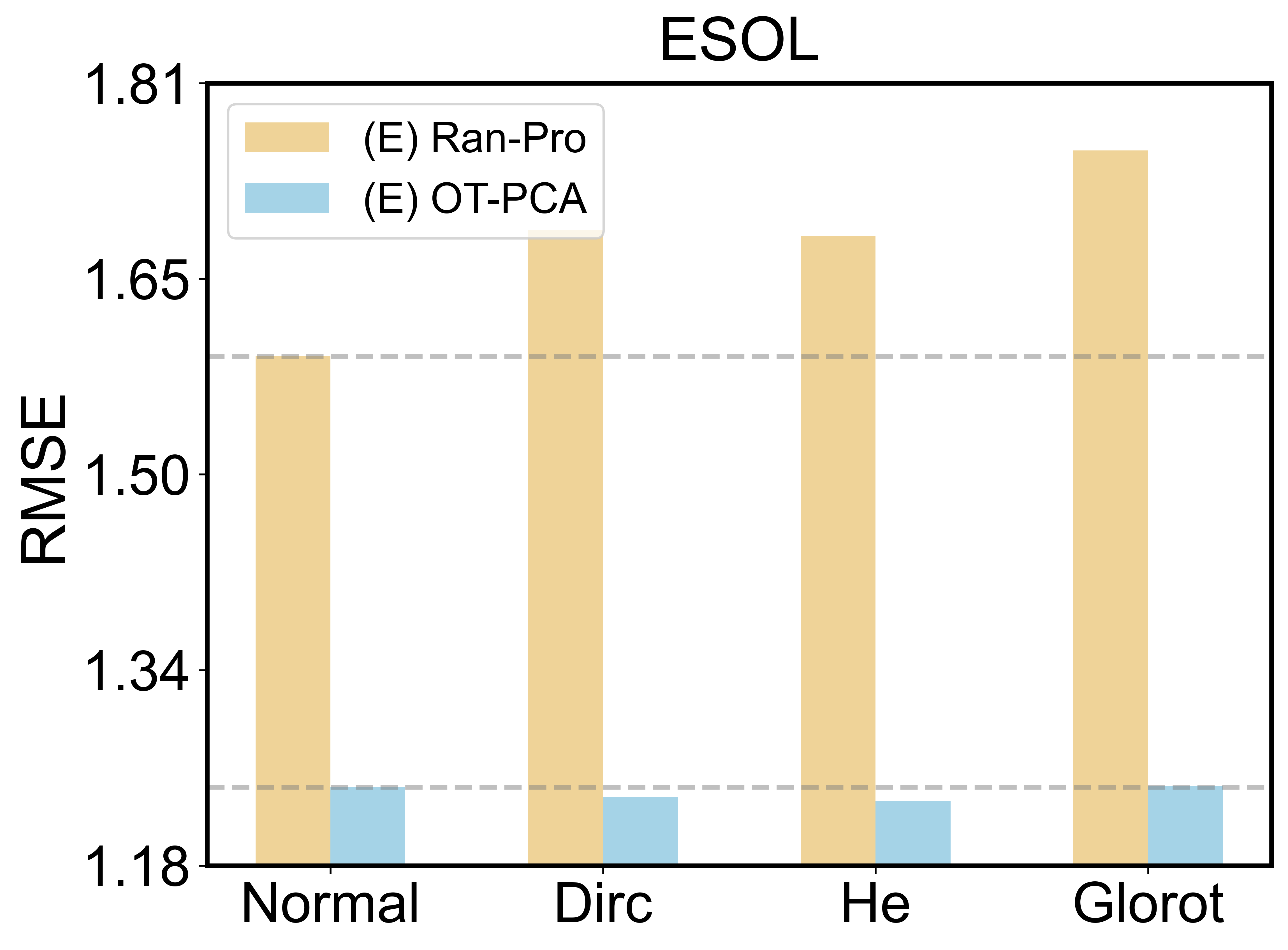}
        \caption{}
        \label{fig:abl_esol_init}
    \end{subfigure}

    \nextfloat
    \caption{(a) and (b) present the performance with and without activation functions in the projector across two datasets, evaluated by RMSE (↓), demonstrating that linear projectors can achieve superior results.
    (c) and (d) depict the impact of different projector initialization strategies, indicating that these choices have minimal influence on performance.
    }
    \label{fig:abl_0} % Move the main figure label here after caption

    \vspace{-10pt}
\end{figure}

\vspace{-2.5pt}
\subsection{Which Injection Method is more effective?}
\vspace{-2.5pt}
% This subsection evaluates various ICRL methods across different scenarios, analyzing their strengths and limitations~\footnote{The conclusions we proposed are consistent across different metrics, i.e., Pearson's correlation coefficient (Pearson \(r\)) and Root Mean Square Error (RMSE). 
% % Note that a performance gap persists between the ICRL methods and standard ICL, primarily because the LLM has not been trained on external FM data. 
% Additional details can be found in Appendix~\ref{app:exp2}.}.
This subsection evaluates various ICRL methods across different scenarios, analyzing their strengths and limitations~\footnote{The conclusions we proposed are consistent across different metrics, i.e., Pearson's correlation coefficient (Pearson \(r\)) and Root Mean Square Error (RMSE). Additional details can be found in Appendix~\ref{app:exp2}.}. It is important to note that our aim is not to claim state-of-the-art performance, but rather as an initial probe into \textit{training-free} approaches for LLMs to leverage FM representations. Given that PCA and OT processes incur a negligible fraction of computational cost (take less than \textbf{two seconds}, with only \textbf{one additional token per sample} in the input window) compared to training-based approaches, we focus on comparing several representation injection strategies under similar conditions, and include ICL results as a reference. A detailed discussion is provided in Sec.~\ref{discussion}.
% \textcolor{red}{@Alvin}

\begin{figure*}[t]
\centering
    % \subfigure[]
    {
    \includegraphics[width=\textwidth]
    % {}
    {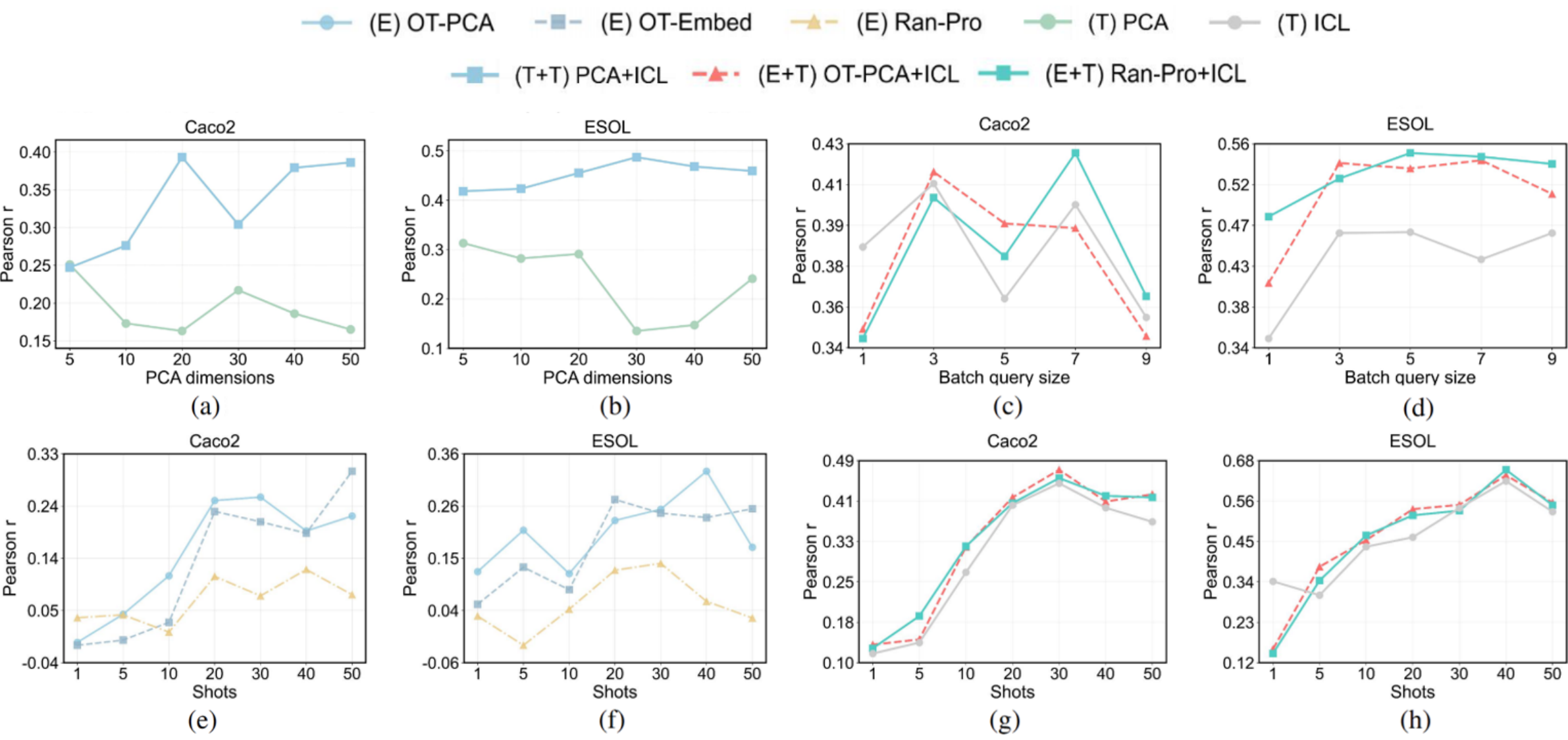}
    }
    \vspace{-15pt}
\caption{
(a) and (b): performance of various methods under different PCA dimensions.  
(c) and (d): different methods behave similarly across batch query size settings.
(e) - (h): Pearson correlation (↑) with increasing example count, demonstrating enhanced learning via OT-adjusted representations and feature injection. E and T denote different injection levels.}\label{fig:abl}
    \vspace{-12.0pt}
\end{figure*}

\textbf{Comparison of representation injection approaches.}
As shown in Tables~\ref{Tab:main_rep}, \ref{tab:3} \& \ref{tab:4}, the text-level injection approach PCA outperforms other embedding-level methods on most datasets.
These results demonstrate that LLMs can effectively interpret and utilize features injected in this manner, while its reliance on a large context window limits scalability.
In contrast, the OT-based approaches achieve comparable results with minimal context window usage, i.e., one token per FM representation.
Conversely, the Zero-Pad and Random Projection methods perform poorly, often falling below that of Random noise, indicating that these simplistic techniques fail to generate suitable representations.

\textbf{ICRL performance with original text features (SMILES).}
Incorporating SMILES sequences with ICRL reveals that most embedding-level injection methods could improve performance over ICL with only SMILES strings (Table~\ref{Tab:main_rep_icl},~\ref{tab:1} \& \ref{tab:2}).
Notably, on the ESOL dataset, the OT-PCA method achieves a significant 16.6\% improvement compared to using textual input alone (measured by Pearson r). 
Surprisingly, while LLMs exhibit a strong ability to interpret PCA strings and OT-based representations, their combined benefits with text features are less pronounced compared to simpler methods. 
Specifically, the Random Noise method consistently outperforms the baseline across all datasets when textual input is included, and Zero-Pad delivers superior results in most cases. 
In contrast, OT-based methods show inconsistent gains, and the PCA method—despite its prior strong performance—degrades performances across all datasets.  
Further analysis is provided in Sec.~\ref{sec:RQ3}.

% \vspace{-2.5pt}
\section{What Factors affect ICRL Performance?} \label{sec:Result}
% \vspace{-2.5pt}
\textbf{Model Capability.} 
The effectiveness of ICRL is closely linked to the capacity of the underlying pre-trained LLM. As shown in Tables~\ref{tab:3b} and~\ref{tab:llm_generalization}, larger models are generally better at leveraging FM-derived representations and handling in-context prompts. In contrast, smaller models (i.e., Llama-3.2-3B-Instruct) exhibit noticeable performance degradation on both ICL and ICRL (see Appendix~\ref{app:3B}).
Despite this, ICRL achieves performance comparable to, and in some cases even surpassing ICL in these smaller models. This suggests that, under capacity constraints, OT-aligned FM embeddings serve as an informative input representation compared to the baseline SMILES strings (see Appendix~\ref{app:llm_generalization}). 
% Consequently, ICRL demonstrates strong potential as a lightweight and effective approach for enabling small-scale LLMs to benefit from foundation model knowledge, by offloading representational complexity to the FM and simplifying downstream learning.
Consequently, ICRL demonstrates strong potential as a lightweight and effective approach for enabling small-scale LLMs to leverage foundation model knowledge, providing a promising alternative to costly supervised learning in resource-constrained settings.
% \textcolor{red}{@Alvin}

\textbf{ICRL projector schemes.}
% \textbf{How to set up the projector?}
As our projector is untrained, its structural design—specifically, the inclusion of activation functions and initialization choices—plays a crucial role. We evaluate the effects of incorporating ReLU and GELU activation functions between two linear layers and explore four common initialization methods: Glorot initialization~\cite{Glorot2010UnderstandingTD}, He initialization~\cite{he2015delvingdeeprectifierssurpassing},
% (with {nonlinearity='linear'}), 
Dirac initialization--which structures the weight matrix to approximate an identity matrix, and standard normal initialization.
Results in Figs.~\ref{fig:abl_0}(a) and~\ref{fig:abl_0}(b)  demonstrate that activation functions consistently degrade performance, which aligns with our theoretical analysis.
Additionally, Figs.~\ref{fig:abl_0}(c)  and~\ref{fig:abl_0}(d) suggest that while alternative initialization methods may provide minor improvements, the standard normal initialization consistently yields the best overall performance. Furthermore, initialization choices have a relatively minor impact compared to activation function settings.

\textbf{PCA dimensions.}
As the dimensionality of PCA affects the length of text-level injected representations, we conducted an ablation study to further investigate its impact.  
Interestingly, in scenarios without text input, the PCA method did not yield better performance with increasing representation length (Figs.~\ref{fig:abl}(a) and~\ref{fig:abl}(b)).
In most cases, performance even declined, suggesting that longer representations do not necessarily enhance the model's ability to interpret them.
In contrast, when both textual and representational inputs were considered, longer representations contributed to a deeper understanding of SMILES sequences to some extent.
Additionally, for embedding-level injection methods, the length of the injected item remains fixed, making this parameter only marginally influential on the target distribution.  
Detailed results and analyses are provided in Appendix~\ref{sec:a_pca}.

% \textbf{Do key ICL parameters affect ICRL similarly?}
% Previous studies~\cite{jiang2024manyshotincontextlearningmultimodal} have shown that ICL benefits from more examples and batch query processing. 
% This raises a key question: does ICRL exhibit similar behavior?
% To investigate, we conduct ablation experiments on these parameters, leading to the following findings:

\textbf{Do key ICL parameters affect ICRL similarly?}  
ICL has been shown to benefit from more examples and batch query processing~\cite{jiang2024manyshotincontextlearningmultimodal}.  
This raises a key question: does ICRL exhibit similar behavior?
To investigate, we conduct ablation experiments on these parameters, leading to the following findings:

(1) \textbf{ICRL benefits from an increased number of examples.} 
As illustrated in Figs.~\ref{fig:abl}(e) and~\ref{fig:abl}(f), 

OT-based methods demonstrate a noticeable upward trend with an increasing number of examples, suggesting that LLMs can effectively interpret injected representations and leverage additional examples to enhance the performance of downstream tasks.
In contrast, unadjusted representations show limited gains.  
Since ICRL requires significantly less context space, this finding suggests that embedding-level injection methods have the potential to achieve comparable performance to standard ICL by using a larger number of examples.

(2) \textbf{ICRL behaves similarly to ICL.}
Figs.~\ref{fig:abl}(e) to~\ref{fig:abl}(h) illustrate the performance trends when incorporating both textual input and corresponding representations compared to using textual input alone under varying conditions.  
Specifically, both approaches demonstrate synchronized performance changes with increasing batch query sizes, and as the number of examples grows, consistent improvements are observed across all methods.
These results indicate that the inclusion of representations does not alter the overall trends observed in textual-only approaches. Instead, it generally enhances the LLM's capability to interpret SMILES sequences within the provided examples.
% \input{Tables/main_rep_icl}
% \begin{figure}[H] % [t] 表示图片固定在页面顶部
%     \centering
%     \includegraphics[width=\textwidth]{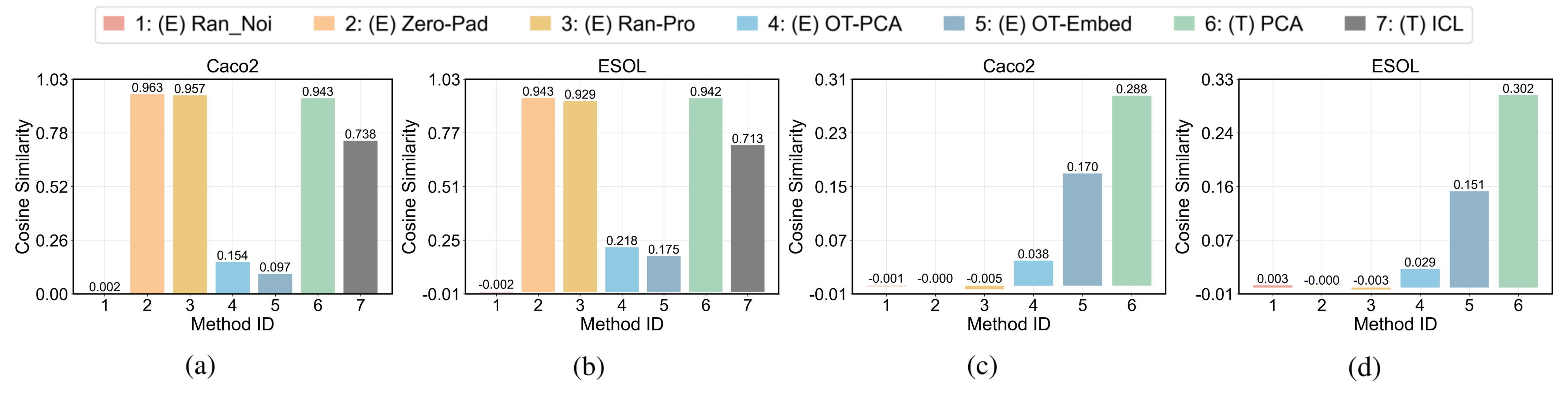} % 图片宽度与页面宽度一致
%     % \vspace{-15pt}
%     % \caption{(a) and (b) illustrate the mean cosine similarity of ICRL representations of different molecules in different ICRL methods. 
%     % (c) and (d) present the mean pair-wise cos similarity between the ICRL representations and the corresponding SMILES text embeddings.} % 添加图片标题
%     \caption{(a) and (b) illustrate the mean cosine similarity of ICRL representations of different molecules in different ICRL methods.  
%     (c) and (d) present the mean pairwise cosine similarity between the ICRL representations and their corresponding SMILES text embeddings.}
%     % \caption{(a) and (b)  illustrate the internal cosine similarity of representations from different ICRL methods and the embeddings of the SMILES inputs, respectively. 
%     % (c) and (d) present the similarity between the representations to be injected and the ICL embeddings.} % 添加图片标题
%     % （a）和（b）分别展示了不同注入方法所获得的表示，和对应的SMILES序列的嵌入的内部余弦相似度。（c）和(d)展示了不同方法下待注入表示和对应文本嵌入的余弦相似度。
%     \label{fig:cos} % 设置引用标签
%     \vspace{-10pt}
% \end{figure}

\begin{figure}[t]
    \centering

    % 表格
    \begin{table}[H]
\centering
\renewcommand{\arraystretch}{1.2} % 将行间距调整为 1.5 倍
\caption{
    Pearson Correlation (\textbf{$\uparrow$}) Comparison Across Datasets. 
    \textbf{Bold}/ \underline{Underline}: best/second-best value compared with ICL.
}
\label{Tab:main_rep_icl}
\small
\resizebox{\textwidth}{!}{
\begin{tabular}{c|c|cccccc}
\toprule
\multirow{3}{*}[-0.3ex]{\textbf{Dataset}} & \textbf{Baseline} & \multicolumn{6}{c}{\textbf{ICRL (Ours)}} \\
\cmidrule(lr){2-8}
\multirow{-2}{*}{} & \textbf{Text} & \multicolumn{1}{c|}{\textbf{Text}} & \multicolumn{5}{c}{\textbf{Embedding}} \\
\multirow{-3}{*}{} & ICL & \multicolumn{1}{c|}{PCA+ICL} & Zero-Pad+ICL & Ran-Noi+ICL & Ran-Pro+ICL & OT-Embed+ICL & OT-PCA+ICL \\
\midrule
ESOL & 0.465 {\tiny ±9.2e-4} & \multicolumn{1}{c|}{0.455 {\tiny ±1.2e-4}} & \underline{0.526} {\tiny ±2.1e-4} & 0.540 {\tiny ±1.6e-3} & 0.525 {\tiny ±6.5e-5} & 0.508 {\tiny ±1.7e-4} & \textbf{0.542} {\tiny ±5.4e-4} \\[1ex]
Caco2\_Wang & 0.411 {\tiny ±1.3e-3} & \multicolumn{1}{c|}{0.393 {\tiny ±9.2e-4}} & 0.410 {\tiny ±4.6e-6} & \underline{0.420} {\tiny ±1.1e-4} & 0.405 {\tiny ±1.6e-5} & \textbf{0.429} {\tiny ±1.1e-3} & 0.394 {\tiny ±5.7e-4} \\[1ex]
AqSolDB & 0.596 {\tiny ±5.1e-5} & \multicolumn{1}{c|}{0.549 {\tiny ±3.2e-4}} & \textbf{0.606} {\tiny ±6.8e-6} & 0.597 {\tiny ±1.1e-5} & \underline{0.600} {\tiny ±2.4e-5} & 0.569 {\tiny ±5.7e-4} & 0.589 {\tiny ±3.9e-5} \\[1ex]
LD50\_Zhu & 0.378 {\tiny ±1.2e-5} & \multicolumn{1}{c|}{0.356 {\tiny ±1.9e-4}} & \textbf{0.393} {\tiny ±8.6e-6} & 0.379 {\tiny ±5.4e-6} & \underline{0.392} {\tiny ±7.3e-5} & 0.361 {\tiny ±1.2e-5} & 0.362 {\tiny ±7.8e-5} \\[1ex]
AstraZeneca & 0.266 {\tiny ±2.3e-5} & \multicolumn{1}{c|}{0.227 {\tiny ±3.1e-5}} & \textbf{0.272} {\tiny ±4.8e-5} & 0.267 {\tiny ±2.1e-5} & 0.269 {\tiny ±1.9e-5} & 0.269 {\tiny ±2.1e-4} & \underline{0.271} {\tiny ±6.6e-5} \\
\bottomrule
\end{tabular}}
\end{table}

    % 图片
    \includegraphics[width=\textwidth]{Figure/RQ3/} % 图片宽度与页面宽度一致
    % \vspace{-15pt}
    % \caption{(a) and (b) illustrate the mean cosine similarity of ICRL representations of different molecules in different ICRL methods. 
    % (c) and (d) present the mean pair-wise cos similarity between the ICRL representations and the corresponding SMILES text embeddings.} % 添加图片标题
    \caption{(a) and (b) illustrate the mean cosine similarity of ICRL representations of different molecules in different ICRL methods.  
    (c) and (d) present the mean pairwise cosine similarity between the ICRL representations and their corresponding SMILES text embeddings.}
    % \caption{(a) and (b)  illustrate the internal cosine similarity of representations from different ICRL methods and the embeddings of the SMILES inputs, respectively. 
    % (c) and (d) present the similarity between the representations to be injected and the ICL embeddings.} % 添加图片标题
    % （a）和（b）分别展示了不同注入方法所获得的表示，和对应的SMILES序列的嵌入的内部余弦相似度。（c）和(d)展示了不同方法下待注入表示和对应文本嵌入的余弦相似度。
    \label{fig:cos} % 设置引用标签
    \vspace{-10pt}
\end{figure}

% \vspace{-5pt}
\section{What are the Mechanisms Behind ICRL?}\label{sec:RQ3}
% \vspace{-5pt}

In this section, we further analyze and discuss ICRL's distinct behaviors based on several observations:

\textit{(1) High similarity between ICRL representations degrades performance.}
Specifically, the FM features are typically confined to a narrower space~\cite{liang2022mindgapunderstandingmodality}, 
leading to highly similar mapped embeddings (Figs.~\ref{fig:cos}(a) and~\ref{fig:cos}(b)).
Thus, the LLM perceives minimal differences between samples, leading to random predictions based on the example label distribution, as further analyzed in Appendix~\ref{app:obe1}.
However, representation similarity alone does not fully explain the results.
For instance, methods like Random Noise exhibit poor performance despite their low similarity, whereas OT-based methods, which demonstrate moderate diversity, achieve superior outcomes. This underscores the importance of representation-text alignment in effective utilization.

\textit{(2) Improved FM features do not necessarily lead to better improvement when text features are included.}
Interestingly, although the OT-based method shows the best ICRL performance, it does not consistently achieve optimal results when integrated with textual features. Instead, simpler methods often provide more effective enhancements when combined with text input, as demonstrated in Table~\ref{Tab:main_rep_icl}.
An analysis of attention weights (Fig.~\ref{fig:attention}) reveals that the model predominantly focuses on the SMILES sequences, which are more familiar and occupy a significantly larger portion of the context window compared to the injected representations.  
This suggests that the LLM tends to ignore the injected representations rather than actively learning from them.

\textbf{The dual operational modes of ICRL representations in different scenarios.}
When prompts consist solely of ICRL representations as input features, the demonstrations differ significantly from the data encountered during pre-training. As a result, the model operates primarily in a \textit{task learning} mode~\cite{lin2024dualoperatingmodesincontext}, relying heavily on the few-shot ICRL exemplars for prediction of test samples.
This mode poses a challenge when the few-shot examples are insufficient, making both the prediction and the representation interpretation difficult. 
As shown in Figs.~\ref{fig:abl}(e) and~\ref{fig:abl}(f), when fewer than ten examples are provided, most methods yield nearly random predictions.
However, as the number of examples increases, ICRL performance improves significantly, indicating that the model gradually acquires the ability to interpret and leverage the injected representations when provided with sufficient contextual samples~\cite{agarwal2024manyshotincontextlearning}.
In this case, ICRL performance depends on two critical factors: (i) the diversity of representations, which enables meaningful mappings to labels, and (ii) the alignment of projected FM representations with the LLM's embedding distribution. 
Both conditions are essential for the model to effectively utilize the injected features and function optimally in the \textit{task learning} mode.

In contrast, when examples incorporate both representations and text inputs, the model may shift towards a \textit{task retrieval} mode~\cite{lin2024dualoperatingmodesincontext} due to the presence of SMILES strings in the LLM's pretraining data. 
In this mode, predictions are guided by the interaction between the model's pre-training priors and the contextual examples, allowing it to effectively leverage prior knowledge when encountering familiar information~\cite{lin2024dualoperatingmodesincontext}.
This explains the inferior performance of the PCA+ICL method: the injected representations, being textual tokens, interfere with the model’s interpretation of other text inputs, such as SMILES strings~\cite{gao2024noiserobustnessincontextlearning}. 
Conversely, injecting more distinctive embedding-level features, such as random noise, may function similarly to a pause token, allowing additional ``thoughts" that improve performance~\cite{goyal2024thinkspeaktraininglanguage}.
These findings underscore the importance of the uniqueness of injected representations. By aligning more effectively with the model's retrieval mechanism, distinct representations enhance the model's ability to leverage its prior knowledge and improve performance.

% In this case, predictions are influenced by the interplay between pre-training priors and contextual examples, enabling the model to effectively leverage its prior knowledge when encountering familiar information~\cite{lin2024dualoperatingmodesincontext}.

% \ding{228} {More distinct ICRL representations better improve ICL by facilitating the retrieval of prior knowledge.}
% % \vspace{-1pt}

% The more distinct the identity token, the better it facilitates this process. This explains the suboptimal performance of the PCA+ICL method, as the injected embeddings are essentially some textual tokens, thus introducing noise rather than serving as effective identifiers.
% From another perspective, injecting identity tokens across multiple examples and inserting pause tokens at specific positions are fundamentally equivalent strategies. Both enhance the computational capacity by enabling the model to perform more parallel computations during inference, thereby allowing the attention mechanism to manage more complex tasks~\cite{goyal2024thinkspeaktraininglanguage}.

% \footnote{Computational width refers to the number of parallel computations a transformer model can perform within each layer, while implementation capacity refers to the number of operations executed for a given input, constrained by the number of tokens.}

\vspace{-5pt}
\section{Related Work} \label{Related Work}
\vspace{-5pt}

\textbf{Representation-based In-context Learning.} Recent work shows that LLMs can reorganize internal representations to capture task semantics purely from in-context examples~\cite{park2025iclrincontextlearningrepresentations}. Most related to our work, Vector-ICL~\cite{zhuang2025vectoriclincontextlearningcontinuous}, designs pre-training and finetuning to train projectors to align external models' representations into the LLM embedding space. In contrast, our proposed approach aims to study training-free methods to derive the projector to avoid costly supervised training.

\textbf{In-context Learning \& Multi-modal LLMs.} ICL is a notable emergent property of LLMs~\cite{wei2022emergentabilitieslargelanguage}, enabling them to perform tasks by conditioning on a few examples without requiring parameter updates~\cite{dong2024surveyincontextlearning}. 
MLLMs extend LLMs by integrating multiple data modalities~\cite{li2023llavamedtraininglargelanguageandvision,zhang2024directpreferenceoptimizationvideo,Queen2024.12.10.627665}, allowing for more comprehensive reasoning. The Appendix~\ref{app:Related Work} provides more detailed related work.

% \textbf{Multi-modal Large Language Models} extend LLMs by integrating multiple data modalities, such as text and images, allowing for more comprehensive reasoning. Inspired by this, our work incorporates FM-driven domain-specific representations into LLMs, enabling enhanced performance on molecular and protein-related tasks.

% A more detailed discussion of related work is provided in the Appendix~\ref{app:Related Work}

% While promising, its effectiveness depends on factors such as prompt design, example selection, and model size. Existing approaches to enhance ICL include prompt engineering and data augmentation; however, they often struggle with domain-specific tasks due to a lack of contextual knowledge. To address this, we propose leveraging domain-specific representations from Foundation Models (FMs) to improve ICL performance in specialized domains.

% \textbf{Foundation Models} are large-scale pre-trained models designed to generalize across tasks by learning from vast datasets. They generate rich, task-specific representations, which, when integrated with LLMs, enhance their capabilities in specialized domains such as molecular and protein analysis. Structure-based FMs, such as GearNet and ESMFold, provide molecular and protein representations that preserve key structural and functional properties.

\vspace{-5pt}
\section{Further Discussions} \label{discussion}
\vspace{-5pt}

\begin{table}[ht]
\centering
\caption{Performance–cost comparison between ICRL and recent fine-tuning pipelines.}
\label{p-c-comparison}
\resizebox{\linewidth}{!}{
\begin{tabular}{lcccccc}
\toprule
Method & Type & Resource & Training Time & ESOL (RMSE) & Lipo (RMSE) & Avg \\
\midrule
MolecularGPT~\cite{liu2024moleculargptopenlargelanguage} & I-FT & 4×A800-80G & <1 day & 1.471 & 1.157 & 1.314 \\
GIMLET~\cite{zhao2023gimletunifiedgraphtextmodel} & S-PT + FT & 2–4 GPUs & $\sim$1 day & 1.132 & 1.345 & 1.239 \\
SELFormer~\cite{yüksel2023selformermolecularrepresentationlearning} & PT & 2×A5000 & $\sim$2 weeks & 1.357 & 3.192 & 2.275 \\
 & PT + FT & 2×A5000 & $\sim$2 weeks & 0.682 & 1.005 & 0.844 \\
GPT-MolBERTa~\cite{balaji2023gptmolbertagptmolecularfeatures} & PT + FT & 2–4 GPUs & $\sim$2 weeks & 0.477$\pm$0.01 & 0.758$\pm$0.01 & 0.612 \\
\midrule
OT-PCA (ours) & Training-free & CPU only & $\sim$2 sec & 1.140$\pm$0.01 & 1.349$\pm$0.01 & 1.245 \\
OT-PCA + ICL (ours) & Training-free & CPU only & $\sim$2 sec & 1.094$\pm$0.01 & 1.277$\pm$0.01 & 1.186 \\
\bottomrule
\end{tabular}}
\end{table}
% \vspace{-12pt}
% \vspace{-10pt}
\textbf{Performance–Cost Trade-off.} 
To clarify the trade-off between accuracy and efficiency, we compare ICRL with representative fine-tuning pipelines, including instruction-tuning (I-FT), supervised pretraining (S-PT) + finetuning, and unsupervised pretraining (PT) + finetuning. 
While these methods achieve good performance, they typically require hours to weeks of GPU training.  
In contrast, ICRL is training-free and requires only a lightweight CPU alignment step of about 2 seconds.  

As shown in Table~\ref{p-c-comparison}, while large-scale PT+FT methods achieve the lowest RMSE, they require substantial training cost. In contrast, ICRL runs in only a few seconds on CPU and still matches or even surpasses some lightweight tuning pipelines.  
(Detailed setup are provided in Appendix~\ref{app:tradeoff}.)

\textbf{Lightweight Trainable Projectors.}  
We also investigate whether lightweight training can enhance projector performance by exploring three strategies: caption-based pretraining on LPM24, contrastive alignment between SMILES and FM embeddings, and a combined multi-task variant. 
While these methods can improve captioning quality on the pretraining domain, they fail to transfer effectively to regression tasks such as ESOL and AqSolDB. 
In some cases, training even destabilizes the ICL process, leading to incomplete outputs or degraded accuracy. 
These findings indicate that, under limited data and without domain-specific supervision, lightweight trainable projectors are less reliable than the training-free OT-PCA, underscoring the robustness and practicality of our approach. (Implementation details and results are given in Appendix~\ref{app:learnableprojector}.)

\textbf{Performance under Challenging Tasks.}
Tasks such as drug–target interaction and protein-related property prediction are particularly challenging for ICL, where subtle sequence variations critically determine labels but remain hard for general-purpose LLMs to capture~\cite{ai4science2023impactlargelanguagemodels,Ko2024.08.24.609531}, often leading to near-random performance.
ICRL mitigates this limitation by injecting representation-based inputs, allowing the model to capture informative signals even when textual features are homogeneous.

Our additional experiments on molecular QA and caption generation confirm this trend: ICRL consistently outperforms ICL under difficult tasks, yet it still falls short of domain-specific expert models. This highlights ICRL as a practical and lightweight alternative when ICL struggles, while leaving room for future integration with specialized approaches (Appendix~\ref{app:taskdiversity}).

\textbf{Cross-Modality Generalizability.}  
ICRL provides a lightweight framework for enabling LLMs to process non-text modalities such as vision and audio without requiring paired data or modality-specific training. Even in the absence of supervision, OT-aligned embeddings from models like ViT and wav2vec2 support meaningful few-shot predictions (Appendix~\ref{app:crossmodal}).  
While ICRL does not reach the performance of resource-intensive supervised methods in domains where large-scale task-specific or multimodal LLMs already exist, it serves as a practical alternative in scenarios where such models are scarce, costly, or difficult to deploy. Representative examples include molecular property prediction~\cite{C7SC02664A}, sensor-based human activity recognition~\cite{li2025sensorllmaligninglargelanguage}, and biomedical applications such as protein–ligand binding~\cite{rao2019evaluatingproteintransferlearning}, where pretrained multimodal LLMs are not readily available.  

In these settings, the lack of modality-specific pretrained models, limited labeled data, and domain complexity often make end-to-end fine-tuning infeasible. By directly injecting representation-level features, ICRL enables LLMs to exploit modality-specific information with no additional supervision or architectural modification. 
% Importantly, our findings across both molecular and protein datasets confirm that the link between representation diversity and downstream performance is \emph{modality-agnostic}, underscoring ICRL’s potential as a general approach for extending LLMs beyond text (Appendix~\ref{app:DTI}).  
Importantly, our findings across molecular and protein datasets confirm that the link between representation diversity and downstream performance is \emph{modality-agnostic}, underscoring ICRL’s potential as a general approach for extending LLMs beyond text (Appendix~\ref{app:DTI}).

\vspace{-2.5pt}
\section{Conclusion} \label{conclusion}
\vspace{-2.5pt}
We propose ICRL to explore whether LLMs can leverage modality-specific FM features without supervised training. Results show its feasibility and potential for generalizable multimodal reasoning.

\textbf{Limitations and Future Work.}  
Despite its efficiency and generalizability, ICRL underperforms compared to supervised methods due to the lack of task-specific training. In future work, we plan to explore lightweight training strategies to improve its performance while preserving efficiency.

\section*{Acknowledgement}
% This research is supported by the National Research Foundation, Singapore under its National Large Language Models Funding Initiative (AISG Award No: AISG-NMLP-2024-001), NTU Start Up Grant and by the Ministry of Education, Singapore, under its Academic Research Fund Tier 1 (RG22/24). Any opinions, findings and conclusions or recommendations expressed in this material are those of the author(s) and do not reflect the views of National Research Foundation, Singapore.
This research is supported by the National Research Foundation, Singapore under its National Large Language Models Funding Initiative (AISG Award No: AISG-NMLP-2024-001), NTU Start Up Grant and by the Ministry of Education, Singapore, under its Academic Research Fund Tier 1 (RG22/24). Any opinions, findings and conclusions or recommendations expressed in this material are those of the author(s) and do not reflect the views of National Research Foundation, Singapore. The computational work for this article was partially performed on resources of the National Supercomputing Centre (NSCC), Singapore (https://www.nscc.sg).

% \section*{Contribution Statement}
% In this paper, the authors made the following contributions:
% \begin{itemize}

% \item Yuchen Zhang proposed GEOM and implemented it. He also designed the experiments, conducted part of the experiments, analyzed the results, designed the entire logic, plotted the figures, and wrote the majority of the manuscript.

% \item Tianle Zhang conducted part of the experiments and recorded all the experimental results. He also wrote the theoretical analysis of GEOM.

% \item Kai Wang designs the logic of the abstract and introduction with Yuchen, modifies the abstract and introduction sentence-by-sentence with Yuchen, improves the storytelling, and organizes the rebuttal (analyzing the questions and replying to the reviewers) with Yuchen and Tianle.

% \item Ziyao Guo, Yuxuan Liang, and Xavier Bresson provided critical feedback and revised the manuscript.

% \item Wei Jin and Yang You supervised the project and provided valuable feedback about the work.

% \end{itemize}

\bibliographystyle{plain}
\bibliography{Reference}

%%%%%%%%%%%%%%%%%%%%%%%%%%%%%%%%%%%%%%%%%%%%%%%%%%%%%%%%%%%%
\newpage
\appendix

% \section*{Appendix}
\section*{Appendix Contents}
\begin{enumerate}
    \item \textbf{Author Contributions} \dotfill \pageref{AC}
    \item \textbf{Detailed Related Work} \dotfill \pageref{app:Related Work}
    \item \textbf{Datasets Details} \dotfill \pageref{app:data}
    \begin{enumerate}
        \item Molecular Dataset Description \dotfill \pageref{MDD}
        \item Protein Dataset Description \dotfill \pageref{PDD}
    \end{enumerate}
    \item \textbf{Algorithmic Descriptions} \dotfill \pageref{sec:algs}
    \begin{enumerate}
        \item ICRL Framework \dotfill \pageref{alg:ICRL_algorithmic}
        \item PCA Method  \dotfill \pageref{alg:pca_reduction}
        \item Zero-Pad  \dotfill \pageref{alg:zero_pad}
        \item Random Noise  \dotfill \pageref{alg:random_noise}
        \item Optimal Transport Alignment  \dotfill \pageref{alg:ot_alignment}
    \end{enumerate}
    \item \textbf{Theoretical Analysis} \dotfill \pageref{app:proof_1}
    \begin{enumerate}
        \item Proof of Theorem~\ref{thm:normconcentration}  \dotfill \pageref{app:proof_1}
        \item Proof of Theorem~\ref{thm:cosinepreservation}  \dotfill \pageref{app:proof_2}
        \item Proof of Corollary~\ref{cor:nonlin_inflate}  \dotfill \pageref{subsec:nonlinear_vs_linear}
    \end{enumerate}
    \item \textbf{Additional Experiments} \dotfill \pageref{app:learnableprojector}
    \begin{enumerate}
    \item Performance-Efficiency Trade-offs \dotfill \pageref{app:tradeoff}
        \item Lightweight Learnable Projectors \dotfill \pageref{app:learnableprojector}
        \item Cross-Tasks Generalizability \dotfill \pageref{app:taskdiversity}
        \item Cross-Modal Generalizability \dotfill \pageref{app:crossmodal}
        \item Visualization of Attentional Weights \dotfill \pageref{app:attention}
        \item Feature Extraction from Different FM Layers \dotfill \pageref{app_exp1}
        \item Experiments on Protein and DTI Tasks \dotfill \pageref{app:DTI}
        \item Results with LLaMA-3.2-3B-Instruct \dotfill \pageref{app:3B}
        \item Generalization Across LLMs \dotfill \pageref{app:llm_generalization}
        \item PCA Ablation Analysis \dotfill \pageref{sec:a_pca}
        \item Representation Similarity Analysis \dotfill \pageref{app:obe1}
    \end{enumerate}
    \item \textbf{More Experimental Details} \dotfill \pageref{app:set1}
    \begin{enumerate}
        \item Experimental Setup \dotfill \pageref{app:set1}
        \item In-Context Example Sampling \dotfill \pageref{app:set0}
        % \item Additional ICRL + ICL Results \dotfill \pageref{app:rep_icl}
        \item Metric Discrepancies \dotfill \pageref{app:exp2}
    \end{enumerate}
\end{enumerate}

\section{Author Contributions}
\label{AC}
\textbf{Tianle Zhang:} Led the project; designed and implemented various representation-level ICRL variants; conducted most experiments and analysis; wrote the main manuscript.  

\textbf{Wanlong Fang:} Assisted with vision-related experiments; contributed to project discussions and paper revisions.  

\textbf{Jonathan Woo:} Designed and implemented text-level ICRL variants; developed the initial project codebase; contributed to project discussions and paper revisions; assisted in designing the theoretical analysis framework.  

\textbf{Paridhi Latawa:} Prepared and processed protein-related datasets; contributed to project discussions and paper revisions.  

\textbf{Deepak A. Subramanian:} Prepared and processed small-molecule-related datasets; contributed to project discussions and paper revisions.  

\textbf{Alvin Chan:} Provided overall supervision and guidance throughout the project; advised on the ICRL framework; contributed to manuscript writing.

\section{Detailed Related work}
\label{app:Related Work}

\textbf{In-Context Learning (ICL)}: ICL is a remarkable emergent property of LLMs, enabling them to perform various tasks by conditioning on a few input-output examples, without requiring parameter updates or fine-tuning~\cite{brown2020language, wei2021finetuned}. It has been explored extensively in terms of studying factors that influence the performance of ICL such as prompting strategies, amount and order of few-shot exemplars\cite{shin2022effect,jiang2024manyshotincontextlearningmultimodal,hahn2023theoryemergentincontextlearning,dong2024surveyincontextlearning}. However, it remains unknown how we can apply in-context learning to integrate high-dimensional representations from external non-LLM models. Our work is pioneering in its focus on enabling LLMs to directly learn from and utilize representations derived from diverse modalities beyond text from external models. This novel approach has the potential to radically expand the utility of LLMs in domains requiring the integration of multiple modalities and knowledge, thus opening new avenues for the application of LLMs. In a recent work \cite{yang2024context}, it has been shown that LLMs can shift from their original semantic representations to new ones that aligned with the graph's structure given as few-shot learning examples in the text prompt, suggesting that scaling context can enable LLMs to reorganize their knowledge to accommodate novel contexts and tasks. In contrast, our work here focuses on the potential to integrate representations from external models during inference time through few-shot exemplars.

\textbf{Foundation Models (FMs)}: Foundation Models are large-scale, pre-trained deep learning models designed to generalize across diverse tasks by learning from massive datasets using self-supervised learning techniques. These models, such as GPT3~\cite{radford2021learningtransferablevisualmodels,brown2020language}, leverage billions of parameters to develop general-purpose representations that can be fine-tuned for specific tasks or domains, revolutionizing fields like natural language processing, computer vision, and biomedical research~\cite{brown2020language}. For molecular and protein analysis, structure-based FMs, such as GearNet~\cite{zhang2022protein}, Uni-Mol~\cite{zhou2023unimol}, and ESMFold~\cite{lin2022language} extend this paradigm by incorporating domain-specific 2D/3D molecular representations. Although user-friendly, LLMs typically lack specialized training in areas like protein sequences or chemical structures. Such specialized domains FMs offers a way to enable LLMs to efficiently acquire and apply specialized knowledge. Our work use ICL to integrate domain-specific expertise from FMs pre-trained on specialized datasets, enhancing both usability and computational efficiency.
% GearNet excels at encoding molecular graphs, while Uni-Mol leverages 3D structural data to generate task-specific embeddings. ESMFold is a protein structure prediction model that integrates protein sequence representations from the ESM-2 language model with a structure prediction module. These models enable the extraction of rich domain-specific representations that preserve the structural and functional characteristics of molecules and proteins.
% \newline 
% Our work leverages these specialized FMs to extract high-dimensional molecular and protein representations, which are then aligned and integrated into LLMs through our proposed ICRL framework. This integration allows LLMs to utilize both general-purpose and domain-specific knowledge, enhancing their performance on specialized tasks such as molecular or protein property prediction, protein function classification and other such domain tasks.

\textbf{Tool Use in LLMs}: Current research on tool use with LLMs, such as the implementation of APIs for external functionalities in applications like Chemcrow, demonstrates the capacity of LLMs to interface with external tools to perform specific tasks \cite{bran2023chemcrowaugmentinglargelanguagemodels, yao2023reactsynergizingreasoningacting, Qu_2025}. However, in the current approaches, LLMs utilize only the final outputs of external models as the external tool. There is currently no work looking into using the richer, underlying representations from these external models for more sophisticated LLM inference. Our work aims to bridge this gap by enabling LLMs to access and leverage deep learning model representations, enhancing their inferential capabilities to address specialized tasks beyond what the LLMs are trained on.

\textbf{Multi-modal Large Language Models (MLLMs):} Multi-modal Large Language Models are designed to process and integrate multiple data types, including images, text, audio, and more \cite{Yin_2024, radford2021learningtransferablevisualmodels}. Existing research in the realm of MLLMs, such as Vision-Language Models (VLM) \cite{alayrac2022flamingo}and Tx-LLM\cite{chaves2024txllmlargelanguagemodel}, has significantly advanced our understanding of how MLLMs can process and integrate information across different input modalities. However, these models typically require extensive supervised training for both the LLM and their projection layers to handle multi-modal tasks \cite{koh2023generatingimagesmultimodallanguage}. This process not only demands substantial computational resources but also necessitates expertise in AI, making it less accessible to researchers without a background in machine learning. The requirement for multi-task training means the entire model must be retrained with both old and new data to avoid catastrophic forgetting—a significant limitation when only new data is introduced\cite{luo2025empiricalstudycatastrophicforgetting}. Our work addresses these limitations by proposing a novel in-context representation learning framework that simplifies the integration of multi-modal data, thereby reducing the need for extensive computational resources and specialized AI knowledge.

% Multi-modal Large Language Models are designed to process and integrate multiple data types, including images, text, audio, and more\cite{Yin_2024,radford2021learningtransferablevisualmodels}. Traditional LLMs, such as GPT-3  and BERT \cite{devlin2018bert}, excel in tasks like text generation and encoding but are limited to textual inputs, lacking the capability to understand or process other data modalities. This limitation restricts their ability to perform tasks requiring information from diverse representations.
% To address this challenge, multimodal LLMs, such as GPT-4~\cite{openai2024gpt4technicalreport}, incorporate inputs from multiple modalities. GPT-4 integrates text and image inputs, enabling it to process vision-related information and achieve human-level performance on various benchmark tasks. By combining vision encoders with text embeddings, MLLMs overcome the constraints of pure text models, opening up possibilities for handling diverse and complex data \cite{zhang2024mm15methodsanalysis,wu2023multimodal}.
% \newline 
% Inspired by the capabilities of MLLMs, we extend these principles to domain-specific tasks such as molecular and protein analysis. In our proposed approach, FMs provide high-dimensional, domain-specific representations, which are then aligned and integrated with LLMs. This fusion allows the LLM to leverage general-purpose text reasoning and specialized domain knowledge, significantly enhancing its performance on tasks within these complex domains.

\section{Datasets Details}\label{app:data}

\subsection{Molecular Dataset Description}\label{MDD}
\textbf{Molecular Domain: }
In the molecular domain, we evaluate the performance of our ICRL framework on seven key molecular property prediction regression tasks. These tasks are taken from Therapeutic Data Commons (TDC) Absorption, Distribution, Metabolism, and Excretion (ADME) property prediction benchmarks and the Toxicity property prediction benchmarks.

TDC is a platform consisting of AI-based datasets, benchmarks, and processing tools for therapeutic machine learning~\cite{huang2021therapeuticsdatacommonsmachine}. The ADME tasks are a set of single-instance prediction tasks in the molecular domain that examine the ADME properties of small-molecule drug chemicals. The Toxicity property prediction tasks aim to predict drug toxicity properties. All molecular datasets are as follows.

\begin{itemize}
    \item ESOL~\cite{delaney2004esol}: ESOL is made up of water solubility data (log solubility in mols per litre) for common organic small molecules. Input a chemical compounds SMILES string, predict the solubility of chemical compounds.
    \item Caco-2~\cite{wang2016adme}: The human colon epithelial cancer cell line, Caco-2, is used as an in vitro model to simulate human intestinal tissue. Experimental measurements of the rate at which drugs pass through Caco-2 cells can approximate the rate of drug permeation through human intestinal tissue. Input a drug SMILES string, predict the Caco-2 cell effective permeability.
    \item AqSolDB~\cite{sorkun2019aqsoldb}:  Aqueous solubility measures a drug’s ability to dissolve in water. Poor water solubility can lead to slow drug absorption, inadequate bioavailability, and even toxicity. More than 40\% of new chemical entities are poorly soluble. Input a drug SMILES string, predict the activity of solubility.
    \item LD50Zhu~\cite{zhu2009quantitative}: Acute toxicity LD50 measures the most conservative dose that can lead to lethal adverse effects. The higher the dose, the more lethal of a drug. Input a drug SMILES string, predict its acute toxicity.
    \item  AstraZeneca~\cite{wu2018moleculenet}: Lipophilicity measures a drug's ability to dissolve in a lipid environment (e.g., fats, oils). High lipophilicity is often associated with a high rate of metabolism, poor solubility, rapid turnover, and low absorption. Input a drug SMILES string, predict the activity of lipophilicity.
\end{itemize}

\subsection{Protein Dataset Description}\label{PDD}
\textbf{Protein Domain: }
In the protein domain, we evaluate the performance of the ICRL framework on four key protein property prediction regression tasks. These tasks are taken from two key protein representation learning benchmarks: TAPE (Tasks Assessing Protein Embeddings) and PEER (Protein Sequence Understanding). 

The TAPE benchmark consists of five tasks spanning different domains of protein biology \cite{rao2019evaluating}. We focused on the Fluorescence and Stability tasks because they are whole protein-sequence-level regression tasks, in contrast to amino acid-level regression, pairwise amino acid regression, or classification. The Fluorescence dataset and Stability dataset descriptions are as follows:

\begin{itemize}
    \item Fluorescence: The Fluorescence task is a regression-based task that predicts the log-fluorescence intensity of an input protein \cite{sarkisyan2016local}. The train set consists of a small neighborhood of the GFP protein, and the test set has distant GFP proteins. 
    \item Stability: The Stability task is a regression task that measures the most extreme circumstances in which a protein maintains its folded state above a concentration threshold, considered a proxy for stability \cite{rocklin2017global}. The train set comprises a broad set of proteins, and the test set comprises one-mutation neighborhoods of sampled proteins.
\end{itemize}

\textbf{Drug-Target Interaction Domain:}

In the Drug-Target Interaction (DTI) domain, we evaluate the performance of our framework on two key molecular-protein cross-domain property prediction regression tasks. These tasks are taken from the TDC Multi-Instance Prediction Problem. The drug-target interaction (DTI) prediction task evaluates the interaction activity of small-molecule drug compounds. 

BindingDB\cite{liu2007bindingdb,huang2020deeppurpose} is a collection of various assays which is a part of the DTI prediction task, which takes in a target amino acid sequence and SMILES string and then predicts the binding affinity. Since different assays use different metrics, two separate BindingDB datasets in Ki and IC50 units were used.
\begin{itemize} 
    \item BindingDB\_Ki is a database containing experimental data on the binding affinity (Ki values) of small molecules to protein targets. 
    \item BindingDB\_IC50 is a publicly available database containing experimental data on the binding affinities of small molecules to proteins, specifically in terms of IC50 values. IC50 measures the concentration of a substance required to inhibit 50\% of a biological target. 
\end{itemize}

\section{Algorithmic descriptions}\label{sec:algs}

In this section, we provide a detailed description of the overall pipeline and all representation injection algorithms.
\begin{algorithm}[ht]
\caption{ICRL Framework}
\label{alg:ICRL_algorithmic}
\begin{algorithmic}[1]
\STATE \textbf{Input:} Training and test dataset \( Q_{train} \) and \( Q_{test} \), FM \( \phi \), LLM \( \psi_e \),
batch query size \( b \), injection strategy \( M_{inj} \), number of examples \( k \), empty example set \( E \)

% Stage 1: Example Set Construction
\STATE \textbf{Stage 1: Construct demonstration set}
% \STATE Initialize empty example set \( E \)
\STATE Extract representations from FM: \( \mathbf{H} = \phi(Q_{train})  \)
% \IF{\( M_{inj} \) requires}
%     \STATE Get the needed parameters using \(  Q_{train}\) and \( \mathbf{H} \)
% \ENDIF
\IF{\( M_{inj} \) requires additional parameters}
    \STATE Derive parameters from \( \{x_i\}_{i=1}^n \subseteq Q_{train}\) and \( \mathbf{H} \)
\ENDIF

\STATE Sample \( \{(x_i, y_i)\}_{i=1}^n \) from \( Q_{train}\)
% in stratified
\FOR{each sampled \( (x_i, y_i, h_i) \)}
    % \STATE Get corresponding representation \( h_i \) from \( \mathbf{H} \)
    \STATE Apply injection strategy: \( r_i = M_{inj}(h_i) \)
    % \STATE Construct example \( e_i = (x_i, r_i, y_i, I_P) \)
    % \STATE Add example \( e_i = (x_i, r_i, y_i, I_P) \) to \( E \)
    \STATE Add example \( (r_i, y_i) \) to \( E \)
\ENDFOR

\STATE \textbf{Stage 2: Batch Inference}
\FOR{each test batch \( B_j = \{(x_j^{(t)}, y_j^{(t)})\}_{t=1}^b \subseteq Q_{test} \)}
    \STATE Initialize empty batch queries \( Q_j \)
    \FOR{each sample \( t = 1 \) to \( b \)}
        \STATE Extract representation: \( h_j^{(t)} = \phi(x_j^{(t)}) \)
        \STATE Apply injection strategy: \( r_j^{(t)} = M_{inj}(h_j^{(t)}) \)
        % \STATE Construct query: \( q_j^{(t)} = (x_j^{(t)}, r_j^{(t)}, I_Q) \)
        % \STATE Add batch query \( q_j^{(t)} = (x_j^{(t)}, r_j^{(t)}, I_Q) \) to \( Q_j \)
        \STATE Add query \( (r_j^{(t)}) \) to \( Q_j \)
    \ENDFOR
    % \STATE Construct ICRL prompt \( I_{ICRL} \) using example set \( E \) and queries \( Q_j \)
    \STATE Generate predictions: \( \hat{Y}_j = \{\hat{y}_j^{(t)}\}_{t=1}^b = \psi(E, Q_j) \)
    % \STATE Append \( \hat{Y}_j \) to \( \hat{Y}_{test} \)
\ENDFOR
\STATE \textbf{Output:} Predictions  $\hat{Y}$.

\end{algorithmic}
\end{algorithm}

% For the PCA method:
\begin{algorithm}[ht]
\caption{PCA method}
\label{alg:pca_reduction}
\begin{algorithmic}[1]
\STATE \textbf{Input:}
Extracted representations \( \mathbf{H}\),
PCA model \( \mathbf{W}_{PCA}\) fitted by \(Q_{train}\) and string conversion function \(S\)

\STATE Compute reduced-dimensional embeddings: \( \mathbf{H}_{pca} = \mathbf{H} \times \mathbf{W}_{PCA} \)
\STATE Convert reduced embeddings into text strings: \( \mathbf{S}_{pca} = S(\mathbf{H}_{pca}) \)

\STATE \textbf{Output:}
\( \mathbf{S}_{pca} \)

\end{algorithmic}
\end{algorithm}
% \begin{algorithm}[!ht]
% \caption{PCA method}
% \label{alg:pca_reduction}
% \begin{algorithmic}[1]
% \STATE \textbf{Input:} 
% Extracted representations \( \mathbf{H}\), 
% % Train set \( Q_{train} \), 
% PCA model \( \mathbf{W}_{PCA}\) fitted by \(Q_{train}\) and string conversion function \(S\)

% % \State \textbf{Stage 1: Perform PCA Reduction}
% \STATE Compute reduced-dimensional embeddings: \( \mathbf{H}_{pca} = \mathbf{H} \times \mathbf{W}_{PCA} \)
% \STATE Convert reduced embeddings into text strings: \( \mathbf{S}_{pca} = S(\mathbf{H}_{pca}) \)

% % \State \textbf{Stage 2: Prepare for Integration}
% % \State Prepare \( \mathbf{S}_{pca} \) for integration into the prompt
% \STATE \textbf{Output:} 
%  \( \mathbf{S}_{pca} \)

% \end{algorithmic}
% \end{algorithm}

% For the Zero-Pad method:
\begin{algorithm}[ht]
\caption{Zero-Pad}
\label{alg:zero_pad}
\begin{algorithmic}[1]
\STATE \textbf{Input:}
Extracted representations \( \mathbf{H}\),
LLM embedding space dimension \( d_{LLM} \)
% normalization parameters (mean and variance) of LLM embeddings \( \mu_{LLM}, \sigma_{LLM} \)

% \STATE \textbf{Stage 1: Apply Zero Padding}
\STATE Compute padding size: \( \text{pad\_size} = d_{LLM} - d_{FM} \) % Modified from \State
\FOR{each \( h_i \) in \( \mathbf{H} \)} % Already correct
\STATE Apply zero padding to \( \mathbf{H} \) to match the LLM embedding dimension: % Modified from \State
\(
\mathbf{h_i}_{padded} = \text{Pad}(\mathbf{h_i}, \text{pad\_size})
\)
\ENDFOR % Modified from \EndFor

% \STATE \textbf{Stage 2: Normalize Representation} % Modified from \State (in comment)
% \STATE Compute the mean \( \mu_{padded} \) and variance \( \sigma_{padded} \) of \( \mathbf{H}_{padded} \) % Modified from \State (in comment)
% \STATE Normalize \( \mathbf{H}_{padded} \) to match the LLM's statistics: % Modified from \State (in comment)
% \(
% \mathbf{H}_{norm} = \frac{\mathbf{H}_{padded} - \mu_{padded}}{\sigma_{padded}} \times \sigma_{LLM} + \mu_{LLM}
% \)

\STATE \textbf{Output:} % Modified from \State
 \( \mathbf{H}_{padded} \)

\end{algorithmic}
\end{algorithm}
% \begin{algorithm}[!ht]
% \caption{Zero-Pad}
% \label{alg:zero_pad}
% \begin{algorithmic}[1]
% \STATE \textbf{Input:} 
% Extracted representations \( \mathbf{H}\), 
% LLM embedding space dimension \( d_{LLM} \)
% % normalization parameters (mean and variance) of LLM embeddings \( \mu_{LLM}, \sigma_{LLM} \)

% % \State \textbf{Stage 1: Apply Zero Padding}
% \State Compute padding size: \( \text{pad\_size} = d_{LLM} - d_{FM} \)
% \FOR{each \( h_i \) in \( \mathbf{H} \)}
%     \State Apply zero padding to \( \mathbf{H} \) to match the LLM embedding dimension: 
% \(
% \mathbf{h_i}_{padded} = \text{Pad}(\mathbf{h_i}, \text{pad\_size})
% \)
% \EndFor

% % \State \textbf{Stage 2: Normalize Representation}
% % \State Compute the mean \( \mu_{padded} \) and variance \( \sigma_{padded} \) of \( \mathbf{H}_{padded} \)
% % \State Normalize \( \mathbf{H}_{padded} \) to match the LLM's statistics:
% % \(
% % \mathbf{H}_{norm} = \frac{\mathbf{H}_{padded} - \mu_{padded}}{\sigma_{padded}} \times \sigma_{LLM} + \mu_{LLM}
% % \)

% \State \textbf{Output:} 
%  \( \mathbf{H}_{padded} \)

% \end{algorithmic}
% \end{algorithm}

% For the Random Noise method:
\begin{algorithm}[!ht]
\caption{Random Noise}
\label{alg:random_noise}
\begin{algorithmic}[1]
\STATE \textbf{Input:} % Modified from \State
% Extracted representations \( \mathbf{H}\),
% , random seed \( \text{Salt} \)
Training and test dataset \( Q_{train} \), LLM \( \psi_e \), LLM embedding dimension \( d_{LLM} \)

% \STATE \textbf{Stage 1: Hash Seed Computation}
\FOR{each \( x_i \) in \( Q_{train} \)} % Modified from \For
 \STATE Compute a random hash seed using \( x_i \): % Modified from \State
 \( h_{i_{seed}} = \text{Hash}(x_i)
 \)
 \STATE Generate a unique random vector for each input based on the hash seed: % Modified from \State
 \(
 \mathbf{h}_i = \text{Random}(h_{i_{seed}}, d_{LLM})
 \)
\ENDFOR % Modified from \EndFor

% \STATE \textbf{Stage 2: Normalize Representation} % Modified from \State (in comment)
% \STATE Compute the mean \( \mu_{noise} \) and variance \( \sigma_{noise} \) of \( \mathbf{H}_{noise} \) % Modified from \State (in comment)
% \STATE Normalize \( \mathbf{H}_{noise} \) to match the LLM's statistics: % Modified from \State (in comment)
% \(
% \mathbf{H}_{norm} = \frac{\mathbf{H}_{noise} - \mu_{noise}}{\sigma_{noise}} \times \sigma_{LLM} + \mu_{LLM}
% \)

% \STATE \textbf{Stage 3: } % Modified from \State (in comment)
% \STATE Feed the normalized and padded representation \( \mathbf{H}_{norm} \) into the LLM % Modified from \State (in comment)

\STATE \textbf{Output:} % Modified from \State
 \( \mathbf{H}_{noise} \)

\end{algorithmic}
\end{algorithm}
\begin{algorithm}[!ht]
\caption{Optimal Transport Alignment}
\label{alg:ot_alignment}
\begin{algorithmic}[1]
\STATE \textbf{Input:} % Modified from \State
Extracted and projected representations \( \mathbf{H} \) and \( \mathbf{H}_{proj}\),
input texts \( Q_{train} \),
LLM embedding function \( \psi_e \),
alignment mode \( mode \),
PCA model \( \mathbf{W}_{PCA}\) fitted by \(Q_{train}\) and string conversion function \(S\)

\STATE \textbf{Stage 1: Get the OT parameters} % Modified from \State
% Get target distribution
\IF{\( mode = ``embed" \)} % Modified from \If
    \STATE Compute input embeddings: \( \mathbf{H}_{tar} = \psi_e(Q_{train}) \) % Modified from \State
\ELSIF{\( mode = ``pca" \)} % Modified from \ElsIf
    \STATE Get PCA strings: \(\mathbf{S}_{pca} = S(\mathbf{H} \times \mathbf{W}_{PCA})\) % Modified from \State
    \STATE Compute PCA embeddings: \( \mathbf{H}_{tar} = \psi_e(\mathbf{S}_{pca}) \) % Modified from \State
\ENDIF % Modified from \EndIf

% Compute OT parameters
\STATE Initialize empty vectors \( shift, scale \in \mathbb{R}^{d_{LLM}} \) % Modified from \State
\FOR{each dimension \( j \)} % Modified from \For
    % \STATE Compute \( \bar{\mathbf{u}}_j, \bar{\mathbf{v}}_j, \sigma_{p,j},\sigma_{t,j}\):

    \STATE \( \bar{\mathbf{u}}_j = mean(\mathbf{H}_{proj}[:, j]) \), \( \bar{\mathbf{v}}_j = mean(\mathbf{H}_{tar}[:, j]) \) % Modified from \State
    \STATE \( \sigma_{p,j} = std(\mathbf{H}_{proj}[:, j]) \), \( \sigma_{t,j} = std(\mathbf{H}_{tar}[:, j]) \) % Modified from \State
    \STATE \( shift[j] = \bar{\mathbf{v}}_j - \bar{\mathbf{u}_j}, scale[j] = \sigma_{t,j} / \sigma_{p,j} \) % Modified from \State
\ENDFOR % Modified from \EndFor

\STATE \textbf{Stage 2: Implement alignment} % Modified from \State
\STATE \( \mathbf{H}_{aligned} = scale \cdot \mathbf{H}_{proj} + shift \) % Modified from \State
\RETURN \( \mathbf{H}_{aligned}, shift, scale \) % Modified from \State \Return to \RETURN
\end{algorithmic}
\end{algorithm}

\clearpage
\newpage
\section{Detailed Proofs}

In this section, we provide a complete derivation of the concentration result stated in 
Theorem~\ref{thm:normconcentration} and Theorem~\ref{thm:cosinepreservation}. For convenience, we restate the theorem and then present
the step-by-step analysis.

\subsection{Proof of Theorem~\ref{thm:normconcentration}}\label{app:proof_1}

\textbf{theorem}: Given a fixed vector \(\textbf{u} \in \mathbb{R}^d\), suppose \(\textbf{W} \in \mathbb{R}^{d \times d}\) has i.i.d.\ entries 
\(\textbf{W}_{k,l} \sim \mathcal{N}(0,1/d)\), and \(b \in \mathbb{R}^d\) is a bias vector (fixed or random and 
independent of \(\textbf{W}\)).  Then there exist small constants \(\epsilon_1, \delta_1 > 0\) such that, with probability 
at least \(1 - \delta_1\),
\begin{equation}
  \Bigl|\|\textbf{W} \textbf{u} + \textbf{b}\|^2 \;-\; (\|\textbf{b}\|^2 + \|\textbf{u}\|^2)\Bigr|
  \;\le\; \epsilon_1\,\bigl(\|\textbf{b}\|^2 + \|\textbf{u}\|^2\bigr).
\end{equation}

\subsubsection{Step 1: Coordinate-wise Distribution}
Consider the affine transformation \(\textbf{x} \mapsto \textbf{W}\textbf{x} + \textbf{b}\).  
For each coordinate \(k \in \{1,2,\ldots,d\}\),
\begin{equation}
  (\textbf{W} \textbf{u} + \textbf{b})_k 
  \;=\; 
  \sum_{\ell=1}^d \textbf{W}_{k,\ell}\,\textbf{u}_\ell \;+\; \textbf{b}_k.
\end{equation}
Because \(\textbf{W}_{k,\ell} \sim \mathcal{N}(0,1/d)\) are i.i.d., the sum \(\sum_{\ell=1}^d \textbf{W}_{k,\ell}\,\textbf{u}_\ell\)
is Gaussian with mean \(0\) and variance \(\tfrac{\|\textbf{u}\|^2}{d}\).  
Hence each coordinate \((\textbf{W} \textbf{u} + \textbf{b})_k\) is distributed as
\begin{equation}
  \textbf{Y}_k := (\textbf{W} \textbf{u} + \textbf{b})_k \;\sim\; \mathcal{N}\!\Bigl(\textbf{b}_k,\;\tfrac{\|\textbf{u}\|^2}{d}\Bigr).
\end{equation}

\subsubsection{Step 2: Expectation and First Moments of the Squared Norm}
Let
\(
  \textbf{Z} = \|\textbf{W} \textbf{u} + \textbf{b}\|^2 = \sum_{k=1}^d \textbf{Y}_k^2
\), since \(\textbf{Y}_k \sim \mathcal{N}(\textbf{b}_k, \tfrac{\|\textbf{u}\|^2}{d})\), for each \(k\), we have 
\(
  \mathbb{E}[\textbf{Y}_k^2] 
  = 
  \textbf{b}_k^2 \;+\; \frac{\|\textbf{u}\|^2}{d},
\)
summation over \(k\) yields:
\begin{equation}
  \mathbb{E}[\textbf{Z}]
  \;=\;
  \sum_{k=1}^d \Bigl(\textbf{b}_k^2 + \frac{\|\textbf{u}\|^2}{d}\Bigr)
  \;=\;
  \|\textbf{b}\|^2 \;+\; \|\textbf{u}\|^2.
\end{equation}
Denote this sum by 
\begin{equation}
  M \;=\; \|\textbf{b}\|^2 + \|\textbf{u}\|^2,
\end{equation}
our task is to show that \(\textbf{Z}\) remains close to \(M\) with high probability as \(d\) grows.

\subsubsection{Step 3: Concentration of the Squared Norm (Refined Analysis)}
\label{subsec:step3_refined}

We now give a more precise argument about the variance of \(\textbf{Z}\) and how it influences
the bound from Chebyshev’s inequality.  

\paragraph{3.1. Exact Variance Computation.}
Since \(\textbf{Z} = \sum_{k=1}^d \textbf{Y}_k^2\) and each \(\textbf{Y}_k\) is independent across \(k\):
\(
  \mathrm{Var}(\textbf{Z}) 
  = 
  \sum_{k=1}^d \mathrm{Var}(\textbf{Y}_k^2)
\).
For \(\textbf{Y}_k \sim \mathcal{N}(\mu_k, \sigma^2)\) with \(\mu_k = \textbf{b}_k\) and \(\sigma^2 = \|\textbf{u}\|^2/d\),
one has
\begin{equation}
  \mathrm{Var}(\textbf{Y}_k^2)
  \;=\;
  \mathbb{E}[\textbf{Y}_k^4] - (\mathbb{E}[\textbf{Y}_k^2])^2
  \;=\;
  2\,\sigma^4 \;+\; 4\,\mu_k^2\,\sigma^2.
\end{equation}
Substituting \(\mu_k = \textbf{b}_k\) and \(\sigma^2 = \|\textbf{u}\|^2/d\), we get
\begin{equation}
  \mathrm{Var}(\textbf{Y}_k^2)
  \;=\;
  2\,\Bigl(\tfrac{\|\textbf{u}\|^2}{d}\Bigr)^2 
  \;+\; 
  4\,\Bigl(\tfrac{\|\textbf{u}\|^2}{d}\Bigr)\,(\textbf{b}_k^2).
\end{equation}
Summing over \(k\) from \(1\) to \(d\),
\begin{equation}
  \mathrm{Var}(\textbf{Z})
  \;=\;
  \sum_{k=1}^d \Bigl(
    2\,\frac{\|\textbf{u}\|^4}{d^2} 
    \;+\; 
    4\,\frac{\|\textbf{u}\|^2}{d}\,\textbf{b}_k^2
  \Bigr)
  \;=\;
  \frac{2\,\|\textbf{u}\|^4}{d} \;+\; \frac{4\,\|\textbf{u}\|^2\,\|\textbf{b}\|^2}{d}.
\end{equation}
Thus we have:
\begin{equation}
  \mathrm{Var}(\textbf{Z})
  \;=\;
  \frac{2\,\|\textbf{u}\|^4 + 4\,\|\textbf{u}\|^2\,\|\textbf{b}\|^2}{d}, 
  \mathbb{E}[\textbf{Z}] = M = \|\textbf{u}\|^2 + \|\textbf{b}\|^2.
\end{equation}

\paragraph{3.2. Ratio of Variance to Square of the Mean.}
To apply Chebyshev precisely, we look at
\begin{equation}
  \frac{\mathrm{Var}(\textbf{Z})}{(\mathbb{E}[\textbf{Z}])^2}
  \;=\;
  \frac{2\,\|\textbf{u}\|^4 + 4\,\|\textbf{u}\|^2\,\|\textbf{b}\|^2}{d\,(\|\textbf{u}\|^2 + \|\textbf{b}\|^2)^2}.
\end{equation}
Provided \(\|\textbf{u}\|\) and \(\|\textbf{b}\|\) do not scale with \(d\), this ratio shrinks on the order of \(1/d\).  
Thus, the relative standard deviation \(\sqrt{\mathrm{Var}(\textbf{Z})}/\mathbb{E}[\textbf{Z}]\) behaves like \(1/\sqrt{d}\).  
Hence, as \(d\to\infty\), \(\textbf{Z}\) concentrates around its mean \(M\).

\paragraph{3.3. Chebyshev’s Inequality for \(\epsilon_1,\delta_1\).}
From Chebyshev’s inequality:
\begin{equation}
  \Pr\bigl(|\textbf{Z} - \mathbb{E}[\textbf{Z}]| \;\ge\; \alpha\bigr)
  \;\le\;
  \frac{\mathrm{Var}(\textbf{Z})}{\alpha^2}.
\end{equation}
Set \(\alpha = \epsilon_1\,(\|\textbf{u}\|^2 + \|\textbf{b}\|^2) = \epsilon_1\,M\). Then
\begin{equation}
  \Pr\!\Bigl(\bigl|\textbf{Z} - M\bigr| \;\ge\; \epsilon_1\,M\Bigr)
  \;\le\;
  \frac{\mathrm{Var}(\textbf{Z})}{\epsilon_1^2\,M^2}
  \;=\;
  \frac{
    2\,\|\textbf{u}\|^4 + 4\,\|\textbf{u}\|^2\,\|\textbf{b}\|^2
  }{
    d\,(\|\textbf{u}\|^2 + \|\textbf{b}\|^2)^2 \,\epsilon_1^2
  }
  \;=\;
  \frac{C_{\!u,b}}{d\,\epsilon_1^2},
\end{equation}
where 
\begin{equation}
  C_{\!u,b} \;=\; 
  \frac{2\,\|\textbf{u}\|^4 + 4\,\|\textbf{u}\|^2\,\|\textbf{b}\|^2}{(\|\textbf{u}\|^2 + \|\textbf{b}\|^2)^2}.
\end{equation}
Hence, for any fixed \(\epsilon_1\), the failure probability \(\delta_1\) satisfies
\begin{equation}
  \delta_1 
  \;=\;
  \Pr\!\Bigl(\bigl|\textbf{Z} - M\bigr| \;\ge\; \epsilon_1\,M\Bigr)
  \;\le\;
  \frac{C_{\!u,b}}{d\,\epsilon_1^2}.
\end{equation}
As \(d\) increases, \(\delta_1 \to 0\).  This establishes a high-probability guarantee of the form
\begin{equation}
  \Pr\Bigl(
    \bigl|\|\textbf{W} \textbf{u} + \textbf{b}\|^2 - (\|\textbf{u}\|^2 + \|\textbf{b}\|^2)\bigr|
    \;\le\;
    \epsilon_1\,(\|\textbf{u}\|^2 + \|\textbf{b}\|^2)
  \Bigr)
  \;\ge\;
  1 - \delta_1,
\end{equation}
with explicit constants that depend on \(\|\textbf{u}\|\), \(\|\textbf{b}\|\), and \(d\).

\paragraph{3.4. Uniform Dependence on \(\|\textbf{u}\|\) and \(\|\textbf{b}\|\).}
If \(\|\textbf{u}\|\) or \(\|\textbf{b}\|\) grows with \(d\), the coefficient \(C_{\!u,b}\) might also grow, thus weakening the rate at which \(\delta_1\) decays.  However, if \(\|\textbf{u}\|\) and \(\|\textbf{b}\|\) remain bounded independently of \(d\), then \(C_{\!u,b}\) is constant, and the probability of failure vanishes on the order of \(1/d\).  
This completes our proof of Theorem~\ref{thm:normconcentration}

\paragraph{Remarks.}\label{remark}
\begin{itemize}
    \item The key structural property is that \(\mathbb{E}[\textbf{W}^T \textbf{W}] = I_d\), similar statements hold under any isotropic, sub-Gaussian distribution for \(\textbf{W}\). This also explains why the normal initialization achieves the best performance, since it is the one that best fits the theoretical assumptions.
    \item If \(b\) is itself random and independent of \(\textbf{W}\), conditioning on one and integrating out the other yields a nearly identical analysis; the main requirement is that \(b\) not be too large or adversarially correlated with \(\textbf{W}\),  e.g., in practice, we avoid this problem by setting \(b\) directly to 0.

\end{itemize}

\subsection{Proof of Theorem~\ref{thm:cosinepreservation}}\label{app:proof_2}
theorem:
Let \(\textbf{u},\textbf{v} \in \mathbb{R}^d\) be any two fixed vectors, and let \(\textbf{W} \in \mathbb{R}^{d\times d}\) have i.i.d.\ entries \(\textbf{W}_{k,l} \sim \mathcal{N}(0,1/d)\). Define
\(
  \textbf{u'} \;=\; \textbf{W} \textbf{u}, 
  \quad
  \textbf{v'} = \textbf{W} \textbf{v}.
\)
Then there exist constants \(\epsilon_2, \,\delta_2 > 0\) such that, with probability at least \(1 - \delta_2\),
\begin{equation}
  \bigl|\cos(\textbf{u'},\,\textbf{v'}) \;-\; \cos(\textbf{u},\,\textbf{v})\bigr|
  \;\le\;
  \epsilon_2.
\end{equation}
Equivalently,
\begin{equation}
  \cos(\textbf{Wu},\; \textbf{Wv}) 
  \;\approx\;
  \cos(\textbf{u},\,\textbf{v})
  \quad\text{with high probability.}
\end{equation}

\subsubsection{Step 1: Concentration of norms.}

By Theorem~\ref{thm:normconcentration}, for each of the vectors \(\textbf{u}\) and \(\textbf{v}\), their images under \(\textbf{x} \mapsto \textbf{W}\textbf{x}\) satisfy
\begin{equation}
  \|\textbf{W} \textbf{u}\|^2 
  \;\approx\;
  \|\textbf{u}\|^2,
  \qquad
  \|\textbf{W} \textbf{v}\|^2 
  \;\approx\;
  \|\textbf{v}\|^2.
\end{equation}
More precisely, there exist \(\epsilon_1,\delta_1>0\) such that with probability at least \(1-\delta_1\),
\begin{equation}
  \Bigl|\|\textbf{W} \textbf{u}\|^2 - \|\textbf{u}\|^2\Bigr|
  \;\le\; 
  \epsilon_1\|\textbf{u}\|^2,
  \quad
  \Bigl|\|\textbf{W} \textbf{v}\|^2 - \|\textbf{v}\|^2\Bigr|
  \;\le\; 
  \epsilon_1\|\textbf{v}\|^2.
\end{equation}
Taking square roots and using standard bounds yields
\begin{equation} \label{eq:norm_wu_wu}
  \|\textbf{u'}\| = \|\textbf{W} \textbf{u}\| \;\approx\; \|\textbf{u}\|,
  \quad
  \|\textbf{v'}\| = \|\textbf{W} \textbf{v}\| \;\approx\; \|\textbf{v}\|.
\end{equation} 

\subsubsection{Step 2: Concentration of the dot product.}

Consider 
\(
  (\textbf{W} \textbf{u})^\mathsf{T}(\textbf{W} \textbf{v})
  =
  \textbf{u'}^\mathsf{T} \textbf{v'}
\), we have:
\begin{equation}
  \textbf{u'}^\mathsf{T}\textbf{v'}
  \;=\;
  (\textbf{Wu})^\mathsf{T}(\textbf{Wv}).
\end{equation}
Since \(\textbf{W}\) has mean-zero i.i.d.\ Gaussian entries, we have:
\begin{equation}
  \mathbb{E}[(\textbf{Wu})^\mathsf{T}(\textbf{Wv})] \;=\; \textbf{u}^\mathsf{T}\textbf{v}.
\end{equation}
By concentration inequalities for sums of Gaussians, there exist \(\epsilon'_2, \delta'_2>0\) such that with probability at least \(1-\delta'_2\),
\begin{equation}\label{eq:wu_wv_approx_uv}
  \bigl|\,(\textbf{W} \textbf{u})^\mathsf{T}(\textbf{W} \textbf{v}) \;-\; \textbf{u}^\mathsf{T} \textbf{v}\bigr|
  \;\le\;
  \epsilon'_2\,\|\textbf{u}\|\,\|\textbf{v}\|.
\end{equation}

\subsubsection{Step 3: Combining the bounds to preserve cosine similarity.}

For \(\textbf{u'}\) and \(\textbf{v'}\), we have \(\cos(\textbf{u'},\textbf{v'}) = \frac{\textbf{u'}^\mathsf{T}\textbf{v'}}{\|\textbf{u'}\|\|\textbf{v'}\|}\). From (\ref{eq:norm_wu_wu}) and (\ref{eq:wu_wv_approx_uv}), with high probability, we have:
\begin{equation}
  \textbf{u'}^\mathsf{T}\textbf{v'} \;\approx\; \textbf{u}^\mathsf{T}\textbf{v},
  \quad
  \|\textbf{u'}\| \;\approx\; \|\textbf{u}\|,
  \quad
  \|\textbf{v'}\| \;\approx\; \|\textbf{v}\|.
\end{equation}
Thus we have:
\begin{equation}
  \cos(\textbf{u'},\textbf{v'})
  \;=\;
  \frac{\textbf{u'}^\mathsf{T}\textbf{v'}}{\|\textbf{u'}\|\|\textbf{v'}\|}
  \;\approx\;
  \frac{\textbf{u}^\mathsf{T}\textbf{v}}{\|\textbf{u}\|\|\textbf{v}\|}
  \;=\;
  \cos(\textbf{u},\textbf{v}).
\end{equation}
Therefore,
\begin{equation}
  \bigl|\cos(\textbf{Wu},\, \textbf{Wv}) \;-\; \cos(\textbf{u},\textbf{v})\bigr|
  \;\le\;
  \epsilon_2
\end{equation}
for some \(\epsilon_2>0\), with probability at least \(1-\delta_2\), where \(\delta_2 := \delta_1 + \delta'_2\), thus we complete the proof.

\subsection{Proof of Corollary~\ref{cor:nonlin_inflate}}
\label{subsec:nonlinear_vs_linear}
Below, we present a more formulaic argument showing that a random Gaussian linear map 
\(
  \mathbf{x} \mapsto \mathbf{W}\mathbf{x}
\)
followed by a standard nonlinear activation 
\(
  \sigma:\,\mathbb{R}\to\mathbb{R}
\)
(e.g., ReLU, leaky ReLU, sigmoid, etc.) 
leads to \emph{larger angle distortion} than the purely linear case.
In other words, with high probability, the angle distortion $\epsilon_{2,\sigma}$ for nonlinearly 
transformed embeddings is larger than the purely linear case $\epsilon_{2,\mathrm{lin}}$.

\textbf{Coordinate-Level Analysis.}
Consider a single coordinate,
\(
    z_1 = (\mathbf{W}\mathbf{u})_k,
    z_2 = (\mathbf{W}\mathbf{v})_k.
\)
The original product is $z_1\,z_2$, whereas after applying $\sigma$ we have 
$\sigma(z_1)\,\sigma(z_2)$. If $\sigma$ compresses negative values more strongly than positive ones 
(\emph{e.g.}, ReLU, sigmoid), or otherwise saturates different portions of the real line, then 
$\sigma(z_1)\,\sigma(z_2)$ can deviate significantly from $z_1\,z_2$.

Taking ReLU as an example, the \(z_1\, z_2\) and
\(\sigma(z_1)\,\sigma(z_2)\)
will differ by
\begin{equation}
    \Delta_k 
    = \max\{z_1,0\}\,\max\{z_2,0\}
          \;-\;z_1\,z_2.
\end{equation}
If $z_1$ and $z_2$ have opposite signs, $\Delta_k > 0$; this pushes the new vectors into the 
positive orthant and tends to align them more.

\textbf{Closed-Form for Correlated Gaussians.}
Due to the properties of random Gaussian matrices, for any coordinate $k$, $(z_1, z_2)$ follows a bivariate normal with correlation $\rho>0$, i.e.\ 
$(Z_1, Z_2)\sim \mathcal{N}(\mathbf{0}, \Sigma)$, where
\(
  \Sigma 
    \;=\; 
    \begin{pmatrix}
       1 & \rho \\
       \rho & 1
    \end{pmatrix}
  \quad\text{and}\quad
  0<\rho<1.
\)
Taking ReLU as an example, we can define $Z_{1+}=\max\{Z_1,\,0\}$ and $Z_{2+}=\max\{Z_2,\,0\}$, thus we have:
\begin{equation}
  \mathrm{Corr}(Z_{1+},Z_{2+})
  \;=\;
  \frac{\mathbb{E}[Z_{1+}Z_{2+}]}{\sqrt{\mathbb{E}[Z_{1+}^2]\;\mathbb{E}[Z_{2+}^2]}}.
\end{equation}
A standard integral shows $\mathbb{E}[Z_{+}^2]=\tfrac12$, so we can simplify the definition as:
\(
  \mathrm{Corr}(Z_{1+},Z_{2+})
  =
  2\,\mathbb{E}[\,Z_{1+}Z_{2+}\,].
\)
When $\rho=0$, symmetry implies 
$\mathbb{E}[Z_{1+}Z_{2+}]=\tfrac14$, giving a correlation of $\tfrac12$.
As $\rho\to 1$, $\mathbb{E}[Z_{1+}Z_{2+}]\to\tfrac12$, so the correlation approaches $1$.  
For $0<\rho<1$, as $\mathbb{E}[Z_{1+}Z_{2+}]$ \emph{increases} with $\rho$, 
staying above $\tfrac14$ and thus making:
\(
  \mathrm{Corr}(Z_{1+},Z_{2+}) >\rho.
\)
As correlation in $\mathbb{R}^d$ directly relates to cosine similarity, we conclude that 
\begin{equation}
  \cos\bigl(\max\{Z_1,0\},\,\max\{Z_2,0\}\bigr) 
  \;>\; 
  \cos(Z_1,Z_2),
\end{equation}
i.e.\ ReLU pushes already-aligned (positively correlated) coordinates even closer together, 
leading to ``angle inflation.''  

\textbf{Summary and Conclusion (for General $\sigma$).}
Because generic activations $\sigma$ (e.g.\ ReLU, leaky ReLU, sigmoid, tanh) discard or shrink 
certain parts of the real line (often negative values), they systematically inflate pairwise 
similarities in a high-dimensional random setting. Therefore, we have
\begin{equation}
    \bigl|
        \cos\bigl(\sigma(\mathbf{W}\mathbf{u}),\,
                  \sigma(\mathbf{W}\mathbf{v})\bigr)
        \;-\;
        \cos(\mathbf{u}, \mathbf{v})
    \bigr|
    \;>\;
    \bigl|
        \cos\bigl(\mathbf{W}\mathbf{u},\,
                  \mathbf{W}\mathbf{v}\bigr)
        \;-\;
        \cos(\mathbf{u}, \mathbf{v})
    \bigr|,
\end{equation}
meaning the angle distortion $\epsilon_{2,\sigma}$ for such non-linearities 
exceeds the purely linear case $\epsilon_{2,\mathrm{lin}}$.

\section{More Experiments}

\subsection{Performance-Efficiency Trade-offs in ICRL}\label{app:tradeoff}

A key motivation behind ICRL is to explore the feasibility of enabling LLMs to understand non-text modalities (e.g., molecular, vision, audio) in a training-free manner. This contrasts with prior methods such as Vector-ICL~\cite{zhuang2025vectoriclincontextlearningcontinuous}, Florence-VL~\cite{chen2024florencevlenhancingvisionlanguagemodels}, and ICL-Reps~\cite{yang2024incontextlearningrepresentationscontextual}, which rely on supervised training or projection tuning. In this section, we provide a comprehensive analysis of the trade-offs between performance and efficiency across model scale, input modalities, and computational cost.

\textbf{Comparison with Similar Methods.}
As shown in Table~\ref{tab:method_compare}, ICRL is unique in being both training-free and capable of accepting non-text inputs. In contrast, MLLMs (Florence-VL) and Vector-ICL require large-scale supervised training or projection tuning. These approaches also consume substantially more resources in terms of data and time.

\begin{table}[!ht]
\centering
\small
\caption{Comparative overview of multimodal adaptation methods.}
\vspace{7pt}
\label{tab:method_compare}
\begin{tabular}{l|c|c|c|c}
\toprule
\textbf{Method} & \textbf{Non-text Input} & \textbf{Supervised Training} & \textbf{Data Requirements} & \textbf{Time Cost} \\
\midrule
MLLM~\cite{chen2024florencevlenhancingvisionlanguagemodels} & \checkmark & \checkmark & High & High \\
Vector-ICL~\cite{zhuang2025vectoriclincontextlearningcontinuous} & \checkmark & \checkmark & High & High \\
ICL-Reps~\cite{yang2024incontextlearningrepresentationscontextual} & \checkmark & \checkmark & High & High \\
DA-ICL~\cite{long2023adaptcontextsretrievalaugmenteddomain} & × & \checkmark & High & High \\
ICL (text only) & × & × & Low & Low \\
\textbf{ICRL (Ours)} & \checkmark & × & Low & Low \\
\bottomrule
\end{tabular}
\end{table}

\paragraph{Cost–Performance Comparison with Fine-Tuning Pipelines.}
We compare ICRL against recent molecular property prediction pipelines, including instruction-tuning via Q-LoRA (I-FT)~\cite{liu2024moleculargptopenlargelanguage}, supervised pretraining followed by finetuning (S-PT + FT)~\cite{zhao2023gimletunifiedgraphtextmodel}, and unsupervised pretraining followed by finetuning (PT + FT)~\cite{yüksel2023selformermolecularrepresentationlearning, balaji2023gptmolbertagptmolecularfeatures}. Evaluation is conducted on ESOL and Lipophilicity datasets, with RMSE as the metric. 
To ensure fairness, we implement ICRL using the LLaMA-3.1-8B-Instruct model—comparable in inference cost to those used in prior work and feasible on a single GPU. Evaluation is based on RMSE. 
Notably, ICRL achieves even better performance with larger models; e.g., on ESOL, it reaches 0.839 RMSE using LLaMA-3.1-70B.  Cost information for baselines is extracted from original papers when available.

\textbf{Analysis.}
As shown in Table~\ref{p-c-comparison}, under comparable inference cost, ICRL achieves performance comparable to or even superior to lightweight tuning, while requiring \emph{no training} and only $\sim$2 seconds of CPU computation. Although large-scale PT+FT pipelines (e.g., GPT-MolBERTa) still achieve the lowest RMSE, their training requires weeks on multiple GPUs, in sharp contrast to the negligible overhead of ICRL. This result highlights ICRL’s practicality for low-resource and rapid-deployment scenarios, where retraining is prohibitive. 

Improved understanding of small molecules does not necessarily translate to better performance on downstream tasks. In analogy to LLM pretraining, the pretraining in baseline methods is primarily intended to enhance molecular understanding. However, as shown in~\cite{yüksel2023selformermolecularrepresentationlearning}, the standalone pretrained model performs significantly worse than ICRL on downstream prediction, suggesting that pretraining alone mainly serves as a warm-up for subsequent supervised tuning.

\textbf{Context Window Usage.}
One of the primary advantages of ICRL is its low context window usage. As summarized in Table~\ref{tab:context_window}, standard ICL methods require dozens to hundreds of tokens to represent non-text samples such as SMILES strings, amino acid sequences, images, or audio. In contrast, all variants of representation-level ICRL compress the input into a single embedding vector, which is injected as a single token—dramatically reducing input length and improving inference efficiency.

\begin{table}[!ht]
\centering
\small
\caption{Approximate per-sample context window usage across methods and modalities.}
\vspace{7pt}
\label{tab:context_window}
\begin{tabular}{l|cccc|c}
\toprule
\textbf{Method} & SMILES (ICL) & AA Sequence (ICL) & Vision (MLLM) & Audio (MLLM) & ICRL (Ours) \\
\midrule
\textbf{Tokens} & 6--21 & 103--214 & 29--576 & 75--1500 & \textbf{1} \\
\bottomrule
\end{tabular}
\end{table}

\textbf{Inference Overhead.}
Although ICRL includes additional steps such as PCA and OT alignment, these operations are lightweight. Table~\ref{tab:inference_time} shows that the PCA and OT steps collectively take about 2 seconds on CPU—faster than a single forward pass. Importantly, OT alignment only needs to be computed once per domain and reused thereafter via a simple matrix multiplication during inference.

\begin{table}[!ht]
\centering
\small
\caption{Comparison of computational cost across adaptation strategies.}
\vspace{7pt}
\label{tab:inference_time}
\begin{tabular}{l|cccc}
\toprule
\textbf{Method} & Pre-training & Fine-tuning & PCA Step & OT Step \\
\midrule
\textbf{Time Cost} & Days--Weeks & Hours--Days & $\sim$1.2 sec & $\sim$0.9 sec \\
\textbf{GPU Requirement} & High & Medium & None & None \\
\bottomrule
\end{tabular}
\end{table}

\textbf{Conclusion.}
While ICRL may not achieve state-of-the-art performance in all tasks, it provides a promising direction for enabling lightweight, training-free multimodal reasoning. By drastically reducing context window usage and inference cost, it offers an efficient alternative to conventional multimodal systems, especially under low-resource constraints.

\subsection{Lightweight Learnable Projectors}
\label{app:learnableprojector}

To further examine whether lightweight training can improve projector performance, we conduct all projector training on the LPM24 dataset~\cite{edwards-etal-2024-l}. 
We design three loss variants: (1) \emph{caption-based pretraining}, where the projector is optimized to generate molecular descriptions from FM embeddings; (2) \emph{contrastive learning}, inspired by CLIP~\cite{radford2021learningtransferablevisualmodels}, which minimizes the distance between LLM hidden states when receiving molecular inputs in SMILES form versus projected FM embeddings; and (3) a \emph{combined} loss that integrates both caption and contrastive objectives. 
This setup ensures that the LLM learns to treat projected embeddings as semantically consistent with textual SMILES representations, while also testing whether multi-task training enhances generalization.  

All experiments use LLaMA-3.1-8B-Instruct as the base model, with a single-layer linear projector of hidden size 4096, consistent with~\cite{zhuang2025vectoriclincontextlearningcontinuous}. 
Performance is evaluated on two tasks: molecular captioning on the LPM24 test set using BLEU-4, ROUGE-1, and ROUGE-L; and molecular property prediction on ESOL and Solubility\_AqSolDB using RMSE. 
Other hyperparameters and evaluation protocols follow the main paper to ensure comparability.

\begin{table}[ht]
\centering
\caption{Molecular captioning results with learnable projectors.}
\begin{tabular}{lccc}
\toprule
Method & BLEU-4 & ROUGE-1 & ROUGE-L \\
\midrule
Caption-only & 35.29 & 0.551 & 0.373 \\
Contrastive-only & 21.42 & 0.320 & 0.237 \\
Caption + Contrastive & 37.31 & 0.592 & 0.369 \\
\bottomrule
\end{tabular}
\end{table}

\begin{table}[ht]
\centering
\caption{Regression results with learnable projectors.}
\begin{tabular}{lcc}
\toprule
Method & ESOL (RMSE) & AqSolDB (RMSE) \\
\midrule
Caption-only & 1.256 & 3.030 \\
Contrastive-only & 1.372 & 2.915 \\
Caption + Contrastive & 1.213 & 3.805 \\
OT-PCA (ours) & \textbf{1.140} & \textbf{2.411} \\
OT-PCA + ICL (ours) & \textbf{1.094} & \textbf{2.385} \\
\bottomrule
\end{tabular}
\end{table}

\textbf{Analysis.}  
All three projector-training strategies perform worse than the training-free OT-PCA baseline. In regression tasks, they yield higher RMSE, and in some cases also harm ICL behavior (e.g., incomplete outputs, overfitting). Even when captioning quality improves, this does not transfer to downstream prediction. These results indicate that, under limited data and lack of domain knowledge, lightweight training is unstable and cannot match the robustness of training-free alignment.  

\subsection{Cross-Tasks Generalizability of ICRL}
\label{app:taskdiversity}

To examine whether ICRL generalizes beyond regression tasks, we extend evaluation to two additional task types: molecular QA and captioning. 
For QA, we adopt the MoleculeQA benchmark~\cite{lu2024moleculeqadatasetevaluatefactual}, which contains four categories (Structure, Source, Property, Application), and report accuracy in each as well as the average score. 
For captioning, we use the ChEBI-20 dataset~\cite{edwards2022translationmoleculesnaturallanguage}, where performance is measured with BLEU-4, ROUGE-1, and ROUGE-L. 
All experiments are conducted with LLaMA-3.1-8B-Instruct as the base model,  baseline results in the table are taken from~\cite{lu2024moleculeqadatasetevaluatefactual} and~\cite{edwards2022translationmoleculesnaturallanguage}, respectively.

\begin{table}[ht]
\centering
\caption{Molecular QA results on MoleculeQA benchmark.}
\resizebox{\linewidth}{!}{
\begin{tabular}{lccccc}
\toprule
Method & Structure & Source & Property & Application & Avg \\
\midrule
Llama-2-7B-chat (L-FT) & 28.75 & 39.84 & 31.33 & 27.71 & 31.54 \\
ICL & 35.03 & 27.04 & 24.62 & 28.69 & 28.85 \\
OT-PCA (ours) & 51.32 & 37.66 & 33.71 & 31.02 & 38.43 \\
OT-PCA + ICL (ours) & 50.60 & 43.52 & 23.47 & 29.97 & 36.89 \\
MolT5-base (FT) & 58.01 & 65.85 & 45.14 & 42.24 & 55.39 \\
\bottomrule
\end{tabular}}
\end{table}

\begin{table}[ht]
\centering
\caption{Molecular captioning results on ChEBI-20 dataset.}
\begin{tabular}{lccc}
\toprule
Method & BLEU-4 & ROUGE-1 & ROUGE-L \\
\midrule
ICL & 0.133 & 0.393 & 0.310 \\
OT-PCA (ours) & 0.147 & 0.353 & 0.274 \\
OT-PCA + ICL (ours) & 0.196 & 0.407 & 0.353 \\
MolT5-base (FT) & 0.457 & 0.634 & 0.578 \\
\bottomrule
\end{tabular}
\end{table}

\textbf{Analysis.}  
On the QA benchmark, ICRL significantly outperforms both standard ICL and fine-tuned general-purpose LLMs, showing that OT-aligned embeddings can be directly interpreted and utilized by a text-based LLM. Notably, embedding-only injection sometimes surpasses the combination with textual features, suggesting that the injected representations can provide a clearer and more informative signal than raw SMILES strings for molecular reasoning.

On the captioning task, ICRL also improves over ICL by enhancing the model’s ability to generate semantically relevant outputs, despite the base LLM lacking molecular grounding. However, the performance gap with expert models such as MolT5 remains, indicating that while ICRL extends to generative tasks, domain-specific pretraining is still advantageous in settings requiring highly specialized knowledge.

\subsection{Cross-Modal Generalizability of ICRL}\label{app:crossmodal}

To further examine the generalizability of ICRL beyond molecular and protein datasets, we conducted experiments on vision and audio datasets, which are more distant from the natural language domain. These modalities do not natively conform to sequence-based formats, making ICL particularly challenging.

We evaluated ICRL on four datasets: ImageNet and CIFAR-100 (vision), ESC-50 and VGGSound (audio). Representations were extracted using ViT-B/16 for vision and wav2vec2-base-960h for audio. Due to computational constraints, we randomly selected 50 classes per dataset and followed the default settings described in the main paper. Because LLaMA-3-70B-Instruct does not support direct image/audio input, standard ICL is not applicable. Hence, we use random guessing as a baseline for comparison. All experiments were repeated 10 times with different random seeds, and we report the top-3 classification accuracy averaged across runs.

As shown in Table~\ref{tab:crossmodal}, naive strategies such as random noise or zero padding performed worse than random guessing, suggesting that LLMs cannot interpret these representations. In contrast, OT-based embeddings consistently outperformed random guessing, demonstrating that such embeddings preserve enough structure to be understood by LLMs in a training-free manner.

\begin{table*}[!ht]
\centering
\small
\caption{
Top-3 classification accuracy (\%) on vision and audio datasets using LLaMA-based ICRL. OT-based embeddings clearly outperform baselines such as random noise or zero padding, indicating effective cross-modal generalization.}
\label{tab:crossmodal}
\resizebox{\textwidth}{!}{%
\begin{tabular}{l|ccccc}
\toprule
\textbf{Dataset} & \textbf{Random Guess} & \textbf{Random Noise} & \textbf{Zero Pad} & \textbf{OT-Embed} & \textbf{OT-PCA} \\
\midrule
ImageNet & 2.00 & 1.01 & 0.32 & 14.21 & 17.21 \\
CIFAR-100 & 2.00 & 0.63 & 0.91 & 13.73 & 15.62 \\
ESC-50 & 2.00 & 1.39 & 1.26 & 17.65 & 16.75 \\
VGGSound & 2.00 & 0.76 & 1.44 & 12.39 & 18.23 \\
\bottomrule
\end{tabular}
}
\end{table*}

\subsection{Visualization of attentional weights}\label{app:attention}

Figs.~\ref{fig:rep_icl_attention} and~\ref{fig:icl_attention} present attention-weight heatmaps for the Ran-Pro+ICL and ICL methods, using the 20th attention head of the final layer as an example. 
The results show that the model’s attention is predominantly concentrated on the SMILES string, suggesting that the injected representations are not the primary focus of the model’s learning process. We set the shot count to 5 and batch query size to 1 in this experiment.

\begin{figure}[ht!]
\centering
    \begin{subfigure}[b]{0.43\columnwidth}
        \includegraphics[width=\textwidth]{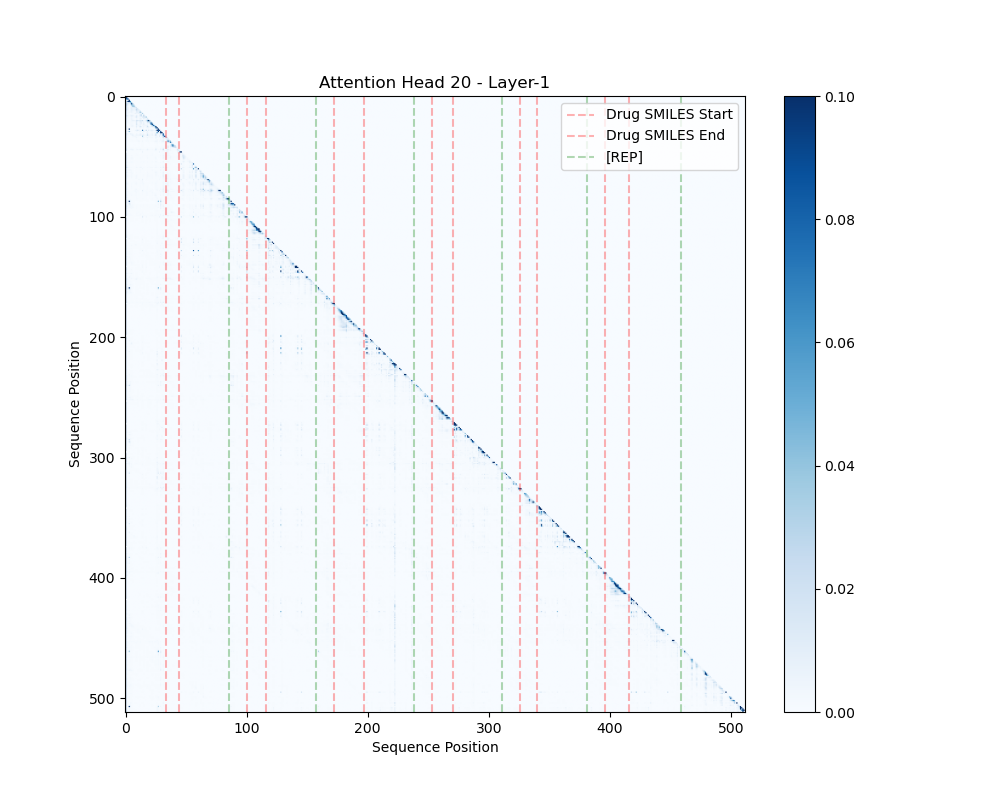}
        \caption{}
        \label{fig:rep_icl_attention}
    \end{subfigure}
    \hspace{2mm}
    \begin{subfigure}[b]{0.43\columnwidth}
        \includegraphics[width=\textwidth]{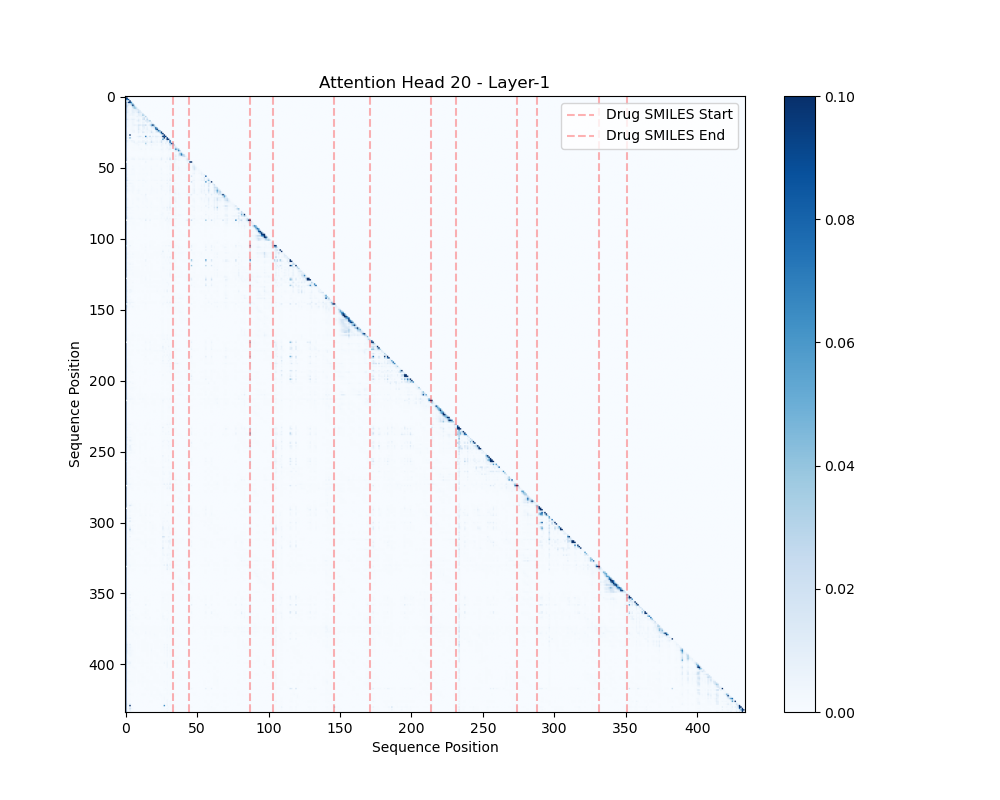}
        \caption{}
        \label{fig:icl_attention}
    \end{subfigure}
    \vspace{-10pt} % Adjust spacing if necessary
    \caption{(a) and (b) are the attention-weight heatmaps for the Ran-Pro+ICL and ICL methods on the ESOL dataset, respectively.}
    \label{fig:attention}
    \vspace{-8.0pt} % Adjust spacing if necessary
\end{figure}

% \begin{figure}[ht!]
% \centering
%     \subfigure[]
%     {\includegraphics[width=0.43\columnwidth]
%     {Figure/appendix/rep_icl_attention_0_head20.png}
%     \label{fig:rep_icl_attention}
%     }
%     \hspace{2mm}
%     \subfigure[]
%     {\includegraphics[width=0.43\columnwidth]
%     {Figure/appendix/icl_attention_0_head20.png}
%     \label{fig:icl_attention}
%     }
%     \vspace{-10pt}
% \caption{(a) and (b) are the attention-weight heatmaps for the Ran-Pro+ICL and ICL methods on the ESOL dataset, respectively.}\label{fig:attention}
% \vspace{-8.0pt}
% \end{figure}

\subsection{Feature extraction for different layers of FM}\label{app_exp1}
In this study, we primarily utilize CLS features derived from the FM. However, previous research has shown that using CLS features may not be the optimal choice for improving downstream task performance~\cite{skean2024doesrepresentationmatterexploring}. As shown in Table~\ref{one_layer}, under the OT-PCA method, most representations from other layers are difficult to interpret, possibly due to their higher similarity. Even the OT method struggles to effectively adjust these representations. However, when combined with ICL, using features from shallower layers typically yields better performance, suggesting that these features exhibit lower similarity to textual embeddings and can serve as more effective identity tokens.

In the ablation study of PCA, we observed that increasing the length of injected representations significantly impacts the performance of ICRL. Therefore, we further explored the effect of this factor on non-CLS features. As shown in Table~\ref{one_layer_repeat}, increasing the length did not lead to significant performance improvements. This indicates that representation length is not a critical factor for embedding-level injection methods. Moreover, longer representations may weaken their role as identity tokens, leading to performance degradation when combined with ICL.

\begin{table*}[!ht]
\centering
\small
\renewcommand{\arraystretch}{1.2}
\caption{
    OT-PCA and OT-PCA+ICL across different layers (one layer) on ESOL. 
    \textbf{Bold} = best value (highest for Pearson/Spearman, lowest for RMSE); \underline{Underline} = second-best.
}\label{one_layer}
\begin{tabular}{llccc}
\toprule
\textbf{Method} & \textbf{Layer} & \textbf{Pearson} & \textbf{Spearman} & \textbf{RMSE} \\ 
\midrule
\multirow{5}{*}{OT-PCA} 
    & 0      & -0.017 ± 1.6e-2 & 0.010 ± 2.1e-2 & 1.432 ± 1.2e-2 \\ 
    & 1      & 0.051 ± 4.0e-4 & 0.102 ± 6.2e-3 & 1.350 ± 1.9e-2 \\ 
    & 5      & 0.078 ± 3.2e-4  & 0.148 ± 3.7e-4  & 1.235 ± 4.1e-4 \\ 
    & 10     & 0.052 ± 4.0e-3 & 0.101 ± 5.9e-3 & 1.336 ± 2.1e-2 \\ 
    & -1     & \underline{0.078 ± 8.6e-4} & \underline{0.122 ± 2.8e-3} & \textbf{1.220 ± 5.7e-3 }\\ 
    & cls     & \textbf{0.227 ± 7.3e-4} & \textbf{0.235 ± 1.5e-4}  & \underline{1.243 ± 4.0e-3}\\ 
  
\midrule
\multirow{5}{*}{OT-PCA+ICL} 
    & 0      & \textbf{0.609 ± 1.0e-3} & \underline{0.595 ± 2.4e-3} & \textbf{0.889 ± 1.3e-3} \\ 
    & 1      & 0.586 ± 3.7e-3 & 0.588 ± 4.7e-3 & 0.901 ± 1.1e-3 \\ 
    & 5      & \underline{0.607 ± 1.3e-3}  & \textbf{0.663 ± 3.9e-3}  & 0.901 ± 1.5e-3 \\ 
    & 10     & 0.602 ± 1.4e-3 & 0.581 ± 3.3e-3 & \underline{0.898 ± 2.8e-5} \\ 
    & -1     & 0.606 ± 3.4e-3 & 0.588 ± 1.2e-3 & 0.898 ± 6.8e-3 \\ 
    & cls    & 0.542 ± 5.4e-4 & 0.552 ± 1.6e-3 & 1.135 ± 2.8e-3 \\ 
\bottomrule
\end{tabular}
\end{table*}

\begin{table*}[!ht]
\centering
\small
\captionsetup{
    justification=centering,
    singlelinecheck=false,
    width=0.95\textwidth
}
\renewcommand{\arraystretch}{1.2}
\caption{
    OT-PCA and OT-PCA+ICL across different layers (1 layer 3 repeat) on ESOL. 
    \textbf{Bold} = best value (highest for Pearson/Spearman, lowest for RMSE); \underline{Underline} = second-best.
}\label{one_layer_repeat}
\begin{tabular}{llccc}
\toprule
\textbf{Method} & \textbf{Layer} & \textbf{Pearson} & \textbf{Spearman} & \textbf{RMSE} \\ 
\midrule
\multirow{5}{*}{OT-PCA} 
    & 0      & 0.075 ± 2.3e-3 & 0.117 ± 4.4e-3 & 1.321 ± 2.5e-2 \\ 
    & 1      & 0.053 ± 5.7e-3 & 0.088 ± 1.1e-2 & 1.331 ± 2.2e-2 \\ 
    & 10     & 0.029 ± 9.6e-3 & 0.081 ± 1.3e-2 & 1.336 ± 2.1e-2 \\ 
    & -1     & \underline{0.114 ± 4.4e-4} & \underline{0.177 ± 6.7e-4} & \underline{1.268 ± 2.4e-2} \\ 
    & cls  & \textbf{0.223 ± 3.1e-4} & \textbf{0.232 ± 8.4e-4} & \textbf{1.210 ± 6.0e-4} \\ 
\midrule
\multirow{5}{*}{OT-PCA+ICL} 
    & 0      & 0.602 ± 1.8e-3 & 0.595 ± 4.4e-3 & 0.894 ± 6.6e-4 \\ 
    & 1      & 0.579 ± 4.0e-3 & 0.582 ± 6.0e-3 & 0.920 ± 2.1e-3 \\ 
    & 10     & \textbf{0.600 ± 1.9e-3} & \textbf{0.608 ± 2.2e-3} & \textbf{0.898 ± 1.3e-3} \\ 
    & -1     & \underline{0.598 ± 1.3e-3} & \underline{0.603 ± 1.9e-3} & \underline{0.892 ± 3.9e-3} \\ 
    & cls  & 0.493 ± 3.8e-3 & 0.527 ± 1.5e-3 & 0.898 ± 2.8e-5 \\ 
\bottomrule
\end{tabular}
\end{table*}

\subsection{Experiments on the protein task and the DTI task}\label{app:DTI}

We also conducted experiments on drug-target interaction (DTI) and protein-related tasks. However, due to the poor performance of standard ICL on these datasets, with Pearson correlation coefficients below 0.2~\cite{ai4science2023impactlargelanguagemodels}.
Specifically, when evaluating the models on a test set consisting of 1,000 samples, we observed that all methods performed close to random guessing. Regardless of whether representations or protein sequences were used, the models failed to learn the corresponding tasks. This can be attributed to the fact that such datasets typically focus on protein sequences within a fixed domain, resulting in high similarity between sequences and their FM-derived representations, with similarity scores approaching 0.99.
To further investigate the performance of ICRL on such datasets, we conducted experiments using a randomly sampled test set of 90 samples for reference purposes.

For the DTI task, due to the limitations of the context window, the number of shots was set to 10, while the other experimental settings remained consistent with those described in the main text. 
When using Pearson r as the evaluation metric, we observed that ICRL still demonstrates relatively strong performance across most datasets, as shown in Tables~\ref{dti1} to~\ref{esm4}, sometimes even surpassing ICL. Additionally, we provide results based on other evaluation metrics for reference. However, it is important to note that these findings are based on small-sample experiments. In larger datasets, the performance of all methods tends to converge and show minimal differences.

\textbf{V2 for protein.}
To address the issue of protein sequence similarity, we conducted a simple method by extracting tokens preceding the CLS token from the ESM model. These tokens, which contain specific information about the protein sequences, were used as input features for ICRL. This approach demonstrated some improvement in small-sample datasets, as shown in Tables~\ref{esm1} and~\ref{esm4}. Unfortunately, we found that it fails to resolve the fundamental issue of random guessing when applied to larger test sets.

\textbf{Effect of different FMs.}
We also compared different protein FMs with varying levels of representation similarity. Using ESM2, which produces highly similar embeddings (average similarity $\sim$0.98), both ICL and ICRL performed poorly and often degenerated to random guessing. In contrast, ProtBert generates more diverse embeddings (average similarity $\sim$0.92), leading to consistently better results on stability and fluorescence prediction (see Tables~\ref{protbert1} and~\ref{protbert2}). These results highlight that the diversity of FM-derived representations is a critical factor for ICRL effectiveness: when embeddings are overly homogeneous, the benefit of representation-based inputs diminishes, whereas more distinguishable embeddings allow ICRL to extract useful signals and surpass standard ICL. Importantly, this observation is consistent with our analyses on small-molecule datasets, indicating that the conclusion is \emph{modality-agnostic}.

\begin{table*}[!ht]
\centering
\small 
\caption{
Pearson comparison \textbf{$\uparrow$} on the different DTI datasets without ICL. \textbf{Bold}/ \underline{Underline}: best/second-best value among the  Embedding Injection methods. *: 1000 test samples.
}\label{dti1}
\renewcommand{\arraystretch}{1.2}
\resizebox{\textwidth}{!}{%
\begin{tabular}{@{}c|lc|cccc@{}}
\toprule
\multirow{2}{*}{\textbf{Dataset}} & \multicolumn{2}{c|}{\textbf{String Injection}} & \multicolumn{4}{c}{\textbf{Embedding Injection}} \\
 & \multicolumn{1}{c}{ICL} & PCA & Random & Rep & Embed+OT & PCA+OT \\ \midrule
BindingDB\_IC50 & 0.149 {\tiny ±6.9e-2} & 0.009 \tiny{±5.4e-4} & \underline{0.092} \tiny{±3.6e-4} & 0.051 \tiny{±1.6e-3} & \textbf{0.177} \tiny{±2.4e-3} & 0.023 \tiny{±1.5e-4} \\
BindingDB\_Ki & 0.172 {\tiny ±8.5e-4} & 0.195 \tiny{±8.2e-4} & 0.170 \tiny{±4.6e-4} & \underline{0.182} \tiny{±3.3e-4} & 0.113 \tiny{±6.1e-4} & \textbf{0.184} \tiny{±7.8e-4} \\
\multicolumn{1}{l|}{BindingDB\_Ki*} & 0.045 {\tiny ± 2.10e-3} & \multicolumn{1}{l|}{-0.004 \tiny{ ± 3.8-4}} & \multicolumn{1}{l}{\textbf{0.084} \tiny{± 1.2e-3}} & \multicolumn{1}{l}{0.010 \tiny{±2.1e-3}} & -0.006  \tiny{±4.6e-4} & \multicolumn{1}{l}{\underline{0.037} \tiny{±2.4e-4}} \\ \bottomrule
\end{tabular}%
}
\end{table*}

\begin{table*}[!ht]
\centering
\small

\caption{
Pearson  \textbf{$\uparrow$} comparison on DTI task datasets. 
\textbf{Bold}/ \underline{Underline}: best/second-best value compared with ICL.(highest for \textit{Ser\_cor}\textbf{$ \uparrow $}), *: 1000 test samples.
}\label{dti2}

\resizebox{\textwidth}{!}{%
\begin{tabular}{@{}c|c|ccccc@{}}
\toprule
\multirow{3}{*}{\textbf{Dataset}} & \textbf{Baseline} & \multicolumn{5}{c}{\textbf{ICRL (Ours)}} \\ \cmidrule(l){2-7} 
 & \textbf{Text} & \multicolumn{1}{c|}{\textbf{Text}} & \multicolumn{4}{c}{\textbf{Embedding}} \\
 & ICL & \multicolumn{1}{c|}{PCA+ICL} & Ran-Noi+ICL & Ran-Pro+ICL & OT-Embed+ICL & OT-PCA+ICL \\ \midrule
BindingDB\_IC50 & 0.149 {\tiny ±6.9e-4} & \multicolumn{1}{c|}{0.124 \tiny{±2.8e-3}} & 0.148 \tiny{±6.5e-3} & 0.081 \tiny{±4.5e-3} & \textbf{0.164} \tiny{±5.7e-3} & \underline{0.163} \tiny{±2.8e-3} \\ [1ex]
BindingDB\_Ki & 0.172 {\tiny ±8.5e-4} & \multicolumn{1}{c|}{0.241 \tiny{±1.1e-3}} & \underline{0.303} \tiny{±7.5e-4} & 0.268 \tiny{±1.3e-3} & 0.263 \tiny{±1.1e-3} & \textbf{0.333} \tiny{±8.9e-4} \\ [1ex]
BindingDB\_Ki* & 0.045 {\tiny ±2.1e-3} & \multicolumn{1}{c|}{0.039 \tiny{±9.8e-4}} & \underline{0.084} \tiny{±1.2e-3} & \textbf{0.088} \tiny{±3.1e-3} & 0.078 \tiny{±7.9e-3} & 0.074  \tiny{±3.9e-7} \\ \bottomrule
\end{tabular}%
}
\end{table*}

\begin{table*}[!ht]
\centering
\small 

\caption{
Pearson comparison \textbf{$\uparrow$} on the different ESM task datasets without ICL.  
\textbf{Bold}/ \underline{Underline}: best/second-best value among the  Embedding Injection methods. OOM: out of memory.}\label{esm1}
\renewcommand{\arraystretch}{1.2}
\resizebox{\textwidth}{!}{%
\begin{tabular}{@{}c|lc|cccc@{}}
\toprule
\multirow{2}{*}{\textbf{Dataset}} & \multicolumn{2}{c|}{\textbf{String Injection}} & \multicolumn{4}{c}{\textbf{Embedding Injection}} \\
 & \multicolumn{1}{c}{ICL} & PCA & Random & Rep & Embed+OT & PCA+OT \\ \midrule
Fluorescence & 0.237 {\tiny ±1.9e-5} & 0.098 \tiny{±5.1e-4} & \underline{0.114} \tiny{±8.4e-5} & 0.078 \tiny{±2.6e-4} & 0.051 \tiny{±4.7e-5} & \textbf{0.154} \tiny{±9.4e-5} \\
Stability & 0.130 {\tiny ±7.3e-4} & 0.057 \tiny{±8.7e-4} & 0.127 \tiny{±1.2e-4} &\textbf{0.172} \tiny{±7.5e-4} & 0.097 \tiny{±2.9e-4} & \underline{0.143} \tiny{±4.9e-4} \\
Stability\_v2 & 0.130 {\tiny ±4.1e-6} & OOM & \multicolumn{1}{l}{\underline{0.187} \tiny{±9.1e-7}} & \multicolumn{1}{l}{\textbf{0.212} \tiny{±7.2e-6}} & 0.173 \tiny{±5.6e-6} & 0.180 \tiny{±7.3e-6} \\ \bottomrule
\end{tabular}%
}
\end{table*}

\begin{table*}[!ht]
\centering
\small

\caption{
Pearson \textbf{$\uparrow$} comparison on ESM task datasets. 
\textbf{Bold}/ \underline{Underline}: best/second-best value compared with ICL. *: 1000 test samples.
OOM: out of memory.}\label{esm4}

\resizebox{\textwidth}{!}{%
\begin{tabular}{@{}c|c|ccccc@{}}
\toprule
\multirow{3}{*}{\textbf{Dataset}} & \textbf{Baseline} & \multicolumn{5}{c}{\textbf{ICRL (Ours)}} \\ \cmidrule(l){2-7} 
 & \textbf{Text} & \multicolumn{1}{c|}{\textbf{Text}} & \multicolumn{4}{c}{\textbf{Embedding}} \\
 & ICL & \multicolumn{1}{c|}{PCA+ICL} & Ran-Noi+ICL & Ran-Pro+ICL & OT-Embed+ICL & OT-PCA+ICL \\ \midrule
Fluorescence & 0.237 {\tiny ±1.9e-5} & \multicolumn{1}{c|}{\textbf{0.164} \tiny{±2.1e-4}} & \underline{0.159} \tiny{±7.5e-5} & 0.129 \tiny{±4.5e-5} & 0.083 \tiny{±5.9e-5} & 0.151 \tiny{±2.4e-5} \\
Stability & 0.130 {\tiny ±7.3e-4} & \multicolumn{1}{c|}{0.133 \tiny{±5.2e-3}} & 0.131 \tiny{±4.4e-4} & \underline{0.138} \tiny{±3.7e-4} & \textbf{0.141} \tiny{±2.3e-4} & 0.094 \tiny{±7.8e-5} \\
Stability\_v2 & 0.130 {\tiny ±4.1e-6} & \multicolumn{1}{c|}{OOM} & 0.111 \tiny{±1.7e-4} & \textbf{0.217} \tiny{±4.6e-4} & 0.177 \tiny{±1.4e-3} & \underline{0.194} \tiny{±2.9e-5} \\ \bottomrule
\end{tabular}%
}
\end{table*}

\begin{table*}[!ht]
\centering
\caption{Protein results using ESM2 as FM encoder (high similarity).}
\label{protbert1}
\begin{tabular}{lccccc}
\toprule
\textbf{ESM2 (sim $\sim$0.98)} & ICL & OT-Embed & OT-PCA & OT-Embed+ICL & OT-PCA+ICL \\
\midrule
Stability     & 0.720 & 0.712 & 0.703 & 0.827 & 0.642 \\
Fluorescence  & 0.995 & 1.322 & 1.222 & 0.997 & 0.987 \\
\bottomrule
\end{tabular}
\end{table*}

\begin{table*}[!ht]
\centering
\caption{Protein results using ProtBert as FM encoder (more diverse embeddings).}
\label{protbert2}
\begin{tabular}{lccccc}
\toprule
\textbf{ProtBert (sim $\sim$0.92)} & ICL & OT-Embed & OT-PCA & OT-Embed+ICL & OT-PCA+ICL \\
\midrule
Stability     & 0.720 & 0.631 {\scriptsize(↓0.081)} & 0.644 {\scriptsize(↓0.059)} & 0.673 {\scriptsize(↓0.154)} & 0.577 {\scriptsize(↓0.065)} \\
Fluorescence  & 0.995 & 1.230 {\scriptsize(↓0.092)} & 1.044 {\scriptsize(↓0.178)} & 0.984 {\scriptsize(↓0.013)} & 0.949 {\scriptsize(↓0.038)} \\
\bottomrule
\end{tabular}
\end{table*}

\subsection{Experiments with Llama-3.2-3B-Instruct}\label{app:3B}

To further investigate the reasons behind the failure of the DTI task, we conducted experiments using smaller models, i.e. Llama-3.2-3B-Instruct. As shown in Table~\ref{tab:3b}, when employing smaller models, even on simpler tasks such as solubility prediction in the ESOL dataset, the standard ICL approach performs poorly. Notably, under these conditions, ICRL outperforms ICL, suggesting that the poor performance of ICRL in the DTI task stems from the model's unfamiliarity with the task, primarily due to insufficient pre-training knowledge.

\begin{table*}[!ht]
\centering
\small 
\caption{
Pearson, Spearman, and RMSE results under the Llama 3B model. 
The symbol * indicates that no normalization was applied. The results demonstrate a clear advantage of ICRL.
}
\label{tab:3b}
\resizebox{\textwidth}{!}{%
\begin{tabular}{c|ccccc}
\toprule
 & ICL & PCA+OT & PCA+OT\_ICL & Rep* & Rep+ICL* \\ \midrule
Pearson & 0.048 {\tiny ±9.9e-5} & 0.100 {\tiny ±9.4e-4} & 0.133 {\tiny ±2.0e-3} & 0.077 {\tiny ±1.4e-3} & 0.140 {\tiny ±6.2e-3} \\
Spearman & 0.045 {\tiny ±5.6e-4} & 0.092 {\tiny ±4.6e-4} & 0.117 {\tiny ±6.9e-3} & 0.070 {\tiny ±3.3e-3} & 0.126 {\tiny ±4.3e-3} \\
RMSE & 1.317 {\tiny ±2.2e-3} & 1.398 {\tiny ±5.5e-3} & 1.341 {\tiny ±2.1e-2} & 1.303 {\tiny ±6.3e-3} & 1.244 {\tiny ±5.8e-3} \\ 
\bottomrule
\end{tabular}
}
\end{table*}

\subsection{Generalization Across LLMs}\label{app:llm_generalization}

To further analyze the generalizability of ICRL across different language model capacities, we evaluated three LLaMA models of varying sizes: 3B, 8B, and 70B, on two molecular property prediction datasets—ESOL and Caco2. Each experiment was repeated 10 times with different random seeds, and we report the average of the top 3 runs for robustness.

As shown in Table~\ref{tab:llm_generalization}, across both datasets and all model sizes, OT-based embeddings were consistently better interpreted and leveraged by the models, particularly by smaller models (e.g., 3B). This may be attributed to their more limited pretraining capacity, which makes external structured representations more useful. Notably, ICRL even surpasses ICL in some scenarios, reinforcing its effectiveness and the dual-mode framework discussed in Sec.~\ref{sec:RQ3}.

\begin{table*}[!ht]
\centering
\small
\caption{
RMSE(↓) results across three LLaMA models on the ESOL and Caco2 datasets. OT-based methods shows consistent improvements over baseline methods, especially in smaller models.
}
\label{tab:llm_generalization}
\resizebox{\textwidth}{!}{%
\begin{tabular}{c|ccccc|ccccc}
\toprule
\multirow{2}{*}{Model Size} & \multicolumn{5}{c|}{\textbf{ESOL}} & \multicolumn{5}{c}{\textbf{Caco2}} \\
 & ICL & Ran-Noi & Zero-Pad & OT-Embed & OT-PCA & ICL & Ran-Noi & Zero-Pad & OT-Embed & OT-PCA \\
\midrule
3B  & 1.313 & 1.817 & 2.013 & \textbf{1.299} & 1.315 & 1.027 & 1.207 & 1.193 & \textbf{0.971} & \textbf{0.965} \\
8B  & \textbf{1.179} & 1.764 & 1.837 & 1.186 & 1.177 & \textbf{0.892} & 1.103 & 1.098 & 0.903 & 0.902 \\
70B & \textbf{1.158} & 1.412 & 1.727 & 1.199 & 1.243 & \textbf{0.832} & 1.026 & 1.037 & 0.899 & 0.898 \\
\bottomrule
\end{tabular}
}
\end{table*}

These additional results confirm that OT-adjusted representations can be effectively interpreted and utilized across a spectrum of LLMs, thereby reinforcing the general utility of our method.

\begin{figure}[ht!]
\centering
    % \subfigure[]
    {\includegraphics[width=0.86\textwidth]
    {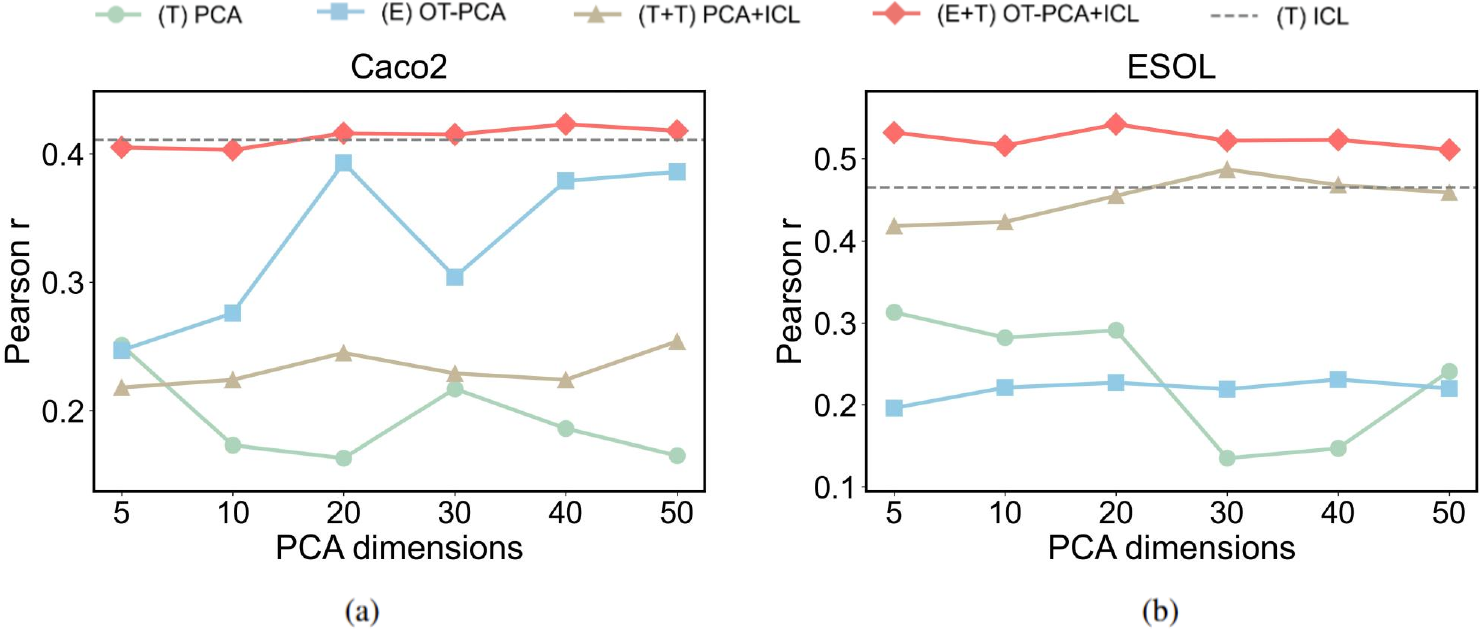}
    }
    \vspace{-10pt}
\caption{(a) and (b) present the performance of various methods under different PCA parameter settings, measured using Pearson correlation (higher is better).}  \label{fig:caco_pca2 & esol_pca2}

\vspace{-8.0pt}
\end{figure}

\subsection{More Analysis about PCA Ablation.}\label{sec:a_pca}
The experimental results in Figs.~\ref{fig:caco_pca2 & esol_pca2}, demonstrate that the performance of the text-level injection method is significantly affected by this parameter, particularly in non-ICL scenarios. 
In contrast, the embedding-level injection methods exhibit apparent stability, regardless of whether they are combined with ICL.

This phenomenon arises because, in text-level injection methods, modifying the PCA dimension directly alters the length of the injected strings, which can substantially impact the final inference results. 
On the other hand, in embedding-level injection methods, this change only introduces minor variations to the target distribution. Consequently, the OT-PCA method remains insensitive to this parameter.

\subsection{More Analysis about Representation Similarity.}\label{app:obe1}
In our experiments, we observed that when representation similarity is excessively high, the decoded representations exhibit nearly identical character compositions. Consequently, the model's output values are also highly similar, differing primarily in sequence and minor variations in decimal places due to the specific representation values. As illustrated in Figs.~\ref{fig:example2} and\ref{fig:example3}, which present examples of the Random Projection method, the model's responses to different queries show only sequential differences.
The high degree of similarity in the representations can be attributed, in part, to the inherent characteristics of FM representations in chemical datasets.
More about the visualization results of the similarity analysis in Sec.~\ref{sec:RQ3} are shown in Figs.~\ref{more_cos} and~\ref{more_cos2}.

This issue is particularly prominent in protein and drug-target interaction datasets, where the similarity values tend to cluster closely together, often reaching cosine similarity levels as high as 0.99.

\begin{figure}[!ht]
    \centering
    % \caption{More visualization results.}
    \begin{minipage}{0.9\textwidth}
        \centering
        \includegraphics[width=\textwidth]{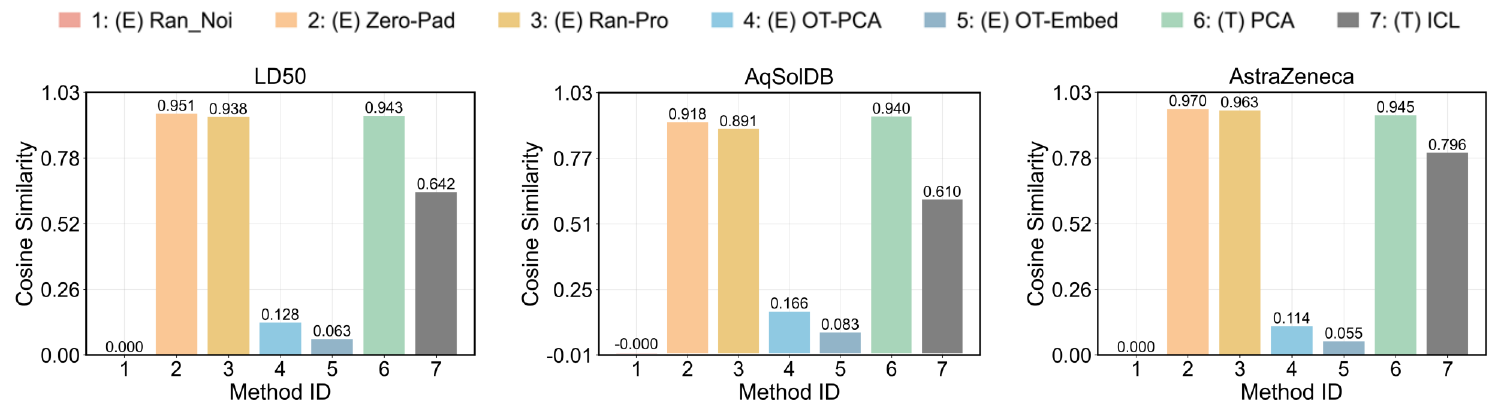} % 替换为你的图片文件名
        \caption{Internal similarity across datasets}\label{more_cos}
    \end{minipage} \hfill
    \begin{minipage}{0.9\textwidth}
        \centering
        \includegraphics[width=\textwidth]{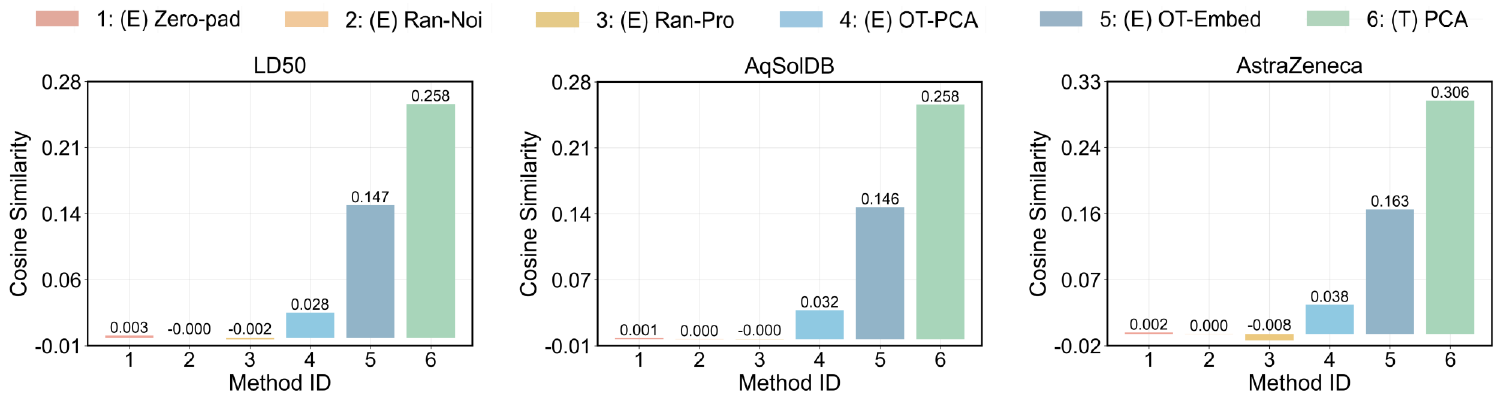} % 替换为你的图片文件名
        \caption{Similarity to text embedding}\label{more_cos2}
    \end{minipage}
    \vspace{0.1cm} % 行间距
    
\end{figure}

% \clearpage
% \newpage

\section{More Details}
Due to the page limitation, we present more details about settings, results, and analysis here.

\subsection{Experimental Setup}\label{app:set1}
\textbf{Normalization.}
First, we compute the overall average mean and variance from the non-zero elements of the embedding matrix of the LLM, ensuring that the statistics are representative of meaningful data. Specifically, we calculate the mean and variance for each embedding vector, then take the average across all embeddings to obtain a single reference mean and variance.
Next, we normalize the input embeddings by adjusting their mean and variance to align with the target values derived from the LLM's embeddings. This is done by:
(1) Calculate the mean and variance of the input embeddings.
(2) Shift the embeddings by subtracting their mean and scaling them to match the target variance.
(3) Finally, add the target mean to the scaled embeddings to ensure statistical consistency with the LLM's embeddings..
This normalization step ensures that the input embeddings are statistically aligned with the LLM’s internal representations, improving compatibility and potentially enhancing model performance.

In our experiments, we observed that applying normalization to representations obtained through random methods occasionally resulted in NaN or Inf values. To simplify the process, we opted to disable normalization across all datasets. 
For the AqSolDB and LD50 Zhu datasets, a similar issue was observed with the OT-Embed method. However, given its relatively low occurrence—appearing in only 2-3 out of 10 random seeds—we conducted non-normalized experiments only for the OT-Embed method on these two datasets. 

It is noteworthy that while normalization was consistently disabled across our experiments, in non-ICL scenarios, omitting normalization generally led to improved performance, particularly for the OT+PCA method. In such cases, the Pearson correlation coefficient improved by approximately 0.1 to 0.2, likely because normalization disrupts the distribution adjustments made by the OT method. Conversely, in ICL-inclusive settings, normalization proved beneficial in mitigating noise in representations, thereby enhancing retrieval performance.

\textbf{System prompt.}
In all experiments presented in the main text, we consistently utilize a simple system prompt: \textit{You are a drug expert. The answer should be different from the examples; DO NOT COPY ANY FLOAT VALUE}.
For illustration, Fig.~\ref{fig:example1} provides an example of a decoded prompt for the OT+PCA+ICL approach.

\subsection{In-Context Example Sampling}\label{app:set0}
For all regression and classification tasks, we adopt a \emph{stratified sampling} strategy to select in-context examples from the training set. Specifically, for a given number of examples $k$, the training data are partitioned into $k$ equal-sized bins according to the label distribution, and one example is randomly drawn from each bin. This ensures that the selected demonstrations cover a diverse range of labels and avoids over-representing particular regions of the label space.  
For question answering and caption generation tasks, we randomly sample $k$ examples uniformly from the training split at each run, following the original benchmark protocols. 
Beyond these default strategies, we also explored more informed selection methods. For example, using \emph{Tanimoto similarity} to choose in-context examples that are closer to the test instance in the molecular embedding space can further improve ICRL performance. As discussed in Sec.~\ref{sec:Result}, many techniques that enhance ICL quality—such as similarity-based sampling—also benefit ICRL in a consistent manner.   

Considering the computational cost of running inference with 10 repeated trials, we further adopt the following rule for constructing candidate pools:  
1. when the training dataset contains fewer than 1000 samples, we directly use the entire training set as the candidate pool;  
2. when the dataset contains more than 1000 samples, we uniformly down-sample to 1000 candidates before applying the sampling procedure.  
This ensures a consistent and tractable evaluation budget across datasets of different sizes.

% \subsection{Experiment Results}

\subsubsection{Differences in different assessment indicators.}\label{app:exp2}
We employed three evaluation metrics: RMSE, Spearman correlation coefficient, and Pearson correlation coefficient. Among these, RMSE is the most sensitive metric, whereas improving the Pearson coefficient poses the greatest challenge. For instance, PCA may achieve a lower RMSE than ICL, yet its Pearson coefficient is lower, indicating that PCA should not be considered superior to ICL. Therefore, we primarily use RMSE to accurately assess performance differences in ICRL, while Pearson correlation is employed to demonstrate ICRL's contribution to enhancing ICL performance.
The complete evaluation results are provided in Tables~\ref{tab:1}–\ref{tab:4} for reference.

\begin{table*}[!ht]
\centering
\small
\caption{
Spearman  \textbf{$\uparrow$} comparison on the different datasets. 
\textbf{Bold}/ \underline{Underline}: best/second-best value compared with ICL.
}\label{tab:1}

\resizebox{\textwidth}{!}{%
\begin{tabular}{@{}c|c|cccccc@{}}
\toprule
\multirow{3}{*}{\textbf{Dataset}} & \textbf{Baseline} & \multicolumn{6}{c}{\textbf{ICRL (Ours)}} \\ \cmidrule(l){2-8} 
 & \textbf{Text} & \multicolumn{1}{c|}{\textbf{Text}} & \multicolumn{5}{c}{\textbf{Embedding}} \\
 & ICL & \multicolumn{1}{c|}{PCA+ICL} & Zero-Pad+ICL & Ran-Noi+ICL & Ran-Pro+ICL & OT-Embed+ICL & OT-PCA+ICL \\ \midrule
ESOL & 0.464 {\tiny ±2.5e-3} & \multicolumn{1}{c|}{0.460 \tiny{±7.4e-4}} & 0.554 \tiny{±5.2e-4} & \textbf{0.562} \tiny{±5.0e-4} & 0.526 \tiny{±1.4e-4} & 0.522 \tiny{±8.7e-4} & \underline{0.552} \tiny{±1.6e-3} \\ [1ex]
Caco2\_Wang & 0.416 {\tiny ±2.3e-3} & \multicolumn{1}{c|}{0.387 \tiny{±1.8e-4}} & 0.407 \tiny{±3.8e-4} & \textbf{0.426} \tiny{±3.5e-4} & 0.410 \tiny{±1.3e-5} & 0.409 \tiny{±3.3e-3} & 0.401 \tiny{±1.4e-3} \\ [1ex]
AqSolDB & 0.610 {\tiny ±7.5e-5} & \multicolumn{1}{c|}{0.551 \tiny{±5.0e-4}} & \textbf{0.612} \tiny{±2.5e-5} & 0.594 \tiny{±3.6e-5} & \underline{0.599} \tiny{±1.1e-4} & 0.586 \tiny{±1.0e-4} & 0.592 \tiny{±9.4e-5} \\ [1ex]
LD50\_Zhu & 0.395 {\tiny ±8.3e-5} & \multicolumn{1}{c|}{0.382 \tiny{±5.0e-4}} & \textbf{0.411} \tiny{±2.3e-5} & 0.399 \tiny{±3.3e-6} & \underline{0.408} \tiny{±1.2e-4} & 0.382 \tiny{±2.3e-5} & 0.380 \tiny{±1.4e-4} \\ [1ex]
AstraZeneca & 0.233 {\tiny ±2.3e-4} & \multicolumn{1}{c|}{0.189 \tiny{±1.4e-5}} & 0.239 \tiny{±2.8e-5} & 0.235 \tiny{±6.6e-6} & 0.234 \tiny{±1.6e-5} & \textbf{0.250} \tiny{±2.8e-4} & \underline{0.244} \tiny{±1.9e-4} \\ \bottomrule
\end{tabular}%
}
\end{table*}

\begin{table*}[!ht]
\centering
\small % 统一字体大小
\caption{
RMSE \textbf{$\downarrow$} comparison on the different datasets. 
\textbf{Bold}/ \underline{Underline}: best/second-best value compared with ICL (lowest for \textit{RMSE}\textbf{$ \downarrow $}).
}\label{tab:2}

\resizebox{\textwidth}{!}{%
\begin{tabular}{@{}c|c|cccccc@{}}
\toprule
\multirow{3}{*}{\textbf{Dataset}} & \textbf{Baseline} & \multicolumn{6}{c}{\textbf{ICRL (Ours)}} \\ \cmidrule(l){2-8} 
 & \textbf{Text} & \multicolumn{1}{c|}{\textbf{Text}} & \multicolumn{5}{c}{\textbf{Embedding}} \\
 & ICL & \multicolumn{1}{c|}{PCA+ICL} & Zero-Pad+ICL & Ran-Noi+ICL & Ran-Pro+ICL & OT-Embed+ICL & OT-PCA+ICL \\ \midrule
ESOL & 1.158 {\tiny ±1.9e-2} & \multicolumn{1}{c|}{1.135 \tiny{±2.8e-3}} & \underline{1.085} \tiny{±4.9e-3} & 1.152 \tiny{±6.5e-3} & \textbf{1.037} \tiny{±4.5e-3} & 1.154 \tiny{±5.7e-3} & 1.135 \tiny{±2.8e-3} \\ [1ex]
Caco2\_Wang & 0.832 {\tiny ±8.4e-4} & \multicolumn{1}{c|}{0.842 \tiny{±1.1e-3}} & \underline{0.830} \tiny{±8.8e-4} & 0.841 \tiny{±7.5e-4} & 0.839 \tiny{±1.3e-3} & \textbf{0.815} \tiny{±1.1e-3} & 0.888 \tiny{±8.9e-4} \\ [1ex]
AqSolDB & 1.917 {\tiny ±3.9e-4} & \multicolumn{1}{c|}{2.029 \tiny{±3.3e-3}} & \textbf{1.910} \tiny{±5.2e-4} & 1.944 \tiny{±4.1e-4} & \underline{1.941} \tiny{±3.0e-4} & 1.963 \tiny{±1.8e-3} & 1.986 \tiny{±1.1e-4} \\ [1ex]
LD50\_Zhu & 0.986 {\tiny ±1.7e-4} & \multicolumn{1}{c|}{0.998 \tiny{±2.8e-4}} & \textbf{0.966} \tiny{±2.0e-4} & \underline{0.984} \tiny{±1.6e-4} & 0.995 \tiny{±7.7e-4} & 1.002 \tiny{±4.6e-4} & 1.004 \tiny{±1.7e-4} \\ [1ex]
AstraZeneca & 1.366 {\tiny ±2.4e-3} & \multicolumn{1}{c|}{\textbf{1.319} \tiny{±9.2e-4}} & \underline{1.353} \tiny{±2.2e-4} & 1.399 \tiny{±1.1e-3} & 1.372 \tiny{±2.3e-3} & 1.372 \tiny{±1.2e-3} & 1.395 \tiny{±4.2e-3} \\ \bottomrule
\end{tabular}%
}
\end{table*}
\begin{table*}[!ht]
\centering
\small % 统一字体大小
\renewcommand{\arraystretch}{1.2}
\caption{
Pearson comparison \textbf{$\uparrow$} on the different datasets without ICL. \textbf{Bold}/ \underline{Underline}: best/second-best value among the  Embedding Injection methods.
}\label{tab:3}

\resizebox{\textwidth}{!}{%
\begin{tabular}{@{}c|lc|ccccc@{}}
\toprule
\multirow{2}{*}{\textbf{Dataset}} & \multicolumn{2}{c|}{\textbf{Text Injection}} & \multicolumn{5}{c}{\textbf{Embedding Injection}} \\
 & \multicolumn{1}{c}{ICL} & PCA & Zero-Pad & Ran-Noi & Ran-Pro & OT-Embed & OT-PCA \\ \midrule
ESOL & 0.465 {\tiny ±9.2e-4} & 0.291 \tiny{±1.1e-3} & 0.155 \tiny{±3.7e-3} & 0.123 \tiny{±2.6e-4} & 0.125 \tiny{±1.3e-3} & \textbf{0.270} \tiny{±8.2e-3} & \underline{0.227} \tiny{±7.3e-4} \\
Caco2\_Wang & 0.411 {\tiny ±1.3e-3} & 0.163 \tiny{±1.5e-4} & 0.122 \tiny{±1.2e-3} & 0.114 \tiny{±1.3e-4} & 0.113 \tiny{±2.6e-4} & \underline{0.226} \tiny{±1.3e-3} & \textbf{0.245} \tiny{±4.8e-4} \\
AqSolDB & 0.596 {\tiny ±5.2e-5} & 0.075 \tiny{±3.6e-3} & 0.027 \tiny{±3.2e-4} & -0.026 \tiny{±8.0e-4} & 0.035 \tiny{±5.4e-4} & \underline{0.115} \tiny{±6.4e-3} & \textbf{0.215} \tiny{±2.2e-3} \\
LD50\_Zhu & 0.378 {\tiny ±1.2e-5} & 0.128 \tiny{±1.6e-4} & 0.047 \tiny{±9.0e-5} & 0.029 \tiny{±1.2e-4} & 0.026 \tiny{±1.3e-5} & \underline{0.064} \tiny{±5.6e-4} & \textbf{0.079} \tiny{±9.9e-5} \\
AstraZeneca & 0.266 {\tiny ±2.3e-5} & 0.027 \tiny{±4.9e-4} & 0.041 \tiny{±2.8e-5} & \underline{0.046} \tiny{±6.7e-5} & \textbf{0.049} \tiny{±3.3e-5} & 0.018 \tiny{±8.4e-5} & 0.043 \tiny{±1.6e-4} \\ \bottomrule
\end{tabular}%
}
\end{table*}

\begin{table*}[!ht]
\centering
\small 
\caption{
Spearman comparison \textbf{$\uparrow$} on the different datasets without ICL. \textbf{Bold}/ \underline{Underline}: best/second-best value among the  Embedding Injection methods.
}\label{tab:4}
\renewcommand{\arraystretch}{1.2}
\resizebox{\textwidth}{!}{%
\begin{tabular}{@{}c|lc|ccccc@{}}
\toprule
\multirow{2}{*}{\textbf{Dataset}} & \multicolumn{2}{c|}{\textbf{String Injection}} & \multicolumn{5}{c}{\textbf{Embedding Injection}} \\
 & \multicolumn{1}{c}{ICL} & PCA & Zero-Pad & Ran-Noi & Ran-Pro & OT-Embed & OT-PCA \\ \midrule
ESOL & 0.464 {\tiny ±2.5e-3} & 0.299 \tiny{±8.4e-4} & 0.177 \tiny{±8.8e-3} & 0.127 \tiny{±3.6e-4} & 0.151 \tiny{±1.3e-3} & \underline{0.231} \tiny{±6.0e-3} & \textbf{0.235} \tiny{±1.5e-4} \\
Caco2\_Wang & 0.416 {\tiny ±2.3e-3} & 0.175 \tiny{±8.2e-7} & 0.124 \tiny{±1.4e-3} & 0.113 \tiny{±2.6e-4} & 0.107 \tiny{±3.8e-4} & \textbf{0.245} \tiny{±6.0e-4} & \underline{0.240} \tiny{±2.8e-4} \\
AqSolDB & 0.610 {\tiny ±7.5e-5} & 0.056 \tiny{±3.8e-3} & 0.025 \tiny{±3.2e-4} & -0.026 \tiny{±1.6e-3} & 0.045 \tiny{±3.2e-4} & \underline{0.116} \tiny{±4.9e-3} & \textbf{0.203} \tiny{±1.3e-3} \\
LD50\_Zhu & 0.395 {\tiny ±8.3e-5} & 0.165 \tiny{2.5e-4} & 0.054 \tiny{±9.9e-6} & 0.039 \tiny{±1.8e-4} & 0.027 \tiny{±2.6e-5} & \underline{0.090} \tiny{±1.6e-4} & \textbf{0.091} \tiny{±1.4e-4} \\
AstraZeneca & 0.233 {\tiny ±2.3e-4} & 0.028 \tiny{±8.8e-4} & 0.035 \tiny{±3.2e-4} & \textbf{0.041} \tiny{±2.7e-4} & 0.040 \tiny{±2.0e-5} & 0.015 \tiny{±7.5e-5} & \underline{0.040} \tiny{±1.8e-4} \\ \bottomrule
\end{tabular}%
}
\end{table*}

% ============================================================DTI Table without ICL==========================================================================

% ============================================================ESM Table with ICL==========================================================================

\begin{table*}[!ht]
\centering
\small 
\caption{
Spearman comparison \textbf{$\uparrow$} on the different DTI datasets without ICL. \textbf{Bold}/ \underline{Underline}: best/second-best value among the  Embedding Injection methods. *: 1000 test samples.
}\label{dti3}
\renewcommand{\arraystretch}{1.2}
\resizebox{\textwidth}{!}{%
\begin{tabular}{@{}c|lc|cccc@{}}
\toprule
\multirow{2}{*}{\textbf{Dataset}} & \multicolumn{2}{c|}{\textbf{String Injection}} & \multicolumn{4}{c}{\textbf{Embedding Injection}} \\
 & \multicolumn{1}{c}{ICL} & PCA & Random & Rep & Embed+OT & PCA+OT \\ \midrule
BindingDB\_IC50 & 0.156 {\tiny ±1.4e-2} & 0.001 \tiny{±5.3e-4} & \underline{0.080} \tiny{±1.5e-4} & 0.035 \tiny{±6.6e-3} & \textbf{0.156} \tiny{±2.1e-3} & 0.035 \tiny{±1.5e-4} \\
BindingDB\_Ki & 0.152 {\tiny ±3.5e-4} & 0.185 \tiny{±8.2e-4} & \textbf{0.184} \tiny{±1.1e-4} & 0.157 \tiny{±1.7e-4} & 0.104 \tiny{±2.0e-4} & \underline{0.172} \tiny{±1.9e-4} \\
\multicolumn{1}{l|}{BindingDB\_Ki*} & 0.030 {\tiny ± 1.7e-3} & \multicolumn{1}{l|}{-0.004 \tiny{±2.8e-4}} & \multicolumn{1}{l}{\textbf{0.064} \tiny{1.2e-3}} & \multicolumn{1}{l}{0.007 \tiny{± 1.7e-3}} & 0.002 \tiny{±1.6e-4} & \underline{0.031} \tiny{±3.2e-4} \\ \bottomrule
\end{tabular}%
}
\end{table*}

\begin{table*}[!ht]
\centering
\small 

\caption{
RMSE comparison \textbf{$\downarrow$} on the different DTI datasets without ICL. \textbf{Bold}/ \underline{Underline}: best/second-best value among the  Embedding Injection methods.*: 1000 test samples.
}\label{dti4}
\renewcommand{\arraystretch}{1.2}
\resizebox{\textwidth}{!}{%
\begin{tabular}{@{}c|lc|cccc@{}}
\toprule
\multirow{2}{*}{\textbf{Dataset}} & \multicolumn{2}{c|}{\textbf{String Injection}} & \multicolumn{4}{c}{\textbf{Embedding Injection}} \\
 & \multicolumn{1}{c}{ICL} & PCA & Random & Rep & Embed+OT & PCA+OT \\ \midrule
BindingDB\_IC50 & 1.740 {\tiny ±1.7e-3} & 1.659 \tiny{±5.6e-4} & 1.767 \tiny{±8.5e-4} & 1.754 \tiny{±2.4e-3} & \textbf{1.726} \tiny{±4.5e-4} & \underline{1.743} \tiny{±4.4e-4} \\
BindingDB\_Ki & 1.575 {\tiny ±3.3e-3} & 1.555 \tiny{±7.2e-3} & 1.665 \tiny{±4.2e-4} & \textbf{1.578} \tiny{±1.3e-4} & 1.678 \tiny{±2.3e-4} & \underline{1.655} \tiny{±4.9e-4} \\
\multicolumn{1}{l|}{BindingDB\_Ki*} & 1.631 {\tiny ±2.1e-3} & \multicolumn{1}{l|}{1.646  \tiny{±2.8e-4}} & \multicolumn{1}{l}{\textbf{1.612} \tiny{±9.2e-3}} & \multicolumn{1}{l}{1.683 \tiny{±7.2e-3}} & 1.716 \tiny{±5.0e-4} & \underline{1.675} \tiny{±3.8e-4} \\ \bottomrule
\end{tabular}%
}
\end{table*}

% ============================================================DTI Table with ICL==========================================================================

\begin{table*}[!ht]
\centering
\small

\caption{
Spearman \textbf{$\uparrow$} comparison on DTI task datasets. 
\textbf{Bold}/ \underline{Underline}: best/second-best value compared with ICL, *: 1000 test samples.
}\label{dti5}

\resizebox{\textwidth}{!}{%
\begin{tabular}{@{}c|c|ccccc@{}}
\toprule
\multirow{3}{*}{\textbf{Dataset}} & \textbf{Baseline} & \multicolumn{5}{c}{\textbf{ICRL (Ours)}} \\ \cmidrule(l){2-7} 
 & \textbf{Text} & \multicolumn{1}{c|}{\textbf{Text}} & \multicolumn{4}{c}{\textbf{Embedding}} \\
 & ICL & \multicolumn{1}{c|}{PCA+ICL} & Ran-Noi+ICL & Ran-Pro+ICL & OT-Embed+ICL & OT-PCA+ICL \\ \midrule
BindingDB\_IC50 & 0.156 {\tiny ±1.4e-4} & \multicolumn{1}{c|}{0.120 \tiny{±2.8e-4}} & 0.133 \tiny{±4.5e-5} & 0.057 \tiny{±4.7e-4} & \underline{0.158} \tiny{±4.9e-4} & \textbf{0.163} \tiny{±2.8e-5} \\ [1ex]
BindingDB\_Ki & 0.152 {\tiny ±3.5e-5} & \multicolumn{1}{c|}{0.200 \tiny{±1.3e-3}} & \underline{0.267} \tiny{±8.5e-4} & 0.223 \tiny{±1.7e-4} & 0.229 \tiny{±2.9e-4} & \textbf{0.310} \tiny{±2.8e-5} \\ [1ex]
BindingDB\_Ki* & 0.030 {\tiny ±1.7e-3} & \multicolumn{1}{c|}{0.017 \tiny{±5.6e-4}} & \textbf{0.070} \tiny{±8.8e-4} & \underline{0.069} \tiny{±8.2e-5} & 0.064 \tiny{±4.6e-5} & 0.055  \tiny{±4.2e-5} \\ \bottomrule
\end{tabular}%
}
\end{table*}

\begin{table*}[!ht]
\centering
\small

\caption{
RMSE \textbf{$\downarrow$} comparison on DTI task datasets. 
\textbf{Bold}/ \underline{Underline}: best/second-best value compared with ICL, *: 1000 test samples.
}\label{dti6}

\resizebox{\textwidth}{!}{%
\begin{tabular}{@{}c|c|ccccc@{}}
\toprule
\multirow{3}{*}{\textbf{Dataset}} & \textbf{Baseline} & \multicolumn{5}{c}{\textbf{ICRL (Ours)}} \\ \cmidrule(l){2-7} 
 & \textbf{Text} & \multicolumn{1}{c|}{\textbf{Text}} & \multicolumn{4}{c}{\textbf{Embedding}} \\
 & ICL & \multicolumn{1}{c|}{PCA+ICL} & Ran-Noi+ICL & Ran-Pro+ICL & OT-Embed+ICL & OT-PCA+ICL \\ \midrule
BindingDB\_IC50 & 1.740 {\tiny±1.7e-3} & \multicolumn{1}{c|}{1.739 \tiny{±2.8e-4}} & \underline{1.630} \tiny{±4.5e-5} & \textbf{1.637} \tiny{±4.7e-4} & 1.615 \tiny{±4.9e-4} & 1.748 \tiny{±2.8e-5} \\ [1ex]
BindingDB\_Ki & 1.575 {\tiny ±3.3e-3} & \multicolumn{1}{c|}{1.572 \tiny{±1.3e-3}} & \underline{1.486} \tiny{±8.5e-4} & 1.544 \tiny{±1.7e-4} & 1.524 \tiny{±2.9e-4} & \textbf{1.465} \tiny{±2.8e-5} \\ [1ex]
BindingDB\_Ki* & 1.631 {\tiny ±2.1e-3} & \multicolumn{1}{c|}{1.679 \tiny{±2.3e-3}} & \underline{1.645 }\tiny{±1.6e-3} & \textbf{1.617} \tiny{±4.6e-4} & 1.659  \tiny{±1.5e-3} & 1.675  \tiny{±2.5e-5} \\ \bottomrule
\end{tabular}%
}
\end{table*}

\begin{table*}[!ht]
\centering
\small 

\caption{
Spearman \textbf{$\uparrow$} comparison on the different ESM datasets without ICL. \textbf{Bold}/ \underline{Underline}: best/second-best value among the  Embedding Injection methods. OOM: out of memory.
}\label{esm2}
\renewcommand{\arraystretch}{1.2}
\resizebox{\textwidth}{!}{%
\begin{tabular}{@{}c|lc|cccc@{}}
\toprule
\multirow{2}{*}{\textbf{Dataset}} & \multicolumn{2}{c|}{\textbf{String Injection}} & \multicolumn{4}{c}{\textbf{Embedding Injection}} \\
 & \multicolumn{1}{c}{ICL} & PCA & Random & Rep & Embed+OT & PCA+OT \\ \midrule
Fluorescence & 0.204 {\tiny ±1.7e-4} & 0.046 \tiny{±5.4e-3} & \underline{0.132} \tiny{±8.7e-4} & 0.083 \tiny{±4.1e-4} & 0.096 \tiny{±7.7e-3} & \textbf{0.159} \tiny{±9.5e-4} \\
Stability & 0.133 {\tiny ±7.7e-3} & 0.056 \tiny{±4.7e-4} & \underline{0.170} \tiny{±1.3e-3} & \textbf{0.199} \tiny{±8.4e-4} & 0.095\tiny{±2.8e-3} & 0.149 \tiny{±7.9e-4} \\
Stability\_v2 & 0.133 {\tiny ±4.2e-5} & OOM & \multicolumn{1}{l}{0.163 \tiny{±8.2e-6}} & \multicolumn{1}{l}{\textbf{0.187} \tiny{±7.7e-6}} & \underline{0.170} \tiny{±4.6e-6} & 0.166 \tiny{±6.3e-6} \\ \bottomrule
\end{tabular}%
}
\end{table*}

\begin{table*}[!ht]
\centering
\small 

\caption{
RMSE comparison\textbf{$\downarrow$} on the different ESM datasets without ICL. \textbf{Bold}/ \underline{Underline}: best/second-best value among the  Embedding Injection methods. OOM: out of memory.
}\label{esm3}
\renewcommand{\arraystretch}{1.2}
\resizebox{\textwidth}{!}{%
\begin{tabular}{@{}c|lc|cccc@{}}
\toprule
\multirow{2}{*}{\textbf{Dataset}} & \multicolumn{2}{c|}{\textbf{String Injection}} & \multicolumn{4}{c}{\textbf{Embedding Injection}} \\
 & \multicolumn{1}{c}{ICL} & PCA & Random & Rep & Embed+OT & PCA+OT \\ \midrule
Fluorescence & 1.213 {\tiny ±2.7e-4} & 0.046 \tiny{±7.4e-3} & \textbf{1.429} \tiny{±8.1e-4} & 1.575 \tiny{±1.6e-4} & 1.613 \tiny{±3.2e-3} & \underline{1.523} \tiny{±6.7e-4} \\
Stability & 0.735 {\tiny ±7.9e-3} & 0.649 \tiny{±2.7e-4} & 1.009 \tiny{±1.6e-3} & 0.973 \tiny{±8.8e-4} & \textbf{0.894} \tiny{±6.5e-3} & \underline{0.952} \tiny{±4.1e-4} \\
Stability\_v2 & 0.735 {\tiny ±7.2e-5} & OOM & \multicolumn{1}{l}{0.894 \tiny{±7.7e-6}} & \multicolumn{1}{l}{\underline{0.878} \tiny{±7.7e-6}} & 0.898 \tiny{±4.9e-6} & \textbf{0.862} \tiny{±5.8e-6} \\ \bottomrule
\end{tabular}%
}
\end{table*}

% ============================================================ESM Table with ICL==========================================================================

\begin{table*}[!ht]
\centering
\small

\caption{
Spearman \textbf{$\uparrow$} comparison on ESM task datasets. 
\textbf{Bold}/ \underline{Underline}: best/second-best value compared with ICL.  *: 1000 test samples.
OOM: out of memory.}\label{esm5}

\resizebox{\textwidth}{!}{%
\begin{tabular}{@{}c|c|ccccc@{}}
\toprule
\multirow{3}{*}{\textbf{Dataset}} & \textbf{Baseline} & \multicolumn{5}{c}{\textbf{ICRL (Ours)}} \\ \cmidrule(l){2-7} 
 & \textbf{Text} & \multicolumn{1}{c|}{\textbf{Text}} & \multicolumn{4}{c}{\textbf{Embedding}} \\
 & ICL & \multicolumn{1}{c|}{PCA+ICL} & Ran-Noi+ICL & Ran-Pro+ICL & OT-Embed+ICL & OT-PCA+ICL \\ \midrule
Fluorescence & 0.204 {\tiny ±1.7e-4} & \multicolumn{1}{c|}{\underline{0.137} \tiny{±4.1e-4}} & 0.129 \tiny{±7.0e-5} & 0.061 \tiny{±6.5e-5} & 0.110 \tiny{±7.4e-5} & \textbf{0.138} \tiny{±3.7e-5} \\
Stability & 0.133 {\tiny ±7.9e-3} & \multicolumn{1}{c|}{\underline{0.146} \tiny{±5.7e-5}} & 0.125 \tiny{±4.1e-4} & \textbf{0.153} \tiny{±6.7e-4} & 0.142 \tiny{±2.1e-4} & 0.086 \tiny{±7.7e-5} \\
Stability\_v2 & 0.133 {\tiny ±4.2e-5} & \multicolumn{1}{c|}{OOM} & 0.111 \tiny{±1.9e-4} & \textbf{0.192} \tiny{±4.1e-4} & 0.172 \tiny{±4.4e-3} & \underline{0.184} \tiny{±8.9e-5} \\ \bottomrule
\end{tabular}%
}
\end{table*}

\begin{table*}[!ht]
\centering
\small 

\caption{
RMSE comparison \textbf{$\downarrow$} on the different ESM datasets without ICL. \textbf{Bold}/ \underline{Underline}: best/second-best value compared with ICL. OOM: out of memory
}\label{esm6}
\renewcommand{\arraystretch}{1.2}
\resizebox{\textwidth}{!}{%
\begin{tabular}{@{}c|c|ccccc@{}}
\toprule
\multirow{3}{*}{\textbf{Dataset}} & \textbf{Baseline} & \multicolumn{5}{c}{\textbf{ICRL (Ours)}} \\ \cmidrule(l){2-7} 
 & \textbf{Text} & \multicolumn{1}{c|}{\textbf{Text}} & \multicolumn{4}{c}{\textbf{Embedding}} \\
 & ICL & \multicolumn{1}{c|}{PCA+ICL} & Ran-Noi+ICL & Ran-Pro+ICL & OT-Embed+ICL & OT-PCA+ICL \\ \midrule
Fluorescence & 1.213 {\tiny ±2.7e-4} & \multicolumn{1}{c|}{1.462 \tiny{±7.1e-3}} & \underline{1.407} \tiny{±7.0e-3} & 1.467 \tiny{±5.5e-4} & 1.479 \tiny{±7.9e-4} & \textbf{1.370} \tiny{±6.7e-4} \\
Stability & 0.735 {\tiny ±7.9e-5} & \multicolumn{1}{c|}{0.848 \tiny{±7.7e-5}} & \underline{0.770} \tiny{±4.8e-6} & 0.780 \tiny{±6.1e-4} & \textbf{0.716} \tiny{±2.3e-4} & 0.848 \tiny{±7.9e-4} \\
Stability\_v2 & 0.735 {\tiny ±7.2e-5} & \multicolumn{1}{c|}{OOM} & \textbf{0.719}  \tiny{±2.8e-5} & \underline{0.720} \tiny{±5.1e-5} & 0.738 \tiny{±4.7e-5} & 0.780 \tiny{±8.4e-5} \\ \bottomrule
\end{tabular}%
}
\end{table*}

% ======================================================

\newpage

\begin{figure}[ht]
\centering
    % \subfigure[]
    {\includegraphics[width=0.9\columnwidth]
    {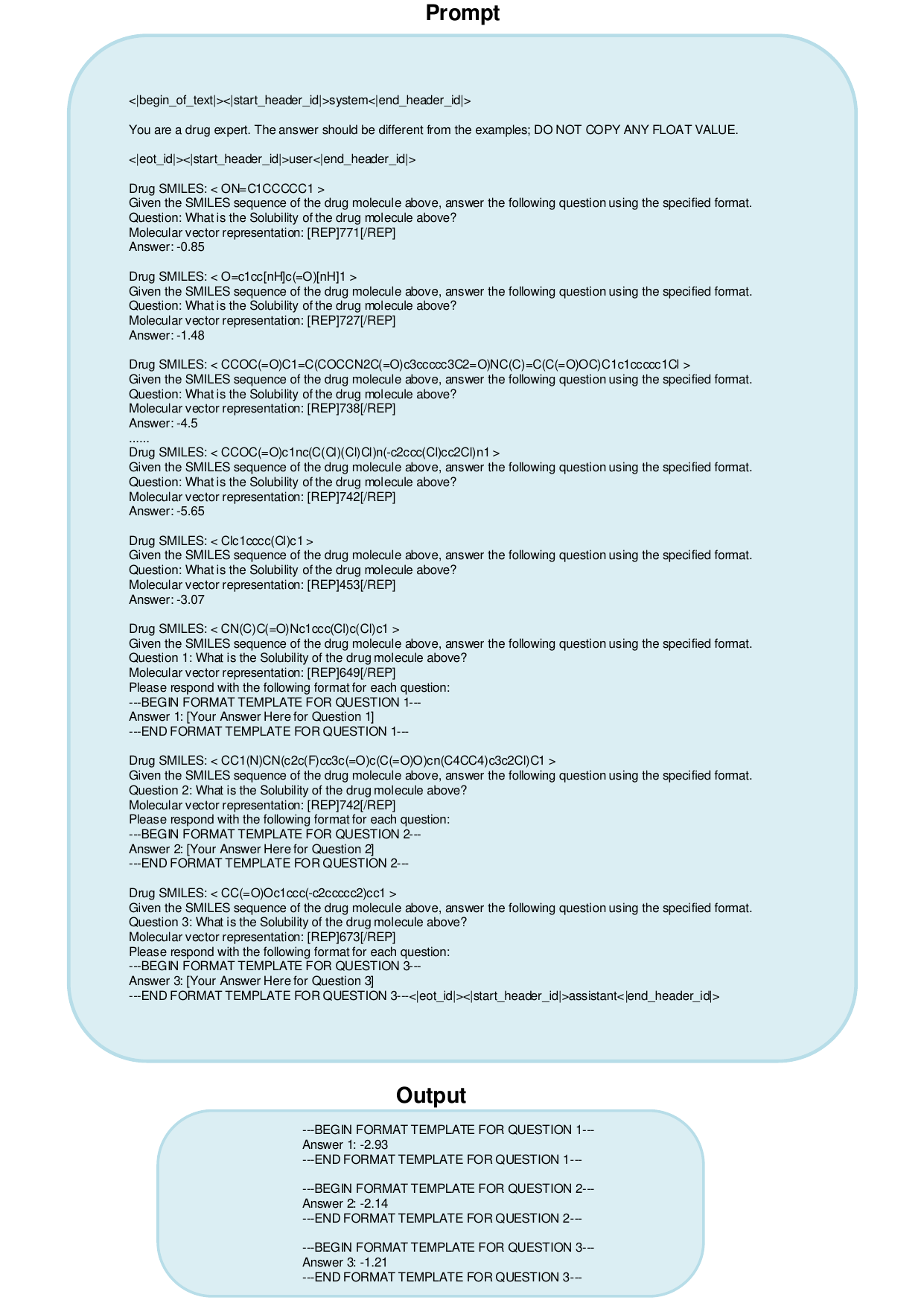}
    }
    \vspace{-10pt}
\caption{Decoded prompt example and output for ESOL task.}\label{fig:example1}
\vspace{-8.0pt}
\end{figure}

\begin{figure}[t]
\centering
    % \subfigure[]
    {\includegraphics[width=0.9\columnwidth]
    {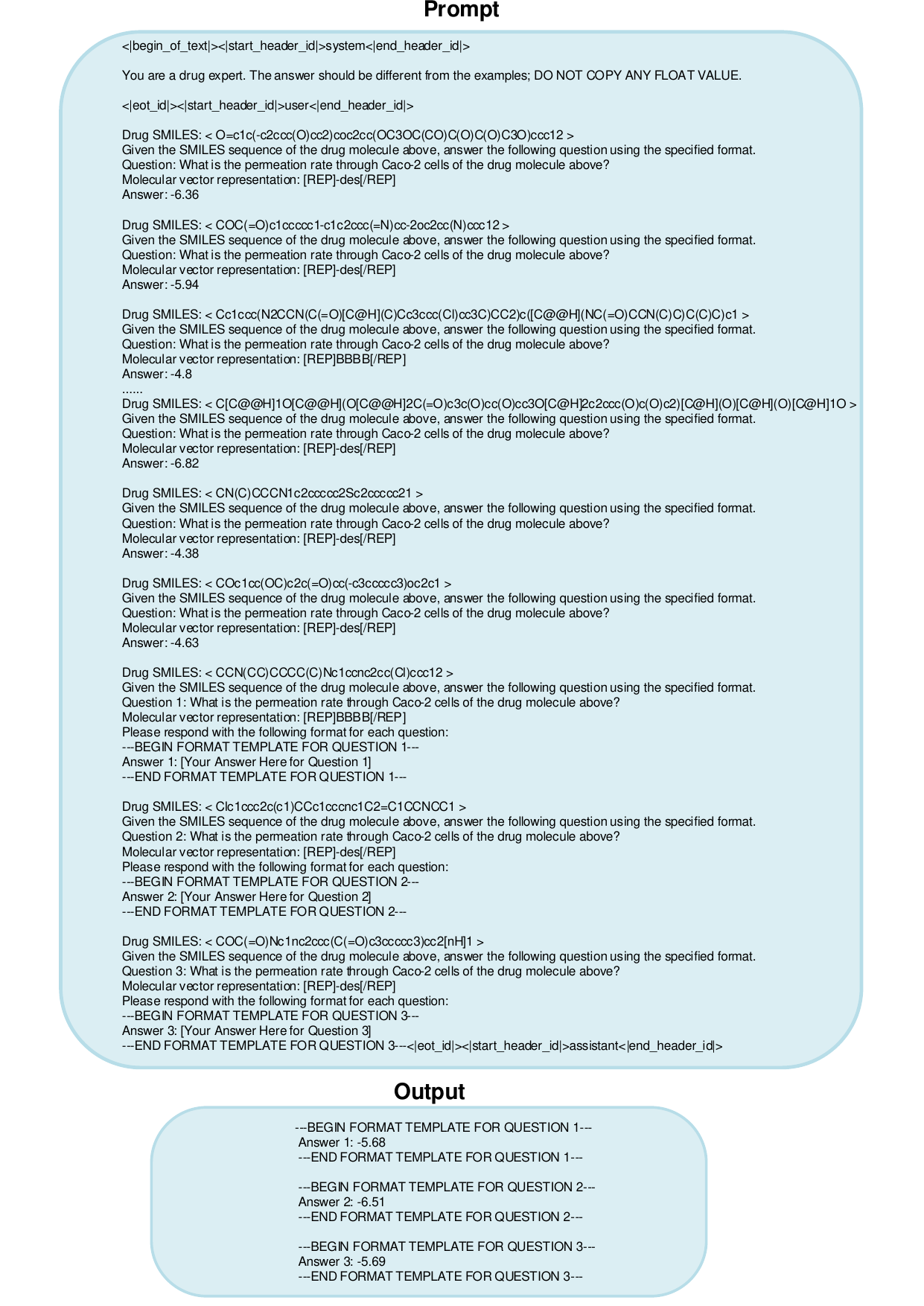}
    }
    \vspace{-10pt}
\caption{Decoded prompt example and output for Caco2 task.}\label{fig:example2}
\vspace{-8.0pt}
\end{figure}

\begin{figure}[t]
\centering
    % \subfigure[]
    {\includegraphics[width=0.9\columnwidth]
    {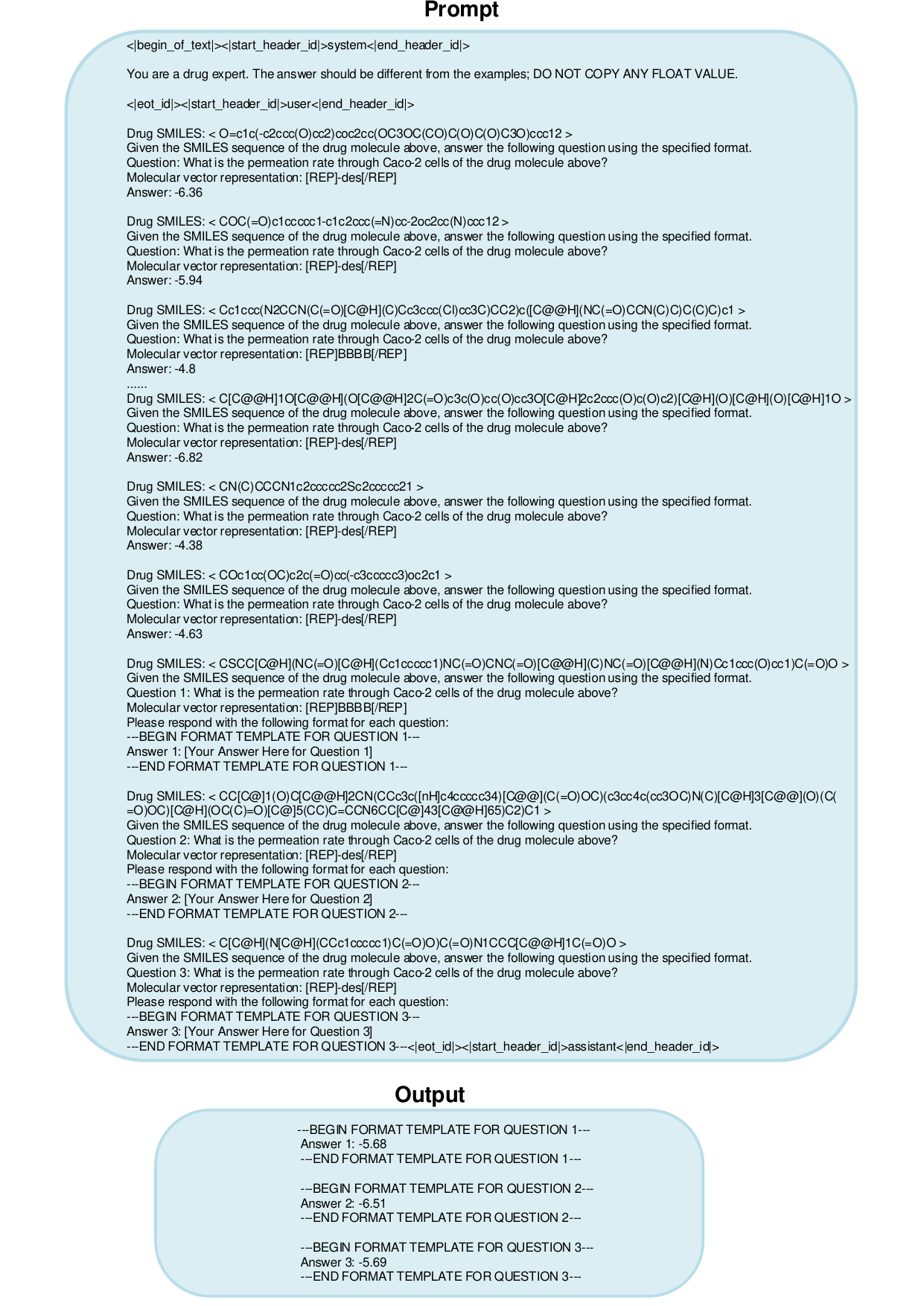}
    }
    \vspace{-10pt}
\caption{Decoded prompt example and output for Caco2 task.}\label{fig:example3}
\vspace{-8.0pt}
\end{figure}

\clearpage
\newpage

\section*{NeurIPS Paper Checklist}

\begin{enumerate}

\item {\bf Claims}
    \item[] Question: Do the main claims made in the abstract and introduction accurately reflect the paper's contributions and scope?
    \item[] Answer: \answerYes{} % Replace by \answerYes{}, \answerNo{}, or \answerNA{}.
    \item[] Justification: Yes, the main claims made in the abstract and introduction accurately reflect the paper's contributions and scope. They provide a clear overview of what the paper aims to achieve and the methodologies used, aligning well with the detailed findings presented in the subsequent sections.
    \item[] Guidelines:
    \begin{itemize}
        \item The answer NA means that the abstract and introduction do not include the claims made in the paper.
        \item The abstract and/or introduction should clearly state the claims made, including the contributions made in the paper and important assumptions and limitations. A No or NA answer to this question will not be perceived well by the reviewers. 
        \item The claims made should match theoretical and experimental results, and reflect how much the results can be expected to generalize to other settings. 
        \item It is fine to include aspirational goals as motivation as long as it is clear that these goals are not attained by the paper. 
    \end{itemize}

\item {\bf Limitations}
    \item[] Question: Does the paper discuss the limitations of the work performed by the authors?
    \item[] Answer: \answerYes{} % Replace by \answerYes{}, \answerNo{}, or \answerNA{}.
    \item[] Justification: Yes, the paper does discuss the limitations of the work performed by the authors. It helps set the stage for future work and encourages ongoing dialogue in the field.
    \item[] Guidelines:
    \begin{itemize}
        \item The answer NA means that the paper has no limitation while the answer No means that the paper has limitations, but those are not discussed in the paper. 
        \item The authors are encouraged to create a separate "Limitations" section in their paper.
        \item The paper should point out any strong assumptions and how robust the results are to violations of these assumptions (e.g., independence assumptions, noiseless settings, model well-specification, asymptotic approximations only holding locally). The authors should reflect on how these assumptions might be violated in practice and what the implications would be.
        \item The authors should reflect on the scope of the claims made, e.g., if the approach was only tested on a few datasets or with a few runs. In general, empirical results often depend on implicit assumptions, which should be articulated.
        \item The authors should reflect on the factors that influence the performance of the approach. For example, a facial recognition algorithm may perform poorly when image resolution is low or images are taken in low lighting. Or a speech-to-text system might not be used reliably to provide closed captions for online lectures because it fails to handle technical jargon.
        \item The authors should discuss the computational efficiency of the proposed algorithms and how they scale with dataset size.
        \item If applicable, the authors should discuss possible limitations of their approach to address problems of privacy and fairness.
        \item While the authors might fear that complete honesty about limitations might be used by reviewers as grounds for rejection, a worse outcome might be that reviewers discover limitations that aren't acknowledged in the paper. The authors should use their best judgment and recognize that individual actions in favor of transparency play an important role in developing norms that preserve the integrity of the community. Reviewers will be specifically instructed to not penalize honesty concerning limitations.
    \end{itemize}

\item {\bf Theory Assumptions and Proofs}
    \item[] Question: For each theoretical result, does the paper provide the full set of assumptions and a complete (and correct) proof?
    \item[] Answer: \answerYes{} % Replace by \answerYes{}, \answerNo{}, or \answerNA{}.
    \item[] Justification: The paper clearly states all necessary assumptions prior to each theoretical result. Each theorem or proposition is accompanied by a complete and logically sound proof, either in the main text or in the appendix.
    \item[] Guidelines:
    \begin{itemize}
        \item The answer NA means that the paper does not include theoretical results. 
        \item All the theorems, formulas, and proofs in the paper should be numbered and cross-referenced.
        \item All assumptions should be clearly stated or referenced in the statement of any theorems.
        \item The proofs can either appear in the main paper or the supplemental material, but if they appear in the supplemental material, the authors are encouraged to provide a short proof sketch to provide intuition. 
        \item Inversely, any informal proof provided in the core of the paper should be complemented by formal proofs provided in appendix or supplemental material.
        \item Theorems and Lemmas that the proof relies upon should be properly referenced. 
    \end{itemize}

    \item {\bf Experimental Result Reproducibility}
    \item[] Question: Does the paper fully disclose all the information needed to reproduce the main experimental results of the paper to the extent that it affects the main claims and/or conclusions of the paper (regardless of whether the code and data are provided or not)?
    \item[] Answer: \answerYes{} % Replace by \answerYes{}, \answerNo{}, or \answerNA{}.
    \item[] Justification: Yes, the paper fully discloses all the necessary information needed to reproduce the main experimental results. The authors have been meticulous in detailing the methodology, settings, and parameters used in their experiments, ensuring that other researchers can replicate the study accurately and validate the findings. 
    \item[] Guidelines:
    \begin{itemize}
        \item The answer NA means that the paper does not include experiments.
        \item If the paper includes experiments, a No answer to this question will not be perceived well by the reviewers: Making the paper reproducible is important, regardless of whether the code and data are provided or not.
        \item If the contribution is a dataset and/or model, the authors should describe the steps taken to make their results reproducible or verifiable. 
        \item Depending on the contribution, reproducibility can be accomplished in various ways. For example, if the contribution is a novel architecture, describing the architecture fully might suffice, or if the contribution is a specific model and empirical evaluation, it may be necessary to either make it possible for others to replicate the model with the same dataset, or provide access to the model. In general. releasing code and data is often one good way to accomplish this, but reproducibility can also be provided via detailed instructions for how to replicate the results, access to a hosted model (e.g., in the case of a large language model), releasing of a model checkpoint, or other means that are appropriate to the research performed.
        \item While NeurIPS does not require releasing code, the conference does require all submissions to provide some reasonable avenue for reproducibility, which may depend on the nature of the contribution. For example
        \begin{enumerate}
            \item If the contribution is primarily a new algorithm, the paper should make it clear how to reproduce that algorithm.
            \item If the contribution is primarily a new model architecture, the paper should describe the architecture clearly and fully.
            \item If the contribution is a new model (e.g., a large language model), then there should either be a way to access this model for reproducing the results or a way to reproduce the model (e.g., with an open-source dataset or instructions for how to construct the dataset).
            \item We recognize that reproducibility may be tricky in some cases, in which case authors are welcome to describe the particular way they provide for reproducibility. In the case of closed-source models, it may be that access to the model is limited in some way (e.g., to registered users), but it should be possible for other researchers to have some path to reproducing or verifying the results.
        \end{enumerate}
    \end{itemize}

\item {\bf Open access to data and code}
    \item[] Question: Does the paper provide open access to the data and code, with sufficient instructions to faithfully reproduce the main experimental results, as described in supplemental material?
    \item[] Answer: \answerNo{} % Replace by \answerYes{}, \answerNo{}, or \answerNA{}.
    \item[] Justification: The code and data are not released at submission time to preserve anonymity. However, the authors state that all necessary code, data, and instructions will be made publicly available upon paper acceptance and publication.
    \item[] Guidelines:
    \begin{itemize}
        \item The answer NA means that paper does not include experiments requiring code.
        \item Please see the NeurIPS code and data submission guidelines (\url{https://nips.cc/public/guides/CodeSubmissionPolicy}) for more details.
        \item While we encourage the release of code and data, we understand that this might not be possible, so “No” is an acceptable answer. Papers cannot be rejected simply for not including code, unless this is central to the contribution (e.g., for a new open-source benchmark).
        \item The instructions should contain the exact command and environment needed to run to reproduce the results. See the NeurIPS code and data submission guidelines (\url{https://nips.cc/public/guides/CodeSubmissionPolicy}) for more details.
        \item The authors should provide instructions on data access and preparation, including how to access the raw data, preprocessed data, intermediate data, and generated data, etc.
        \item The authors should provide scripts to reproduce all experimental results for the new proposed method and baselines. If only a subset of experiments are reproducible, they should state which ones are omitted from the script and why.
        \item At submission time, to preserve anonymity, the authors should release anonymized versions (if applicable).
        \item Providing as much information as possible in supplemental material (appended to the paper) is recommended, but including URLs to data and code is permitted.
    \end{itemize}

\item {\bf Experimental Setting/Details}
    \item[] Question: Does the paper specify all the training and test details (e.g., data splits, hyperparameters, how they were chosen, type of optimizer, etc.) necessary to understand the results?
    \item[] Answer: \answerYes{} % Replace by \answerYes{}, \answerNo{}, or \answerNA{}.
    \item[] Justification: Yes, the paper specifies all the training and test details, including data splits, hyperparameters, the rationale behind their selection, and the type of optimizer used. 
    \item[] Guidelines:
    \begin{itemize}
        \item The answer NA means that the paper does not include experiments.
        \item The experimental setting should be presented in the core of the paper to a level of detail that is necessary to appreciate the results and make sense of them.
        \item The full details can be provided either with the code, in appendix, or as supplemental material.
    \end{itemize}

\item {\bf Experiment Statistical Significance}
    \item[] Question: Does the paper report error bars suitably and correctly defined or other appropriate information about the statistical significance of the experiments?
    \item[] Answer: \answerYes{} % Replace by \answerYes{}, \answerNo{}, or \answerNA{}.
    \item[] Justification: Yes, the paper reports appropriate information about the statistical significance of the experiments. 
    \item[] Guidelines:
    \begin{itemize}
        \item The answer NA means that the paper does not include experiments.
        \item The authors should answer "Yes" if the results are accompanied by error bars, confidence intervals, or statistical significance tests, at least for the experiments that support the main claims of the paper.
        \item The factors of variability that the error bars are capturing should be clearly stated (for example, train/test split, initialization, random drawing of some parameter, or overall run with given experimental conditions).
        \item The method for calculating the error bars should be explained (closed form formula, call to a library function, bootstrap, etc.)
        \item The assumptions made should be given (e.g., Normally distributed errors).
        \item It should be clear whether the error bar is the standard deviation or the standard error of the mean.
        \item It is OK to report 1-sigma error bars, but one should state it. The authors should preferably report a 2-sigma error bar than state that they have a 96\% CI, if the hypothesis of Normality of errors is not verified.
        \item For asymmetric distributions, the authors should be careful not to show in tables or figures symmetric error bars that would yield results that are out of range (e.g. negative error rates).
        \item If error bars are reported in tables or plots, The authors should explain in the text how they were calculated and reference the corresponding figures or tables in the text.
    \end{itemize}

\item {\bf Experiments Compute Resources}
    \item[] Question: For each experiment, does the paper provide sufficient information on the computer resources (type of compute workers, memory, time of execution) needed to reproduce the experiments?
    \item[] Answer:\answerYes{} % Replace by \answerYes{}, \answerNo{}, or \answerNA{}.
    \item[] Justification: Yes, for each experiment, the paper provides sufficient information on the computer resources required.
    \item[] Guidelines:
    \begin{itemize}
        \item The answer NA means that the paper does not include experiments.
        \item The paper should indicate the type of compute workers CPU or GPU, internal cluster, or cloud provider, including relevant memory and storage.
        \item The paper should provide the amount of compute required for each of the individual experimental runs as well as estimate the total compute. 
        \item The paper should disclose whether the full research project required more compute than the experiments reported in the paper (e.g., preliminary or failed experiments that didn't make it into the paper). 
    \end{itemize}
    
\item {\bf Code Of Ethics}
    \item[] Question: Does the research conducted in the paper conform, in every respect, with the NeurIPS Code of Ethics \url{https://neurips.cc/public/EthicsGuidelines}?
    \item[] Answer: \answerYes{} % Replace by \answerYes{}, \answerNo{}, or \answerNA{}.
    \item[] Justification: Yes, the research conducted in the paper conforms in every respect with the NeurIPS Code of Ethics. 
    \item[] Guidelines:
    \begin{itemize}
        \item The answer NA means that the authors have not reviewed the NeurIPS Code of Ethics.
        \item If the authors answer No, they should explain the special circumstances that require a deviation from the Code of Ethics.
        \item The authors should make sure to preserve anonymity (e.g., if there is a special consideration due to laws or regulations in their jurisdiction).
    \end{itemize}

\item {\bf Broader Impacts}
    \item[] Question: Does the paper discuss both potential positive societal impacts and negative societal impacts of the work performed?
    \item[] Answer: \answerYes{} % Replace by \answerYes{}, \answerNo{}, or \answerNA{}.
    \item[] Justification: Yes, the paper discusses both potential positive and negative societal impacts of the work performed. 
    \item[] Guidelines:
    \begin{itemize}
        \item The answer NA means that there is no societal impact of the work performed.
        \item If the authors answer NA or No, they should explain why their work has no societal impact or why the paper does not address societal impact.
        \item Examples of negative societal impacts include potential malicious or unintended uses (e.g., disinformation, generating fake profiles, surveillance), fairness considerations (e.g., deployment of technologies that could make decisions that unfairly impact specific groups), privacy considerations, and security considerations.
        \item The conference expects that many papers will be foundational research and not tied to particular applications, let alone deployments. However, if there is a direct path to any negative applications, the authors should point it out. For example, it is legitimate to point out that an improvement in the quality of generative models could be used to generate deepfakes for disinformation. On the other hand, it is not needed to point out that a generic algorithm for optimizing neural networks could enable people to train models that generate Deepfakes faster.
        \item The authors should consider possible harms that could arise when the technology is being used as intended and functioning correctly, harms that could arise when the technology is being used as intended but gives incorrect results, and harms following from (intentional or unintentional) misuse of the technology.
        \item If there are negative societal impacts, the authors could also discuss possible mitigation strategies (e.g., gated release of models, providing defenses in addition to attacks, mechanisms for monitoring misuse, mechanisms to monitor how a system learns from feedback over time, improving the efficiency and accessibility of ML).
    \end{itemize}
    
\item {\bf Safeguards}
    \item[] Question: Does the paper describe safeguards that have been put in place for responsible release of data or models that have a high risk for misuse (e.g., pretrained language models, image generators, or scraped datasets)?
    \item[] Answer: \answerYes{} % Replace by \answerYes{}, \answerNo{}, or \answerNA{}.
    \item[] Justification: Yes, the paper describes safeguards that have been put in place for the responsible release of data or models that have a high risk of misuse.
    \item[] Guidelines:
    \begin{itemize}
        \item The answer NA means that the paper poses no such risks.
        \item Released models that have a high risk for misuse or dual-use should be released with necessary safeguards to allow for controlled use of the model, for example by requiring that users adhere to usage guidelines or restrictions to access the model or implementing safety filters. 
        \item Datasets that have been scraped from the Internet could pose safety risks. The authors should describe how they avoided releasing unsafe images.
        \item We recognize that providing effective safeguards is challenging, and many papers do not require this, but we encourage authors to take this into account and make a best faith effort.
    \end{itemize}

\item {\bf Licenses for existing assets}
    \item[] Question: Are the creators or original owners of assets (e.g., code, data, models), used in the paper, properly credited and are the license and terms of use explicitly mentioned and properly respected?
    \item[] Answer: \answerYes{} % Replace by \answerYes{}, \answerNo{}, or \answerNA{}.
    \item[] Justification: Yes, the creators or original owners of assets used in the paper are properly credited. 
    \item[] Guidelines:
    \begin{itemize}
        \item The answer NA means that the paper does not use existing assets.
        \item The authors should cite the original paper that produced the code package or dataset.
        \item The authors should state which version of the asset is used and, if possible, include a URL.
        \item The name of the license (e.g., CC-BY 4.0) should be included for each asset.
        \item For scraped data from a particular source (e.g., website), the copyright and terms of service of that source should be provided.
        \item If assets are released, the license, copyright information, and terms of use in the package should be provided. For popular datasets, \url{paperswithcode.com/datasets} has curated licenses for some datasets. Their licensing guide can help determine the license of a dataset.
        \item For existing datasets that are re-packaged, both the original license and the license of the derived asset (if it has changed) should be provided.
        \item If this information is not available online, the authors are encouraged to reach out to the asset's creators.
    \end{itemize}

\item {\bf New Assets}
    \item[] Question: Are new assets introduced in the paper well documented and is the documentation provided alongside the assets?
    \item[] Answer: \answerYes{} % Replace by \answerYes{}, \answerNo{}, or \answerNA{}.
    \item[] Justification: Yes, new assets introduced in the paper are well documented, and the documentation is provided alongside the assets.
    \item[] Guidelines:
    \begin{itemize}
        \item The answer NA means that the paper does not release new assets.
        \item Researchers should communicate the details of the dataset/code/model as part of their submissions via structured templates. This includes details about training, license, limitations, etc. 
        \item The paper should discuss whether and how consent was obtained from people whose asset is used.
        \item At submission time, remember to anonymize your assets (if applicable). You can either create an anonymized URL or include an anonymized zip file.
    \end{itemize}

\item {\bf Crowdsourcing and Research with Human Subjects}
    \item[] Question: For crowdsourcing experiments and research with human subjects, does the paper include the full text of instructions given to participants and screenshots, if applicable, as well as details about compensation (if any)? 
    \item[] Answer: \answerNA{} % Replace by \answerYes{}, \answerNo{}, or \answerNA{}.
    \item[] Justification: The paper does not involve any crowdsourcing experiments or research with human subjects. All results are derived from computational experiments using publicly available datasets and models.

    \item[] Guidelines:
    \begin{itemize}
        \item The answer NA means that the paper does not involve crowdsourcing nor research with human subjects.
        \item Including this information in the supplemental material is fine, but if the main contribution of the paper involves human subjects, then as much detail as possible should be included in the main paper. 
        \item According to the NeurIPS Code of Ethics, workers involved in data collection, curation, or other labor should be paid at least the minimum wage in the country of the data collector. 
    \end{itemize}

\item {\bf Institutional Review Board (IRB) Approvals or Equivalent for Research with Human Subjects}
    \item[] Question: Does the paper describe potential risks incurred by study participants, whether such risks were disclosed to the subjects, and whether Institutional Review Board (IRB) approvals (or an equivalent approval/review based on the requirements of your country or institution) were obtained?
    \item[] Answer: \answerNA{} % Replace by \answerYes{}, \answerNo{}, or \answerNA{}.
    \item[] Justification: This work does not involve human subjects or any form of user study. All experiments are conducted using machine-generated data or publicly available datasets, and therefore do not require IRB approval.

    \item[] Guidelines:
    \begin{itemize}
        \item The answer NA means that the paper does not involve crowdsourcing nor research with human subjects.
        \item Depending on the country in which research is conducted, IRB approval (or equivalent) may be required for any human subjects research. If you obtained IRB approval, you should clearly state this in the paper. 
        \item We recognize that the procedures for this may vary significantly between institutions and locations, and we expect authors to adhere to the NeurIPS Code of Ethics and the guidelines for their institution. 
        \item For initial submissions, do not include any information that would break anonymity (if applicable), such as the institution conducting the review.
    \end{itemize}

\end{enumerate}
\end{document}